\documentclass[11pt,letterpaper]{article}

\usepackage[utf8]{inputenc}
\usepackage{geometry}
\usepackage{amsmath,amssymb,amsfonts,amsthm}
\usepackage{graphicx}
\usepackage{anyfontsize}
\usepackage{booktabs}
\usepackage{tabularx}
\usepackage{natbib}
\usepackage{hyperref}
\usepackage{cleveref}
\usepackage{color}
\usepackage{titlesec}
\usepackage[font=footnotesize,labelfont=bf]{caption}

\title{\textbf{HamiFormer: Dual-Expert Diffusion Fields\\ with Affine Symplectic Maps}}
\author{%
  \textbf{Haoxiang Huang}$^{1}${\normalfont,}\enspace
  \textbf{Xiang Liu}$^{2}${\normalfont,}\enspace
  \textbf{Shuwei Wang}$^{3}${\normalfont,}\enspace
  \textbf{Jingheng Ma}$^{2}${\normalfont,}\enspace
  \textbf{Sen Cui}$^{2,\dagger,\ddagger}$\\[0.62em]
  {\footnotesize\sffamily\mdseries\color{black}
    $^{1}$ University of Science and Technology of China{\normalfont,}\enspace
    $^{2}$ Tsinghua University{\normalfont,}\enspace
    $^{3}$ Chinese Academy of Sciences}}
\date{}

\usepackage{xcolor}
\usepackage{environ}
\usepackage{tcolorbox}
\usepackage{tikz}
\usetikzlibrary{positioning, arrows.meta, calc, fit, shapes.geometric, backgrounds}
\usepackage{titlesec}
\usepackage{colortbl}
\usepackage{array}
\usepackage{placeins}
\usepackage{needspace}
\usepackage{enumitem}
\usepackage{mathtools,bm,multirow}
\usepackage{algorithm,algpseudocode}
\usepackage{capt-of}

\definecolor{paperblue}{RGB}{41,105,176}
\definecolor{accent}{RGB}{230,90,50}
\definecolor{soft}{RGB}{232,240,250}
\definecolor{tableheadgray}{RGB}{246,246,246}
\definecolor{metricgain}{RGB}{0,145,90}
\definecolor{metricloss}{RGB}{205,45,55}
\colorlet{linkblue}{paperblue}

\hypersetup{
  colorlinks=true,
  linkcolor=linkblue,
  citecolor=linkblue,
  urlcolor=linkblue,
  filecolor=linkblue,
  linktoc=all,
  bookmarksopen=true,
  bookmarksnumbered=true,
  pdfborder={0 0 0}
}

\graphicspath{{figures/}}

\titleformat{\section}
  {\color{paperblue}\normalfont\Large\bfseries\sffamily}
  {\thesection}{1em}{}
\titleformat{\subsection}
  {\color{paperblue}\normalfont\large\bfseries\sffamily}
  {\thesubsection}{1em}{}
\titleformat{\subsubsection}
  {\color{paperblue}\normalfont\normalsize\bfseries\sffamily}
  {\thesubsubsection}{1em}{}
\titleformat{\paragraph}[runin]
  {\normalfont\normalsize\bfseries\sffamily}{}{0pt}{}
\titlespacing*{\section}{0pt}{2.8ex plus 0.6ex minus 0.2ex}{1.0ex plus 0.2ex}
\titlespacing*{\subsection}{0pt}{2.3ex plus 0.5ex minus 0.2ex}{0.8ex plus 0.1ex}
\titlespacing*{\subsubsection}{0pt}{1.9ex plus 0.4ex minus 0.2ex}{0.7ex plus 0.1ex}
\titlespacing{\paragraph}{0pt}{1.2ex plus 0.3ex minus 0.1ex}{0.7em}

\renewenvironment{abstract}{%
  \begin{tcolorbox}[
    colframe=paperblue,
    colback=white,
    arc=8pt,
    boxrule=0.5pt,
    left=15pt,
    right=15pt,
    top=12pt,
    bottom=12pt,
    before upper={\small\setlength{\parskip}{4pt}\setlength{\parindent}{0pt}}
  ]
  \begin{center}
    {\color{paperblue}\large\bfseries\sffamily Abstract\par}
    \vskip 0.5em
  \end{center}
}{%
  \end{tcolorbox}
}

\renewcommand{\textbf}[1]{{\sffamily\bfseries #1}}

\makeatletter
\renewcommand{\maketitle}{%
  \newpage
  \thispagestyle{plain}%
  \null
  \vskip -0.8em
  \setbox\@tempboxa=\vbox{%
    \hsize=\textwidth
    \parskip=0pt
    \centering
    {\Large\sffamily\bfseries \@title\par}%
  }%
  \dimen@=\ht\@tempboxa
  \advance\dimen@ by \dp\@tempboxa
  \advance\dimen@ by 3em
  {\parskip=0pt
    \noindent\vbox{%
      \hsize=\textwidth
      {\color{paperblue}\hrule height 2pt}%
      \vbox to \dimen@{\vfil\box\@tempboxa\vfil}%
      {\color{paperblue}\hrule height 0.5pt}%
    }\par
  }%
  \vskip 1.10em
  {\centering
    {\large\sffamily
      \lineskip .5em%
      \begin{tabular}[t]{c}%
        \@author
      \end{tabular}\par}%
  }%
  \vskip 1.20em
}
\makeatother

\newcommand{\R}{\mathbb{R}}

\newtheoremstyle{paperbold}
  {6pt}{6pt}{\normalfont}{0pt}{\sffamily\bfseries}{.}{0.5em}{}
\theoremstyle{paperbold}

\newtheorem{proposition}{Proposition}

\newtheorem{theorem}{Theorem}

\newcommand{\E}{\mathbb{E}}
\newcommand{\Id}{\mathrm{I}}
\newcommand{\diag}{\operatorname{diag}}

\newcommand{\logit}{\operatorname{logit}}
\newcommand{\clip}{\operatorname{clip}}
\newcommand{\MSE}{\operatorname{MSE}}

\newcommand{\HR}{H{+}r}

\definecolor{resultfirst}{RGB}{226,249,218}
\definecolor{resultsecond}{RGB}{255,250,210}
\definecolor{resultthird}{RGB}{255,226,226}
\newcommand{\resultfirst}[1]{\cellcolor{resultfirst}\textbf{#1}}
\newcommand{\resultsecond}[1]{\cellcolor{resultsecond}#1}
\newcommand{\resultthird}[1]{\cellcolor{resultthird}#1}

\newcommand{\norm}[1]{\left\lVert#1\right\rVert}
\newcommand{\valpha}{\bm{\alpha}}
\newcommand{\vdelta}{\bm{\delta}}
\newcommand{\vepsilon}{\bm{\epsilon}}
\newcommand{\vmu}{\bm{\mu}}
\newcommand{\vphi}{\bm{\phi}}
\newcommand{\vpsi}{\bm{\psi}}
\newtheorem{corollary}[theorem]{Corollary}

\let\paperOriginalNormalsize\normalsize
\renewcommand{\normalsize}{%
  \paperOriginalNormalsize
  \fontsize{10.95}{13.1}\selectfont
}
\begin{document}
\normalsize

\maketitle
\begin{abstract}
Predicting smooth dynamics and collisions requires modeling continuous evolution and abrupt state changes. We introduce HamiFormer, a dual-expert diffusion field combining whole-window denoising with residual-corrected Hamiltonian propagation. Their mixed-state feedback attenuates the direct contribution of inherited autoregressive error: each mixed state guides subsequent propagation within the jointly refined window. Parallel Local Affine Scan (PLAS) amortizes iterative refinement across rectified-flow steps and evaluates derivatives in parallel across physical time. PLAS's affine symplectic maps achieve lower solver error and runtime than sequential explicit Euler in our evaluation. A Regime Model Tree specializes residuals and routing to balance typical-state accuracy against large tail errors. Our analysis gives conditions for physically consistent refinement and warm-start tracking, and finite-window error bounds under diffusion feedback. In 192-step evaluations, HamiFormer reduces normalized phase-space MSE by 26.3\% against PhysiFormer on HamiBalls-1 and 21.4\% against DiT on HamiBalls-2, with comparable model capacities. Across disjoint intervals, Ours has the lowest late-horizon position and momentum errors among baselines on both datasets.
\vspace{4pt}

{\footnotesize
{\textbf{Date:}} September 26, 2026\\
{\textbf{Project page:}} \href{https://hamiformer.github.io/}{\texttt{https://hamiformer.github.io/}}\hfill $^{\dagger}$ Corresponding author\\
{\textbf{Code, datasets and weights:}} \href{https://github.com/starx237/HamiFormer}{\texttt{https://github.com/starx237/HamiFormer}}\hfill $^{\ddagger}$ Project leader
}
\end{abstract}

\enlargethispage{5pt}
\begin{figure}[!ht]
\centering
\vspace{2.75pt}
\includegraphics[width=0.95\textwidth,trim=0 6bp 0 5.5bp,clip]{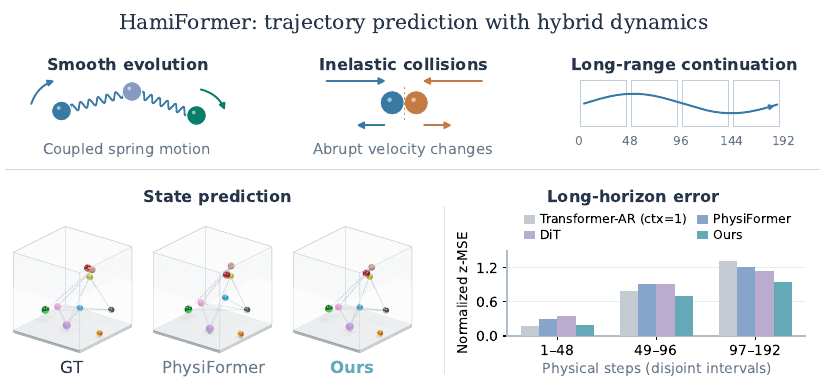}
\vspace{2.75pt}
\caption{HamiFormer overview: schematic hybrid dynamics (top), with ground-truth and predicted HamiBalls-2 states at the same step and normalized phase-space MSE over three disjoint intervals spanning 192 steps (bottom).}
\label{fig:introduction-teaser}
\end{figure}
\FloatBarrier
\section{Introduction}
\begingroup
\linespread{1.045}\selectfont
\setlength{\parskip}{10pt minus 1.5pt}
Learned physical simulators must reconcile local dynamics with long-range prediction. A one-step predictor can be accurate on observed states yet encounter progressively different inputs when recursively applied to its own outputs, allowing prediction errors to accumulate over time. This distribution shift makes long-horizon reliability difficult to infer from one-step prediction accuracy. Whole-window generative prediction offers a complementary approach by revising all future states together and updating distant physical times. PhysiFormer adopts this perspective for world-space mechanics \citep{chen2026physiformer}. Combining local state propagation with whole-window denoising requires coordinating their predictions across physical time within each jointly refined future window.

We parameterize local propagation through a learned scalar Hamiltonian, whose derivatives define smooth mechanical evolution \citep{greydanus2019hnn}. However, collisions, dissipation, and model mismatch also require a flexible prediction pathway beyond the smooth Hamiltonian model. The challenge is therefore to integrate local dynamics into a whole-window diffusion model, with its corrections fed back into subsequent propagation.

We propose \textbf{HamiFormer}, whose clean trajectory estimator contains two experts. The diffusion expert predicts the entire future window. The Hamiltonian expert combines a scalar Hamiltonian model with a learned residual, denoted $H+r$. At each physical edge, a router mixes their candidates, and this mixed state becomes the predecessor of the next Hamiltonian step. This feedback provides a \emph{soft reset}: mixing with the whole-window diffusion candidate attenuates the direct contribution of inherited autoregressive error before subsequent Hamiltonian propagation. At each denoising step, this mixed-state propagation re-estimates the entire future trajectory.

Repeated nonlinear propagation inside a diffusion sampler introduces a second challenge. General nonseparable Hamiltonians require implicit symplectic-Euler solves throughout the physical trajectory. Building on Newton-based sequence linearization \citep{lim2024deer}, we develop \textbf{Parallel Local Affine Scan (PLAS)} for closed-loop dual-expert denoising. PLAS accommodates general Hamiltonians without imposing a prescribed separable form. It linearizes implicit Hamiltonian relations at cached trajectory anchors, constructing affine maps from gradients and Hessians evaluated in parallel across physical time. These maps act on newly mixed predecessors; nonlinear residuals and routing respond to the evolving mixed history. Anchors persist across rectified-flow (RF) steps, carrying refinement progress between changing denoising conditions. Under suitable local conditions, this selective refinement tracks the evolving mixed solution, amortizing iterative work across RF evaluations. Each PLAS-SymEuler affine map preserves symplectic structure exactly in the Hamiltonian propagation step, before residual correction and diffusion mixing. In our solver evaluation, the resulting propagation achieves lower numerical error and runtime than sequential explicit Euler.

Finally, residual corrections and expert preferences vary across dynamical regimes. A shared predictor must accommodate small smooth corrections, sharp impulses, and recovery from earlier mistakes; a few large errors can dominate the objective and compete with accuracy on typical states. We introduce a \textbf{Regime Model Tree} with learned feature splits, leaf-specific linear residual readouts, and leaf-specific affine routing corrections. Motivated by potential gradient interference across regimes, conditioning corrections and routing on observable features encourages specialized responses to their different correction needs.

\Needspace{5\baselineskip}
Our contributions are summarized as follows:

\begin{itemize}[leftmargin=1.2em,labelsep=0.55em,topsep=3pt,itemsep=2pt,parsep=0pt]
\item We introduce a closed-loop dual-expert diffusion field, PLAS for efficient Hamiltonian propagation with affine symplectic maps, and a regime-conditioned residual/router design for regime-adaptive correction.
\item We establish conditions for physical consistency, local refinement convergence, and warm-start tracking, and derive finite-window error bounds under diffusion feedback, with empirical support.
\item Evaluations on HamiBalls-1 and HamiBalls-2 demonstrate lower aggregate and late-horizon prediction errors over 192 physical steps than the evaluated baselines. Component ablations examine numerical accuracy and solver efficiency, regional specialization, and training objectives.
\end{itemize}
\par\endgroup

\section{Related Work}
\noindent\textbf{Hamiltonian learning and structured computation.}
Hamiltonian neural networks learn energy-based dynamics \citep{greydanus2019hnn}, with dissipative extensions \citep{sosanya2022dissipative}. Neural Hamiltonian Diffusions model stochastic Hamiltonian dynamics on curved manifolds \citep{park2025nhd}. SympNets learn symplectic maps \citep{jin2020sympnets}, complementing structure-preserving integration \citep{hairer2002gni}. DEER uses Newton linearization and affine recurrences to parallelize nonlinear sequences, including HNN simulation \citep{lim2024deer}; quasi-DEER and ELK address approximate Jacobians and stability \citep{gonzalez2024scalable}. PLAS builds on this solver lineage for Hamiltonian propagation within mixed-state denoising.

\noindent\textbf{Denoising and whole-window physical prediction.}
Rectified flow and flow matching learn transport fields from interpolation paths \citep{liu2022rectified,lipman2022flow}, while diffusion transformers enable joint denoising \citep{peebles2022dit}. Hamiltonian Score Matching and Generative Flows introduce Hamiltonian velocity predictors for generative modeling \citep{holderrieth2024hsm}; port-Hamiltonian formulations interpret diffusion sampling through energy and feedback control \citep{darehmiraki2026phdiffusion}. These approaches structure evolution in generative time. For physical prediction, PhysiFormer jointly denoises future trajectory states across physical time \citep{chen2026physiformer}.

\noindent\textbf{Combining denoising with Hamiltonian dynamics.}
Denoising Hamiltonian Network (DHN) denoises overlapping local Hamiltonian blocks for prediction and physical reasoning \citep{deng2025dhn}. Hamiltonian-Guided Diffusion Fields (HG-DPF) guides joint denoising of larger prediction windows with a separately learned Hamiltonian \citep{sang2026hgdpf}. Both build on smooth Hamiltonian dynamics. HamiFormer couples whole-window denoising with residual-corrected propagation through routed feedback, accommodating non-Hamiltonian effects and prediction errors.

\section{HamiFormer}
\label{sec:method}
At each denoising step, HamiFormer mixes diffusion and Hamiltonian candidates across the prediction window, feeding each mixed state into subsequent physical propagation (Figure~\ref{fig:architecture}).

\begin{figure}[t]
\centering
\includegraphics[width=\textwidth]{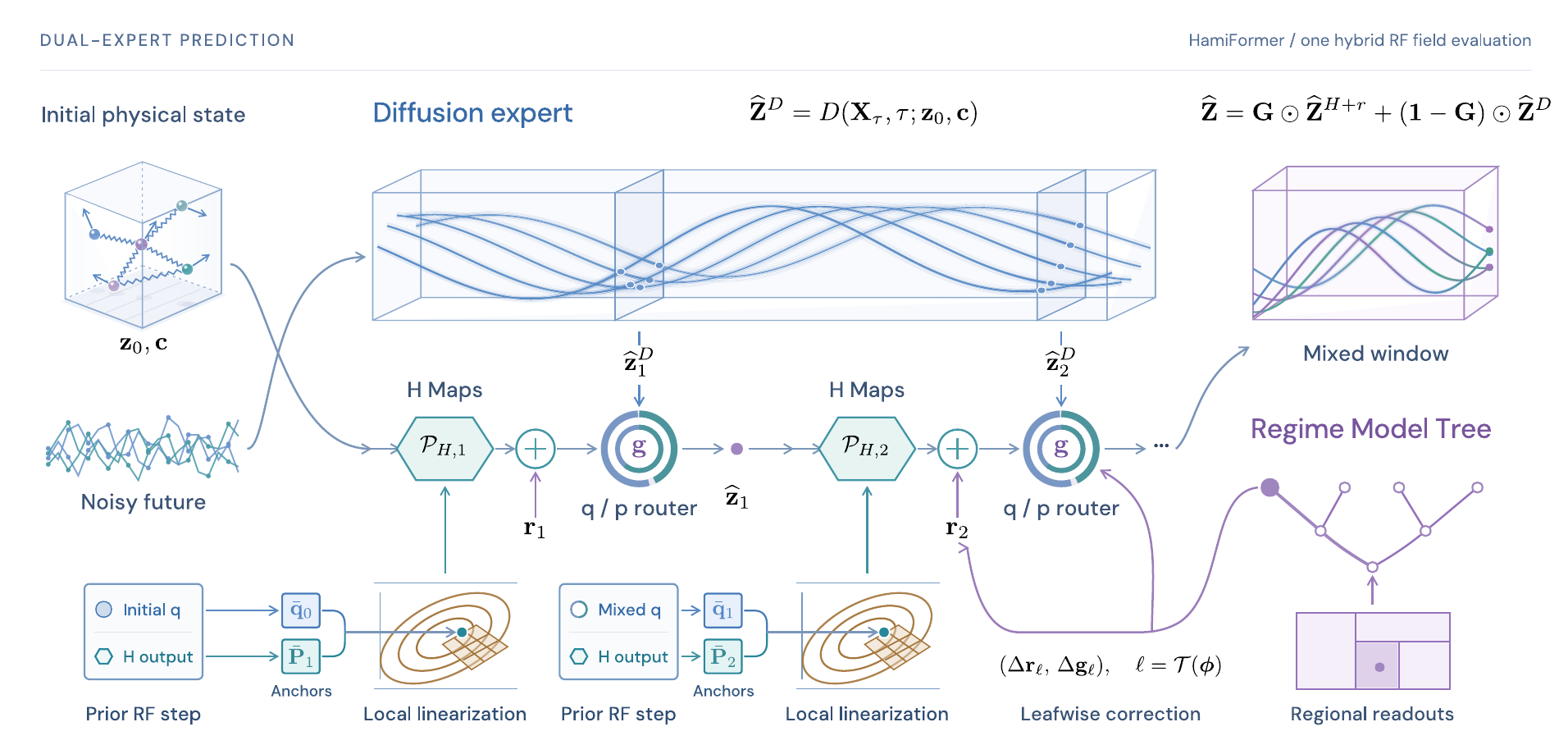}
\caption{Overview of HamiFormer. At each RF field evaluation, a whole-window diffusion expert and a residual-corrected Hamiltonian expert produce candidates for componentwise mixing. PLAS constructs affine maps in parallel from cached anchors and scans the mixed history. The Regime Model Tree conditions the residual and router readouts.}
\label{fig:architecture}
\end{figure}

\subsection{States, time axes, and the diffusion expert}
We represent physical states by positions and momenta, providing a phase-space description for Hamiltonian propagation. Given initial state $\mathbf z_0$ and attributes $\mathbf c$, we predict $L$ states $\mathbf Z=(\mathbf z_1,\ldots,\mathbf z_L)$, where $\mathbf z_k=(\mathbf q_k,\mathbf p_k)$ collects the positions and momenta of $n$ objects. Physical time is $t_k=kh$; RF time is $\tau\in[0,1]$. We use $N$ RF intervals and $K$ PLAS refinements per field evaluation.

Let $\mathbf X$ be the trajectory in fixed normalized coordinates. The diffusion expert $D$ reads the full noisy future block, $\tau$, $\mathbf z_0$, and $\mathbf c$, and returns a clean estimate and object--time tokens. We train it using
\begin{equation}
 \mathbf X_\tau=\tau\mathbf X+(1-\tau)\vepsilon,\quad
 \vepsilon\sim\mathcal N(\mathbf 0,\Id),\quad
 \tau=\sigma(\xi),\quad \xi\sim\mathcal N(\mu_\tau,\sigma_\tau^2),
 \label{eq:path}
\end{equation}
with the velocity loss (where $d_\tau=\max(1-\tau,\varepsilon_\tau)$)
\begin{equation}
 \mathcal L_D=\E\left\Vert
 \frac{D(\mathbf X_\tau,\tau;\mathbf z_0,\mathbf c)-\mathbf X_\tau}{d_\tau}
 -\frac{\mathbf X-\mathbf X_\tau}{d_\tau}\right\Vert^2=\E\left\Vert
 \frac{D(\mathbf X_\tau,\tau;\mathbf z_0,\mathbf c)-\mathbf X}{d_\tau}\right\Vert^2.
 \label{eq:d-loss}
\end{equation}
The positive denominator floor $\varepsilon_\tau$ is part of the implemented objective. The time-distribution settings and floor are specified in Appendix~\ref{app:training}. The backbone is a factorized diffusion transformer; architectural details are in Appendix~\ref{app:implementation}. Its clean physical-coordinate prediction is $\widehat{\mathbf Z}^{D}$.

\subsection{Dual experts and mixed-state feedback}
\label{sec:dual-expert}
At each RF field evaluation, diffusion supplies a whole-window candidate $\widehat{\mathbf Z}^{D}$.
The Hamiltonian operator $\mathcal P_{H,k}$ produces a structured one-step candidate corrected by
residual $\mathbf r_k$. Router weights $\mathbf g_{k,i}\in[0,1]^2$ act on its position and momentum
components, broadcast across coordinates:
\begin{equation}
 \begin{gathered}
 \widehat{\mathbf z}^{H}_k=\mathcal P_{H,k}(\widehat{\mathbf z}_{k-1}),\quad
 \widehat{\mathbf z}^{\HR}_k=\widehat{\mathbf z}^{H}_k+\mathbf r_k,\\
 \widehat{\mathbf z}_{k,i}=\mathbf g_{k,i}\odot\widehat{\mathbf z}^{\HR}_{k,i}
 +(\mathbf 1-\mathbf g_{k,i})\odot\widehat{\mathbf z}^{D}_{k,i}.
 \end{gathered}
 \label{eq:mix}
\end{equation}
The propagation operator acts on the \emph{current mixed predecessor}. The corrected $H+r$ candidate is formed before the router evaluates it. Crucially, the mixed output $\widehat{\mathbf z}_k$ immediately becomes the propagation input at the next physical edge. Thus diffusion-based corrections can alter all subsequent structured propagation. The following subsections detail $\mathcal P_H$, $\mathbf r$, and $\mathbf g$.

\begingroup
\setlength{\abovedisplayskip}{13pt plus 1pt minus 1pt}
\setlength{\belowdisplayskip}{13pt plus 1pt minus 1pt}
\setlength{\abovedisplayshortskip}{9pt plus 1pt minus 1pt}
\setlength{\belowdisplayshortskip}{9pt plus 1pt minus 1pt}
\subsection{Hamiltonian propagation via Parallel Local Affine Scan}
\label{sec:plas}
PLAS-SymEuler instantiates $\mathcal P_H$ through Newton-type trajectory linearization \citep{lim2024deer}. A scalar $H_\theta(\mathbf q,\mathbf p;\mathbf c)$ defines the Hamiltonian prior. Suppressing $\theta,\mathbf c$, symplectic Euler gives
\begin{equation}
 \mathbf p=\mathbf P+hH_{\mathbf q}(\mathbf q,\mathbf P),\qquad \mathbf Q=\mathbf q+hH_{\mathbf p}(\mathbf q,\mathbf P).
 \label{eq:se}
\end{equation}
A mixed-coordinate anchor $(\bar{\mathbf q},\bar{\mathbf P})$ pairs predecessor position and next-step Hamiltonian momentum. Parallel gradient and Hessian evaluation at all $L$ fixed anchors defines full-system blocks
\begin{equation}
 \mathbf U=hH_{\mathbf q\mathbf q}(\bar{\mathbf q},\bar{\mathbf P}),\quad
 \mathbf B=\Id+hH_{\mathbf q\mathbf p}(\bar{\mathbf q},\bar{\mathbf P}),\quad
 \mathbf V=hH_{\mathbf p\mathbf p}(\bar{\mathbf q},\bar{\mathbf P}).
 \label{eq:jet}
\end{equation}
Linearizing Eq.~\eqref{eq:se} at the anchor yields an affine map $\mathcal P_{H,k}(\mathbf z)=\mathbf A_k\mathbf z+\mathbf b_k$. For nonsingular $\mathbf B$, its offset is $\mathbf b=\mathbf z_{a,+}-\mathbf A\mathbf z_a$, with linear part and anchor-compatible source--target pair
\setlength{\abovedisplayskip}{16pt plus 1pt minus 1pt}
\setlength{\belowdisplayskip}{16pt plus 1pt minus 1pt}
\begin{equation}
 \mathbf A=\begin{bmatrix}
 \mathbf B^\top-\mathbf V\mathbf B^{-1}\mathbf U & \mathbf V\mathbf B^{-1}\\
 -\mathbf B^{-1}\mathbf U & \mathbf B^{-1}
 \end{bmatrix},\quad
 \begin{aligned}
 \mathbf z_a&=[\bar{\mathbf q},\,\bar{\mathbf P}+hH_{\mathbf q}(\bar{\mathbf q},\bar{\mathbf P})]^\top\\
\mathbf z_{a,+}&=[\bar{\mathbf q}+hH_{\mathbf p}(\bar{\mathbf q},\bar{\mathbf P}),\,\bar{\mathbf P}]^\top
 \end{aligned}.
 \label{eq:affine}
\end{equation}
Affine maps act sequentially on newly mixed predecessors; residuals and routing remain nonlinear, and $\mathbf B^{-1}$ is applied through linear solves.

We use one warm-started refinement ($K=1$). Each committed rollout supplies the next anchors: mixed predecessor positions $\mathbf q_{k-1}$ and uncorrected H-candidate momenta $\mathbf P_k$. Carrying these pairs across RF intervals preserves refinement progress as the denoising conditions evolve.
\par\endgroup

\begingroup
\setlength{\abovedisplayskip}{9pt plus 1pt minus 1pt}
\setlength{\belowdisplayskip}{9pt plus 1pt minus 1pt}
\setlength{\abovedisplayshortskip}{9pt plus 1pt minus 1pt}
\setlength{\belowdisplayshortskip}{9pt plus 1pt minus 1pt}
\subsection{Regime-conditioned residuals and routing}
\label{sec:tree}
A shared residual $\mathbf r_0$ corrects $\mathbf q$ and $\mathbf p$ using diffusion tokens, noisy and initial states, mixed history, candidates, and attributes. Observable features encode candidate differences, geometry, and history. A CART-based model tree \citep{breiman1984cart,quinlan1992continuous} selects a regional linear readout:
\begin{equation}
 \ell=\mathcal T(\vphi),\qquad \Delta\mathbf r_\ell=\mathbf W_\ell\vpsi.
 \label{eq:leaf-readout}
\end{equation}
The split features and thresholds are fitted on the training split. The readout vector $\vpsi$ contains standardized, clipped features and an intercept. A hierarchical ridge penalty shares information through a common linear block and leaf-specific deviations. The effective residual in Eq.~\eqref{eq:mix} is
\begin{equation}
 \mathbf r_k=(\mathbf 1-\valpha)\odot\mathbf r_{0,k}+\valpha\odot\Delta\mathbf r_{\ell,k},
 \label{eq:residual}
\end{equation}
where $\valpha=(\alpha_q,\alpha_p)\in[0,1]^2$ is fitted on the training split, and each coefficient is broadcast across its spatial coordinates. The same blend applies on the first edge, where the leaf readout is zero.

The router uses a physical-time GRU \citep{cho2014encoder}, a shared scalar logit, and nonlinear $\mathbf q/\mathbf p$ offsets. Its leaf-specific affine correction reads current causal observables $\mathbf o$ and an intercept:
\begin{equation}
 \mathbf g_{k,i}=\sigma\!\left(\logit(\mathbf g^{\rm base}_{k,i})
       +\valpha\odot\mathbf C_\ell[\mathbf o_{k,i};1]\right).
 \label{eq:tree-gate}
\end{equation}
All features are available at the current causal stage. Router recurrence resets for each complete physical rollout, while mixed-coordinate anchors persist between RF intervals. Appendix~\ref{app:implementation} specifies the features, normalization, and tree dimensions for residuals and routing.
\par\endgroup

\begingroup
\linespread{1.025}\selectfont
\setlength{\parskip}{10pt minus 1.5pt}
\subsection{Stateful RF sampling and staged learning}
\label{sec:sample-train}
\textbf{Stateful sampling.} Each field evaluation returns $(\widehat{\mathbf X}-\mathbf X_\tau)/d_\tau$ and updated mixed-coordinate anchors. We use $N=20$ RF intervals with Heun updates and a final Euler step, initializing with two diffusion-only intervals. Both Heun evaluations share the incoming anchor; the left evaluation commits the next anchor (Algorithm~\ref{alg:sampler}).

\textbf{Joint base learning.} We jointly train $D$ and $H_\theta$ for 50,000 updates under $\mathcal L_{\rm base}=\mathcal L_D+\mathcal L_H$. The diffusion loss in Eq.~\eqref{eq:d-loss} supervises whole-window denoising. The Hamiltonian loss fits local discrete Hamiltonian relations between adjacent states, with robust object--edge weights that reduce the influence of large relation errors. Both networks are then frozen during residual and router learning.

\textbf{Residual and router learning.} The 800 downstream updates comprise four 200-update stages. The shared residual first learns local corrections and closed-loop recovery through local replay of trajectories generated by randomly mixing the expert candidates. Two scalar-router stages then learn recurrent mixing on trajectories from the preceding policy, held fixed within each stage: the first minimizes recurrent projection regret (Appendix~\ref{app:gate-loss}); the second adds componentwise non-degradation relative to diffusion.

\textbf{Regional specialization.} We then fit the model tree on training trajectories and freeze its splits. The final stage freezes the scalar trunk and trains position/momentum heads and leaf-affine routing corrections using candidate preference, recurrent projection regret, and non-degradation objectives. Hierarchical ridge fitting updates leaf residuals from refreshed rollouts; blend coefficients are fitted on the training split. Appendix~\ref{app:training} details all objectives, fitting procedures, and refresh schedules.
\par\endgroup

\begingroup
\linespread{1.025}\selectfont
\setlength{\parskip}{10pt minus 1.5pt}
\subsection{Theoretical properties}
\label{sec:theory}
We analyze PLAS propagation, single-refinement tracking, and finite-window error propagation. Symplecticity is established at the PLAS-map level, and smoothness assumptions apply to the learned Hamiltonian. Physical collision consistency is analyzed separately. Proofs are given in Appendix~\ref{app:proofs}. Appendix~\ref{app:theory-evidence} evaluates complete deployed predictions, including tree-boundary cases.

\label{sec:theory-structure}
\begin{theorem}[Symplecticity and local accuracy]
\label{thm:affine}
For a learned Hamiltonian $H\in C^2$, fixed anchors and attributes, and invertible $\mathbf B$ in Eq.~\eqref{eq:jet}, PLAS-SymEuler satisfies $\mathbf A^\top\mathbf J\mathbf A=\mathbf J$, where $\mathbf J$ is canonical. For $H\in C^3$, the local symplectic-Euler map $F_h$ through $\mathbf z_a$ obeys
\begin{equation}
 F_h(\mathbf z)=\mathbf A\mathbf z+\mathbf b+O(\norm{\mathbf z-\mathbf z_a}^2).
 \label{eq:local-remainder-summary}
\end{equation}
For quadratic $H$, the affine map is exact wherever the implicit step is defined.
\end{theorem}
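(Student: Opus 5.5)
The proof has three parts, taken in the order of the statement. First I show that the linear part $\mathbf A$ in Eq.~\eqref{eq:affine} is the Jacobian of the implicit symplectic-Euler relation~\eqref{eq:se} at the anchor. Then I verify $\mathbf A^\top\mathbf J\mathbf A=\mathbf J$ by direct block algebra. The remainder bound follows from the implicit function theorem plus Taylor's theorem, and the quadratic case follows because the relations become affine.

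\emph{Derivation of $\mathbf A$.} I differentiate Eq.~\eqref{eq:se} at $(\bar{\mathbf q},\bar{\mathbf P})$. I use that $H_{\mathbf p\mathbf q}=H_{\mathbf q\mathbf p}^\top$ and that $\mathbf U,\mathbf V$ are symmetric, all valid for $H\in C^2$. The first relation gives $\delta\mathbf p=\mathbf U\delta\mathbf q+\mathbf B\,\delta\mathbf P$, so $\delta\mathbf P=\mathbf B^{-1}(\delta\mathbf p-\mathbf U\delta\mathbf q)$; this is the second block row. The second relation gives $\delta\mathbf Q=\mathbf B^\top\delta\mathbf q+\mathbf V\delta\mathbf P$. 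Substituting $\delta\mathbf P$ yields the first block row $[\mathbf B^\top-\mathbf V\mathbf B^{-1}\mathbf U,\ \mathbf V\mathbf B^{-1}]$.

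\emph{Symplecticity.} Write $\mathbf A=\begin{bmatrix}\mathbf a&\mathbf b'\\ \mathbf c&\mathbf d\end{bmatrix}$. The condition $\mathbf A^\top\mathbf J\mathbf A=\mathbf J$ is equivalent to three block conditions: $\mathbf a^\top\mathbf c$ symmetric, $\mathbf b'^\top\mathbf d$ symmetric, and $\mathbf a^\top\mathbf d-\mathbf c^\top\mathbf b'=\Id$. I check each directly.
\begin{itemize}
\item $\mathbf a^\top\mathbf c=-\mathbf U+\mathbf U\mathbf B^{-\top}\mathbf V\mathbf B^{-1}\mathbf U$, which is symmetric because $\mathbf U$ and $\mathbf V$ are.
\item $\mathbf b'^\top\mathbf d=\mathbf B^{-\top}\mathbf V\mathbf B^{-1}$, which is symmetric for the same reason.
\item $\mathbf a^\top\mathbf d-\mathbf c^\top\mathbf b'=(\Id-\mathbf U\mathbf B^{-\top}\mathbf V\mathbf B^{-1})+\mathbf U\mathbf B^{-\top}\mathbf V\mathbf B^{-1}=\Id$.
\end{itemize}
Only invertibility of $\mathbf B$ and symmetry of the Hessian blocks are used, so the first claim needs only $H\in C^2$ and is independent of the anchor location.

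\emph{Local accuracy.} Define $G(\mathbf z,\mathbf P)=\mathbf p-\mathbf P-hH_{\mathbf q}(\mathbf q,\mathbf P)$, so that $\partial_{\mathbf P}G=-\mathbf B$. The anchor-compatible choice $\mathbf z_a=[\bar{\mathbf q},\bar{\mathbf P}+hH_{\mathbf q}(\bar{\mathbf q},\bar{\mathbf P})]$ gives $G(\mathbf z_a,\bar{\mathbf P})=0$. For $H\in C^3$, $G$ is $C^2$, and the implicit function theorem gives a locally unique $C^2$ branch $\mathbf P(\mathbf z)$ with $\mathbf P(\mathbf z_a)=\bar{\mathbf P}$. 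Hence $F_h(\mathbf z)=(\mathbf q+hH_{\mathbf p}(\mathbf q,\mathbf P(\mathbf z)),\mathbf P(\mathbf z))$ is $C^2$ near $\mathbf z_a$. It satisfies $F_h(\mathbf z_a)=\mathbf z_{a,+}$, and by the derivation above $DF_h(\mathbf z_a)=\mathbf A$. Taylor's theorem with second-order remainder then gives
\[
F_h(\mathbf z)=\mathbf z_{a,+}+\mathbf A(\mathbf z-\mathbf z_a)+O(\norm{\mathbf z-\mathbf z_a}^2).
\]
Since $\mathbf b=\mathbf z_{a,+}-\mathbf A\mathbf z_a$, this is exactly Eq.~\eqref{eq:local-remainder-summary}.

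\emph{Quadratic case.} If $H$ is quadratic, then $H_{\mathbf q}$ and $H_{\mathbf p}$ are affine and the Hessians are constant. The blocks $\mathbf U,\mathbf B,\mathbf V$ therefore do not depend on the anchor, and Eq.~\eqref{eq:se} is an affine system. Wherever the step is defined (i.e.\ $\mathbf B$ is invertible), it has the unique solution given by the linearization, so the remainder vanishes identically.

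The main obstacle is the local-accuracy step. The map $F_h$ is defined only implicitly, so one must (i) pin down the correct solution branch through $\mathbf z_a$ and (ii) verify that its derivative coincides with $\mathbf A$ rather than some other factorization. The mixed-coordinate anchor makes (i) work, but it must be checked carefully. Specifically, I would check that $\mathbf z_a$'s momentum component is exactly $\bar{\mathbf P}+hH_{\mathbf q}(\bar{\mathbf q},\bar{\mathbf P})$, so the implicit-function branch is centred at $(\mathbf z_a,\bar{\mathbf P})$.
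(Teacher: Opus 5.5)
Your proposal is correct. The local-accuracy and quadratic parts follow the paper's argument essentially step for step. The paper also applies the implicit function theorem to $\mathbf P+hH_{\mathbf q}(\mathbf q,\mathbf P)-\mathbf p=\mathbf 0$ at the anchor-compatible source. It reads off $d\mathbf P=\mathbf B^{-1}(d\mathbf p-\mathbf U\,d\mathbf q)$ and $d\mathbf Q=\mathbf B^\top d\mathbf q+\mathbf V\,d\mathbf P$, obtains $DF_h(\mathbf z_a)=\mathbf A$ and $F_h(\mathbf z_a)=\mathbf z_{a,+}$, and concludes with Taylor's integral remainder.

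The genuine difference is in the symplecticity claim. You verify the block criterion directly: symmetry of $\mathbf a^\top\mathbf c$ and $\mathbf b'^\top\mathbf d$, and $\mathbf a^\top\mathbf d-\mathbf c^\top\mathbf b'=\Id$. Your three computations are correct. The paper instead factors
\[
 \mathbf A=\begin{bmatrix}\Id&\mathbf V\\\mathbf 0&\Id\end{bmatrix}
 \begin{bmatrix}\mathbf B^\top&\mathbf 0\\\mathbf 0&\mathbf B^{-1}\end{bmatrix}
 \begin{bmatrix}\Id&\mathbf 0\\-\mathbf U&\Id\end{bmatrix},
\]
that is, two shears with symmetric off-diagonal blocks around a block-diagonal factor whose blocks are inverse transposes. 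Each factor is symplectic by inspection.

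Both routes rest on exactly the same two facts: symmetry of $\mathbf U,\mathbf V$ and invertibility of $\mathbf B$. Yours is self-contained and needs no guessed decomposition. The paper's is more structural: it exhibits $\mathbf A$ as a linearized kick, a cotangent-lift rescaling, and a drift, which matches how the map is applied through a solve with $\mathbf B$. The same factorization is cited again for the lagged-Hessian and diagonal--low-rank variants, though your block check would transfer to those equally well.

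The paper's proof also remarks on how the symplectic form pulls back under the fixed affine coordinate normalization. That is not part of the statement, so leaving it out is not a gap.
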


\begin{theorem}[Self-consistency and quadratic refinement]
\label{thm:mixed-refinement}
Fix RF conditions $\mathbf y=(\mathbf X_\tau,\tau)$, diffusion outputs, and scan initialization. Let $\mathcal R_{\mathbf y}$ and $\mathcal C_{\mathbf y}$ return the next anchor and clean trajectory. For a finite rollout with $H\in C^3$ satisfying the local regularity conditions in Appendix~\ref{app:mixed-proof}, the unique branch-consistent fixed point $\mathbf a^*$ satisfies, locally for some $C_L<\infty$,
\begin{equation}
 \norm{\mathcal R_{\mathbf y}(\mathbf a)-\mathbf a^*}
 +\norm{\mathcal C_{\mathbf y}(\mathbf a)-\mathcal C_{\mathbf y}(\mathbf a^*)}
 \leq C_L\norm{\mathbf a-\mathbf a^*}^2.
 \label{eq:mixed-quadratic-summary}
\end{equation}
This one-update bound yields quadratic local convergence and, under Appendix~\ref{app:tracking-proof}'s conditions, local $K=1$ warm-start tracking. Fixed-point H candidates obey Eq.~\eqref{eq:se} at mixed predecessors.
\end{theorem}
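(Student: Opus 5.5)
The plan is to read the anchor update as a Newton (DEER-type) iteration on the stacked implicit symplectic-Euler relations, then transfer the Newton contraction to the clean-trajectory output.

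\textbf{Setup: a fixed-point equation in the anchors.} With $\mathbf y$, the diffusion outputs and the scan initialization fixed, I would write a single rollout as a function of the anchor vector $\mathbf a=(\bar{\mathbf q}_k,\bar{\mathbf P}_k)_{k=1}^L$. One proceeds as follows:
\begin{itemize}
\item build the affine maps $\mathcal P_{H,k}$ from $\mathbf a$ via Eqs.~\eqref{eq:jet}--\eqref{eq:affine};
\item scan Eq.~\eqref{eq:mix} sequentially, with the nonlinear residual and router evaluated on the mixed history;
\item read off $\mathcal C_{\mathbf y}(\mathbf a)$, the mixed trajectory;
\item read off $\mathcal R_{\mathbf y}(\mathbf a)$, the pairs of mixed predecessor position $\mathbf q_{k-1}$ and uncorrected $H$-candidate momentum $\mathbf P_k$.
\end{itemize}
A branch-consistent fixed point $\mathbf a^*$ is one where each anchor coincides with the point at which its own affine map is evaluated. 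Since the map is exact at its anchor, $\mathbf b=\mathbf z_{a,+}-\mathbf A\mathbf z_a$ forces the $H$ candidate to satisfy Eq.~\eqref{eq:se} exactly at the mixed predecessor. This gives the last sentence of the statement.

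\textbf{Existence and uniqueness.} I would stack the implicit relations $G(\mathbf a)=0$, coupled across edges through the residual and router. Near the nominal solution I would invoke the implicit function theorem, assuming the regularity conditions of Appendix~\ref{app:mixed-proof}:
\begin{itemize}
\item $\mathbf B$ is invertible along the rollout;
\item the residual and router are $C^1$ (or $C^2$) with bounded derivatives;
\item the linearized system is nondegenerate.
\end{itemize}
Under these conditions the zero of $G$, and hence the branch-consistent fixed point, is locally unique.

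\textbf{Quadratic one-update bound.} I would proceed by induction along physical time $k$. By Theorem~\ref{thm:affine}, with $H\in C^3$,
\[
\mathcal P_{H,k}(\mathbf z)=F_h(\mathbf z)+O\bigl(\norm{\mathbf z-\mathbf z_a}^2\bigr).
\]
The argument then runs as follows.
\begin{itemize}
\item Compare the scan under $\mathbf a$ with the scan under $\mathbf a^*$. At edge $k$ the error in the $H$ candidate has two parts: propagated predecessor error, through $\mathbf A_k$, which is bounded; and a linearization error of order $\norm{\mathbf a_k-\mathbf a^*_k}^2+\norm{\mathbf z_{k-1}-\mathbf z^*_{k-1}}^2$. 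The second part uses the uniform local Lipschitz property of $H_{\mathbf q\mathbf q}$, $H_{\mathbf q\mathbf p}$ and $H_{\mathbf p\mathbf p}$ together with the $C^1$ dependence of $\mathbf B^{-1}$.
\item The residual and router are Lipschitz, so mixing maps candidate errors into mixed-state errors with bounded gain.
\item The propagated-error term is where the argument needs care. In the $\mathbf a^*$ scan each affine map is exact at the true predecessor. In the $\mathbf a$ scan the map is the first-order Taylor model around $\mathbf a_k$, evaluated at the new predecessor. So the first-order difference between the two scans cancels, exactly as in Newton's method (and DEER), and only quadratic remainders are injected.
\item Induction over the finite horizon $L$ gives
\[
\norm{\mathbf z_k-\mathbf z^*_k}\le c_k\norm{\mathbf a-\mathbf a^*}^2 .
\]
The constants $c_k$ grow geometrically in $k$ but remain finite, so they can be absorbed into $C_L$.
\item The next anchor is built from $\mathbf q_{k-1}$ and $\mathbf P_k$, both of which are controlled by this estimate. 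This gives Eq.~\eqref{eq:mixed-quadratic-summary}.
\item Quadratic local convergence follows by iteration inside the ball $\norm{\mathbf a-\mathbf a^*}<1/C_L$.
\end{itemize}

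\textbf{Warm-start tracking.} Treat $\mathbf a^*(\mathbf y)$ as Lipschitz in the RF conditions, again by the implicit function theorem. Let $e_j$ denote the anchor error at step $j$ and $\delta_j$ the drift $\norm{\mathbf a^*(\mathbf y_{j+1})-\mathbf a^*(\mathbf y_j)}$. Then
\[
e_{j+1}\le C_L e_j^2+\delta_j .
\]
If $\delta_j\le\delta$ with $\delta$ small relative to $1/C_L$, the error stays in an invariant ball of radius $O(\delta)$. Heun's shared anchor only changes constants.

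\textbf{Main obstacle.} The hardest step is making the Newton cancellation rigorous when the anchor is in mixed coordinates $(\bar{\mathbf q},\bar{\mathbf P})$ while the map acts on the predecessor $\mathbf z$. I would first verify directly from Eq.~\eqref{eq:affine} that the anchor-to-anchor update equals one exact Newton step on the stacked implicit relations, including how the nonlinear router and residual feed back into later anchors.
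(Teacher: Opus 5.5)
You have the quadratic one-update bound essentially as the paper does it. Treat $\mathbf A_k(\mathbf a)\mathbf z+\mathbf b_k(\mathbf a)$ as the tangent of the exact implicit map $F_k$ at the anchor-compatible source $\mathbf s_k(\mathbf a)$ (Theorem~\ref{thm:affine}), and use $\norm{\mathbf s_k(\mathbf a)-\mathbf z^*_{k-1}}=O(e)$ with $e=\norm{\mathbf a-\mathbf a^*}$. An $O(e^2)$ predecessor error then yields an $O(e^2)$ injected remainder, which the Lipschitz residual, router and memory maps carry along the finite horizon.

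\textbf{The gap: existence and uniqueness of $\mathbf a^*$.} Invoking the implicit function theorem ``near the nominal solution'' presupposes the very fixed point whose existence the statement asserts. It also needs hypotheses the statement does not grant: $C^1$ residual and router, and a nondegenerate stacked Jacobian. The regularity conditions only make these maps locally Lipschitz; feature clipping and clamped logits are not differentiable. Even then you would get only local uniqueness, not uniqueness on the prescribed branches.

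The missing idea is the causal, triangular structure of the fixed-point equations.
\begin{itemize}
\item Uniqueness: at a fixed point the anchor mismatch $\mathbf v_k-\bar{\mathbf v}_k$ vanishes at every edge, so the defect bound forces each H candidate to satisfy Eq.~\eqref{eq:se} exactly at the actual mixed predecessor. With $\mathbf z_0$ and the memory reset fixed, unique implicit branches, and deterministic residual, router and memory updates, the rollout is determined edge by edge.
\item Existence: run the exact implicit rollout sequentially. Its H candidates and mixed predecessors are the targets and sources of the corresponding affine jets, so the scan reproduces it from its own anchor.
\end{itemize}
This needs no differentiability beyond $H\in C^3$.

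\textbf{The Newton-step plan would fail.} Your proposed fix for the ``main obstacle'' is to verify that the anchor update is one exact Newton step on the stacked relations. That is false in general. Only the Hamiltonian relations are linearized; the residual, router and memory are evaluated nonlinearly on the new mixed history. So the scan is not the fully affine recurrence that Newton or DEER would produce on the whole system.

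Quadratic convergence comes instead from anchor cancellation. Differentiating the affine output in the anchor leaves only $D_{\mathbf a}\mathbf A_k(\mathbf a)[\mathbf u](\mathbf z-\mathbf s_k(\mathbf a))$, which vanishes at consistency. Your Taylor remainder about $\mathbf s_k(\mathbf a)=[\bar{\mathbf q},\bar{\mathbf P}+hH_{\mathbf q}(\bar{\mathbf q},\bar{\mathbf P})]$ already encodes this, and it is exactly what handles the mixed-coordinate anchor.

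\textbf{Tracking.} You need not derive a Lipschitz branch via the implicit function theorem, which again needs differentiability. The statement defers to the tracking appendix, which assumes a continuous branch. Your recurrence $e_{j+1}\le C_Le_j^2+\delta_j$ follows, in a small ball, from the sharper $e_{j+1}\le C_L(e_j+\delta_j)^2$.
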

\par\endgroup

\label{sec:theory-reset}
\begin{proposition}[Finite-window error bound]
\label{prop:reset}
For the nonnegative physical--memory recurrence in Appendix~\ref{app:reset-proof} on an $L$-edge window with diffusion error at most $U_D$, the physical error $\mathbf e_k$ satisfies
\begin{equation}
 \max_{k\leq L}\norm{\mathbf e_k}_\infty\leq V_L+C_D(L)U_D.
 \label{eq:reset-bound-summary}
\end{equation}
Here $C_D(L)=\max_{k\leq L}\sum_{j=1}^k\norm{\mathbf K_{k,j}}_\infty$ and $V_L=\max_{k\leq L}\norm{\mathbf v_k}_\infty$. In the equality recurrence, $\mathbf K_{k,j}$ maps diffusion error at $j$ to physical error at $k$; $\mathbf v_k$ is the physical response with zero diffusion forcing, retaining initial and non-diffusion errors.
\end{proposition}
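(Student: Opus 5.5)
The plan is to work directly with the nonnegative physical--memory recurrence of Appendix~\ref{app:reset-proof}. I take its generic form to be a stacked state $\mathbf s_k=(\mathbf e_k,\mathbf m_k)\ge\mathbf 0$ obeying
\begin{equation*}
\mathbf s_k\le \mathbf M_k\mathbf s_{k-1}+\mathbf N_k\,\mathbf u_k+\mathbf f_k,\qquad \mathbf s_0\ \text{given}.
\end{equation*}
Here $\mathbf M_k,\mathbf N_k$ are entrywise nonnegative matrices (Lipschitz-type factors of the routed mixing, the affine PLAS maps $\mathbf A_k$, the residual and the memory). The vector $\mathbf u_k\ge\mathbf 0$ is the diffusion error at edge $k$, bounded componentwise by $U_D\mathbf 1$. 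The forcing $\mathbf f_k\ge\mathbf 0$ collects the non-diffusion errors: Hamiltonian-model mismatch, residual error and linearization remainders such as Eq.~\eqref{eq:local-remainder-summary}. The \emph{equality recurrence} is the same system with ``$\le$'' replaced by ``$=$''; I write its solution as $\bar{\mathbf s}_k$.

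\textbf{Step 1 (comparison).} Show by induction on $k$ that $\mathbf s_k\le\bar{\mathbf s}_k$ componentwise. This uses only nonnegativity: if $\mathbf s_{k-1}\le\bar{\mathbf s}_{k-1}$, then $\mathbf M_k\mathbf s_{k-1}\le\mathbf M_k\bar{\mathbf s}_{k-1}$, because $\mathbf M_k\ge 0$ entrywise.

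\textbf{Step 2 (superposition).} Unroll the linear equality recurrence by variation of constants:
\begin{equation*}
\bar{\mathbf s}_k=\Phi_{k,0}\mathbf s_0+\sum_{j=1}^k\Phi_{k,j}\mathbf f_j+\sum_{j=1}^k\Phi_{k,j}\mathbf N_j\mathbf u_j,
\qquad \Phi_{k,j}=\mathbf M_k\cdots\mathbf M_{j+1}.
\end{equation*}
Then I project onto the physical block. I define $\mathbf v_k$ as the physical component of the first two terms, which is exactly the zero-diffusion-forcing response retaining initial and non-diffusion errors. I define $\mathbf K_{k,j}$ as the physical block of $\Phi_{k,j}\mathbf N_j$. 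This matches the statement's definitions, and both are nonnegative.

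\textbf{Step 3 (norm bound).} Combining Steps 1 and 2 gives $\norm{\mathbf e_k}_\infty\le\norm{\mathbf v_k}_\infty+\sum_j\norm{\mathbf K_{k,j}\mathbf u_j}_\infty$. Because each $\mathbf K_{k,j}$ is nonnegative and $\mathbf 0\le\mathbf u_j\le U_D\mathbf 1$, each term satisfies $\mathbf K_{k,j}\mathbf u_j\le U_D\mathbf K_{k,j}\mathbf 1$, so its sup-norm is at most $U_D$ times the maximal row sum, which is $\norm{\mathbf K_{k,j}}_\infty$. Taking $\max_{k\le L}$ then yields $V_L+C_D(L)U_D$.

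I expect the main obstacle to be Step 1, specifically ensuring that the recurrence genuinely has nonnegative coefficients and that error magnitudes (taken componentwise in absolute value) are the right state variables. The PLAS maps $\mathbf A_k$ are signed, so they must enter through $|\mathbf A_k|$ entrywise. The router factors $\mathbf g$ and $\mathbf 1-\mathbf g$ are already in $[0,1]$, which is what produces the soft-reset attenuation inside $\mathbf M_k$. If the appendix's recurrence is stated directly with nonnegative matrices, Step 1 is immediate and the rest is bookkeeping. No contraction is needed, since the window is finite.
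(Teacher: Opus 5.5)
Your proposal is correct and matches the paper's proof. The appendix takes the nonnegative recurrence $\mathbf E_k\preceq\mathbf M_k\mathbf E_{k-1}+\mathbf B_k\vdelta_k+\mathbf b_k$ as a hypothesis and justifies repeated substitution by nonnegativity, which your Step~1 comparison induction makes explicit. It uses exactly your response terms $\mathbf K_{k,j}=\mathbf P\boldsymbol{\Pi}_{k:j+1}\mathbf B_j$ and $\mathbf v_k$, then applies the induced sup norm with $\norm{\vdelta_j}_\infty\leq U_D$ and maximizes over prefixes. The obstacle you anticipate (that the coefficients really are nonnegative) is built into the proposition's assumptions rather than proved, and is only sketched afterward in the paragraph on complete state and realized gates.
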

Appendix~\ref{app:finite-rf} connects finite refinement to RF outputs. Deployed single refinement matches the prediction accuracy of an additional-refinement reference (computed with 16 refinement updates), with 0.1903\% mean positive excess loss over 192 HamiBalls-1 edges (Appendix~\ref{app:theory-tails}). Appendices~\ref{app:physical-consistency} and~\ref{app:collisions} establish physical-consistency bounds for smooth propagation and matched collision events.

\section{Experiments}
\label{sec:experiments}
We evaluate full-model accuracy and error accumulation across smooth motion and contacts, then examine solver performance, mixed feedback, regional specialization, and training objectives.
\subsection{Protocol and comparison scope}
We evaluate on HamiBalls-1 and HamiBalls-2. HamiBalls-1 contains five balls in a closed square, each attached by a spring to the center. Balls interact only through inelastic collisions, which also occur with the walls; gravity is absent. HamiBalls-2 contains five to ten balls in three dimensions, sparse spring connections, gravity, and inelastic ball collisions in a bounded environment. Initial positions and momenta, mass, radius, restitution, and the HamiBalls-2 spring graph and its parameters are observable. Training uses 48-edge train-split windows; validation uses 192-edge rollouts from episode initialization. Each evaluation uses 512 physical sources from four data-generation seeds and two fixed diffusion sampling-noise realizations per source.

For $c\in\{z,q,p\}$, MSE averages normalized squared errors over valid source--noise--object--edge entries and coordinates of $c$, excluding the initial frame and padding. Contact contains contact-involved cells; Continuous contains the rest (Appendix~\ref{app:results}).

Baselines are PhysiFormer and matched-size HG-DPF on HamiBalls-1, and PhysiFormer, DiT, DHN, and Transformer-AR with one- or four-state contexts on HamiBalls-2. Transformer-AR uses teacher forcing; DiT follows the matched RF protocol. PhysiFormer, DiT, and Transformer-AR use 50,000 updates with batch size 64; model sizes are about 1.2M/5.6M for HamiBalls-1/2 (Appendix~\ref{app:implementation}). Context-four remains worse in true-predecessor single-step evaluation and after three true transitions before free rollout (Appendix~\ref{app:results}).

Whole-window models chain four 48-edge windows from predicted endpoints. DHN and Transformer-AR roll out recursively. Ground truth is supplied only at initialization. We report full-rollout and disjoint-interval errors over edges 1--48, 49--96, and 97--192 without resets, separating initial accuracy from later prediction performance.

\FloatBarrier
\subsection{Trajectory Prediction}
\label{sec:trajectory-prediction}
We first test whether HamiFormer improves 192-edge prediction on both systems. Table~\ref{tab:main} reports the lowest total MSE for Ours; position/momentum errors within Contact and Continuous subsets test whether the gains extend to contact-involved states and the remaining trajectory cells.
\begin{table}[htbp]
\centering\small
\setlength{\tabcolsep}{3pt}
\caption{Pooled 192-edge normalized MSE ($\downarrow$), with ground truth supplied only at initialization. Green, yellow, and red mark the best, second-best, and third-best results per dataset and metric.}
\label{tab:main}
\begin{tabularx}{\textwidth}{l*{7}{>{\centering\arraybackslash}X}}
\toprule
\multirow{2}{*}{Method} & \multicolumn{3}{c}{Total} & \multicolumn{2}{c}{Continuous} & \multicolumn{2}{c}{Contact}\\
\cmidrule(lr){2-4}\cmidrule(lr){5-6}\cmidrule(l){7-8}
 & $z$ & $q$ & $p$ & $q$ & $p$ & $q$ & $p$\\
\midrule
\multicolumn{8}{l}{\textit{HamiBalls-1}}\\
PhysiFormer & \resultsecond{0.34029} & \resultsecond{0.34689} & \resultsecond{0.33369} & \resultsecond{0.34710} & \resultsecond{0.32854} & \resultsecond{0.33378} & \resultsecond{0.65186}\\
HG-DPF & \resultthird{1.17966} & \resultthird{1.29854} & \resultthird{1.06077} & \resultthird{1.29673} & \resultthird{1.05897} & \resultthird{1.41008} & \resultthird{1.17228}\\
\textbf{HamiFormer (Ours)} & \resultfirst{0.25071} & \resultfirst{0.25596} & \resultfirst{0.24546} & \resultfirst{0.25613} & \resultfirst{0.23985} & \resultfirst{0.24529} & \resultfirst{0.59206}\\
\midrule
\multicolumn{8}{l}{\textit{HamiBalls-2}}\\
DHN & 0.99190 & 0.83993 & \resultsecond{1.14387} & 0.82841 & \resultsecond{1.12771} & 1.02251 & \resultsecond{1.39999}\\
Transformer-AR$_{\text{ctx}=1}$ & \resultthird{0.90340} & \resultsecond{0.56438} & 1.24241 & \resultsecond{0.55663} & 1.21678 & \resultsecond{0.68721} & 1.64851\\
Transformer-AR$_{\text{ctx}=4}$ & 1.51165 & 0.94036 & 2.08295 & 0.92730 & 2.06619 & 1.14717 & 2.34849\\
DiT & \resultsecond{0.88309} & 0.60233 & \resultthird{1.16386} & 0.59430 & \resultthird{1.14362} & \resultthird{0.72957} & \resultthird{1.48460}\\
PhysiFormer & 0.91397 & \resultthird{0.59681} & 1.23113 & \resultthird{0.58827} & 1.20712 & 0.73211 & 1.61155\\
\textbf{HamiFormer (Ours)} & \resultfirst{0.69398} & \resultfirst{0.49281} & \resultfirst{0.89516} & \resultfirst{0.48644} & \resultfirst{0.87631} & \resultfirst{0.59378} & \resultfirst{1.19375}\\
\bottomrule\end{tabularx}\end{table}

On HamiBalls-1, Ours achieves total MSE 0.25071 versus 0.34029 for PhysiFormer and 1.17966 for HG-DPF, reducing error by 26.3\% relative to PhysiFormer. On HamiBalls-2, Ours achieves 0.69398 versus DiT's 0.88309, a 21.4\% reduction over the best baseline. Figure~\ref{fig:h2-qualitative} complements these aggregate results with visual comparisons of object positions and spring configurations.

\noindent\begin{minipage}{\linewidth}
\centering
\includegraphics[width=\textwidth]{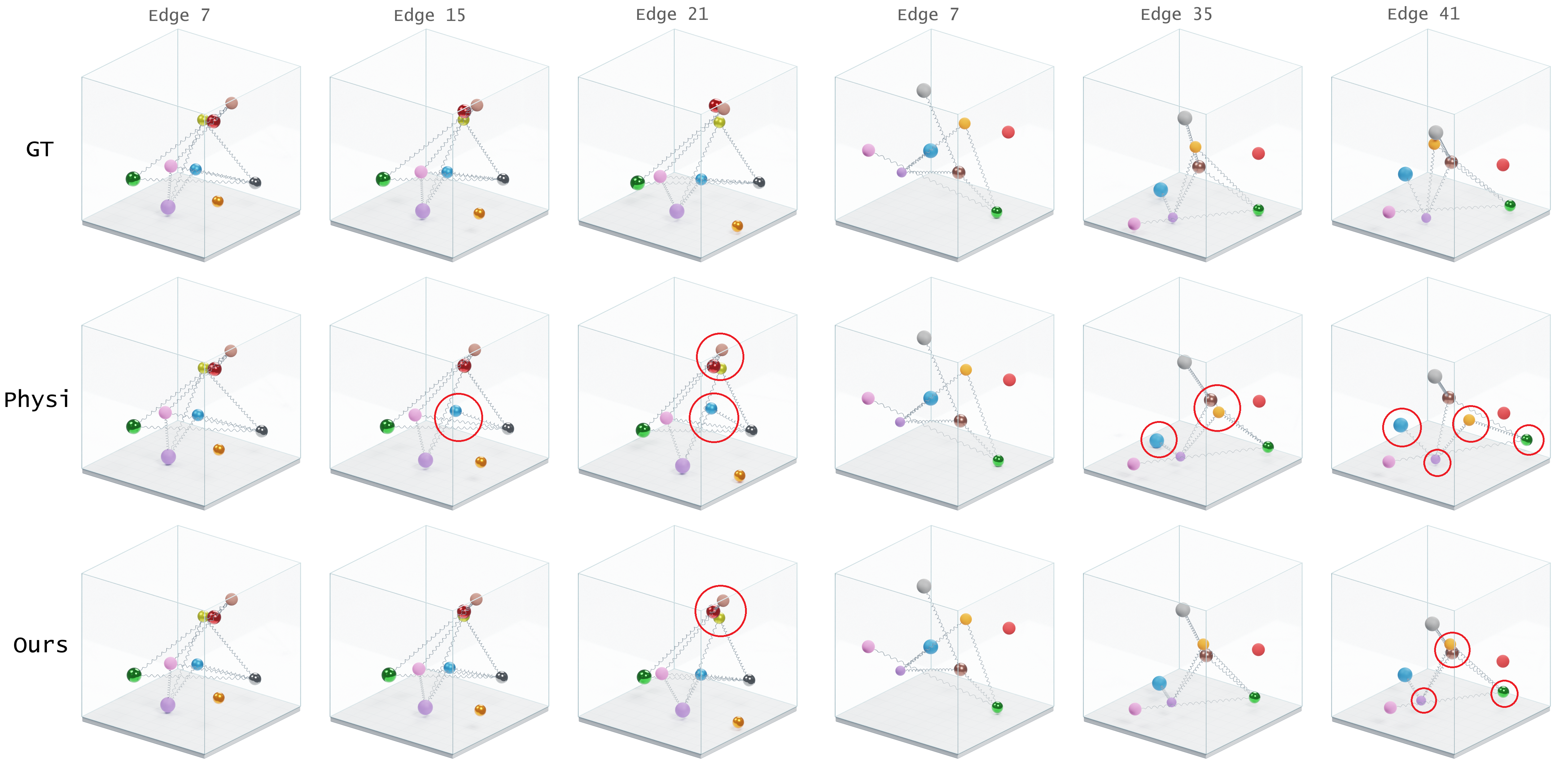}
\captionof{figure}{Two HamiBalls-2 trajectories: ground truth, PhysiFormer, and Ours at selected steps. Higher gloss denotes higher restitution; red circles mark large object-wise prediction errors.}
\label{fig:h2-qualitative}
\end{minipage}\par

\FloatBarrier
\subsection{Error Accumulation}
\label{sec:error-accumulation}
We distinguish initial from later prediction accuracy using disjoint intervals without resets (Table~\ref{tab:long}) and per-edge errors (Figure~\ref{fig:error-accumulation}). Ours leads across all HamiBalls-1 intervals; its HamiBalls-2 advantage emerges during continued prediction, which carries forward earlier errors.
\par
\begin{table}[!htbp]
\centering\small
\setlength{\tabcolsep}{2pt}
\begingroup
\setlength{\belowcaptionskip}{9pt}
\captionof{table}{Disjoint-interval normalized MSE ($\downarrow$) without resets; colors follow Table~\ref{tab:main}.}
\label{tab:long}
\endgroup
\begin{tabularx}{\textwidth}{l*{6}{>{\centering\arraybackslash}X}}
\toprule
\multirow{2}{*}{Method} & \multicolumn{2}{c}{1--48 edges} & \multicolumn{2}{c}{49--96 edges} & \multicolumn{2}{c}{97--192 edges}\\
\cmidrule(lr){2-3}\cmidrule(lr){4-5}\cmidrule(l){6-7}
 & $q$ & $p$ & $q$ & $p$ & $q$ & $p$\\
\midrule
\multicolumn{7}{l}{\textit{HamiBalls-1}}\\
PhysiFormer & \resultsecond{0.02076} & \resultsecond{0.10887} & \resultsecond{0.23468} & \resultsecond{0.29940} & \resultsecond{0.56606} & \resultsecond{0.46324}\\
HG-DPF & \resultthird{0.23083} & \resultthird{0.70319} & \resultthird{1.29546} & \resultthird{1.04699} & \resultthird{1.83393} & \resultthird{1.24646}\\
\textbf{HamiFormer (Ours)} & \resultfirst{0.01451} & \resultfirst{0.07719} & \resultfirst{0.16078} & \resultfirst{0.21496} & \resultfirst{0.42427} & \resultfirst{0.34484}\\
\midrule
\multicolumn{7}{l}{\textit{HamiBalls-2}}\\
DHN & 0.18593 & 0.72067 & 0.82700 & 1.43810 & 1.17339 & \resultsecond{1.20836}\\
Transformer-AR$_{\text{ctx}=1}$ & \resultfirst{0.02919} & \resultfirst{0.32268} & \resultfirst{0.33786} & \resultsecond{1.24024} & 0.94524 & 1.70335\\
Transformer-AR$_{\text{ctx}=4}$ & 0.16930 & 1.04295 & 0.87638 & 2.33487 & 1.35787 & 2.47699\\
DiT & 0.07969 & 0.60970 & 0.50147 & \resultthird{1.32340} & \resultsecond{0.91407} & \resultthird{1.36118}\\
PhysiFormer & \resultthird{0.06255} & \resultthird{0.54874} & \resultthird{0.46712} & 1.36271 & \resultthird{0.92879} & 1.50653\\
\textbf{HamiFormer (Ours)} & \resultsecond{0.04090} & \resultsecond{0.34356} & \resultsecond{0.35779} & \resultfirst{1.02909} & \resultfirst{0.78628} & \resultfirst{1.10399}\\
\bottomrule\end{tabularx}
\vspace{5pt}
\end{table}
\begin{figure}[!htbp]
\centering
\includegraphics[width=\textwidth]{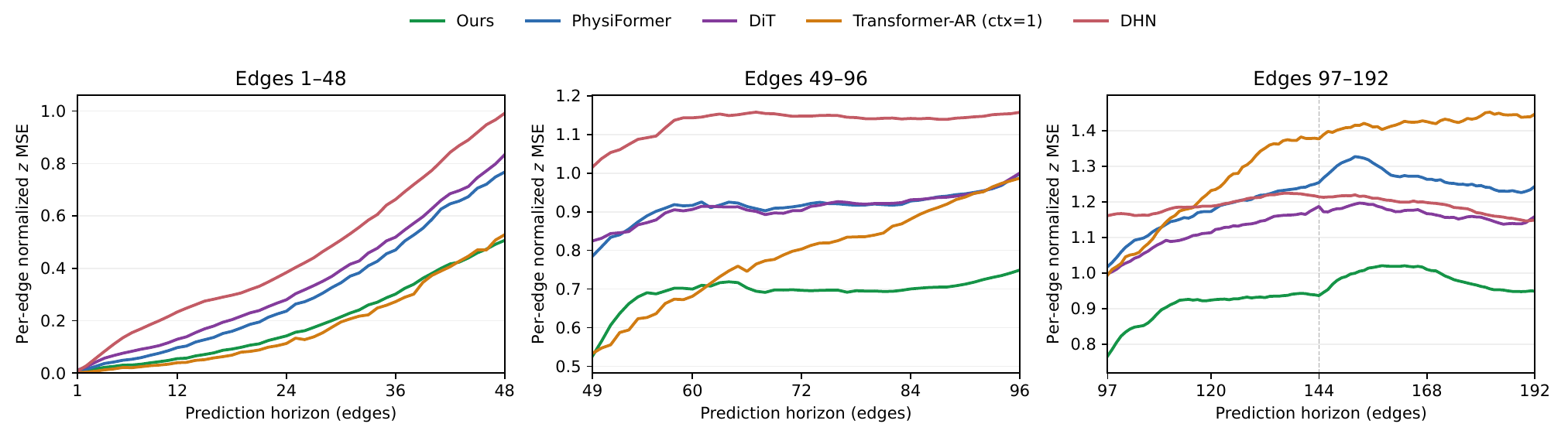}
\vspace{-5pt}
\begingroup
\normalsize
\captionof{figure}{HamiBalls-2 phase-space MSE averaged over valid objects and realizations at each edge. Axes use separate linear ranges; the dashed line marks the diffusion window boundary at edge 144. Transformer-AR is competitive initially but exhibits stronger error growth later. Ours maintains lower error across most later edges, consistent with the interval averages over the evaluated horizon.}
\label{fig:error-accumulation}
\endgroup
\vspace{-6pt}
\end{figure}

\FloatBarrier

On HamiBalls-2, context-one Transformer-AR leads initially (MSE 0.17594 versus 0.19223 for Ours), but the ordering reverses over edges 49--96 (0.69344 for Ours versus 0.78905) and 97--192 (0.94513 versus 1.32430). In the final interval, Ours also improves on DiT's 1.13762 and PhysiFormer's 1.21766, supporting sustained prediction as a source of its full-rollout advantage.

\FloatBarrier
\titlespacing*{\subsection}{0pt}{1.2ex plus 0.5ex minus 0.2ex}{0.8ex plus 0.1ex}
\subsection{Integrator ablation}
\titlespacing*{\subsection}{0pt}{2.3ex plus 0.5ex minus 0.2ex}{0.8ex plus 0.1ex}
\label{sec:integrator-ablation}
We test PLAS's implicit-step accuracy and sampling cost on HamiBalls-1 against Explicit Euler and two-iteration symplectic Euler (SymEuler2). PLAS-S sparsely refreshes Hessians; PLAS-DC adds defect correction, with frozen-jet (PLAS-DC-F) and sparse-Hessian (PLAS-DC-S) variants.

Solver errors are measured against converged FP64 Newton references at each method's inputs, using 512 sources, two noises, eight uniformly selected edges per 48-edge window, and all 18 accepted Hamiltonian-bearing RF fields. This measures implicit-step agreement, separately from ground-truth prediction MSE. Timing uses synchronized B64$\times$48 sampling with a shared interface. All variants retain the same RF schedule, model weights, residual corrections, and routing rules, with only the numerical solver changed.

\begin{figure}[!htbp]
\centering
\includegraphics[width=\textwidth]{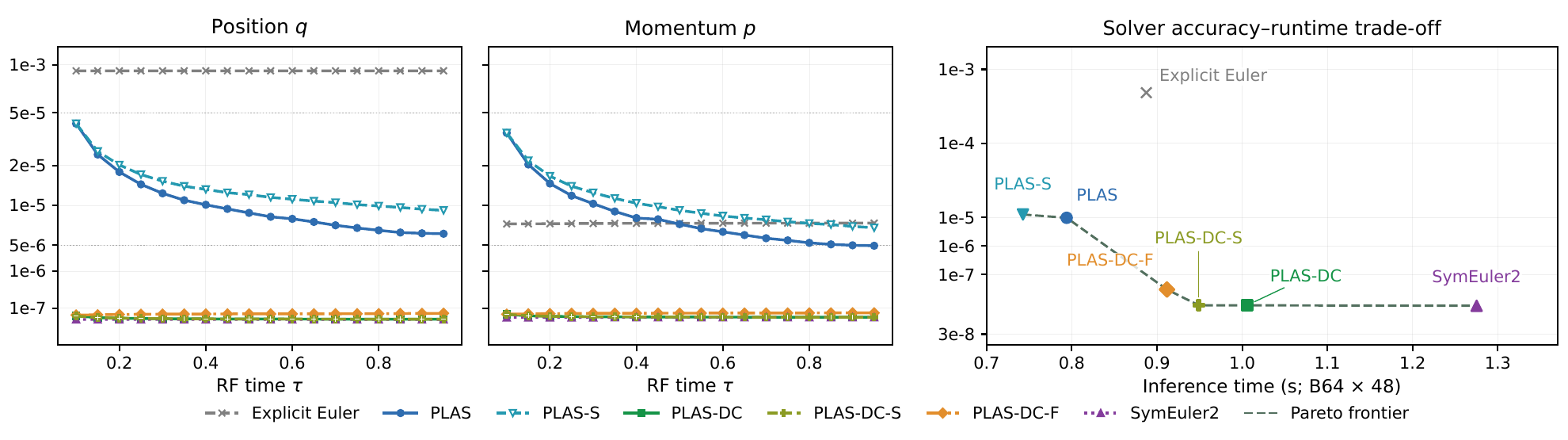}
\caption{HamiBalls-1 integrator ablation. Left/center: cumulative-mean $q/p$ RMSE through each RF field. Right: pooled phase-space RMSE versus synchronized B64$\times$48 sampling time; the dashed line connects nondominated configurations. See Table~\ref{tab:integrators} and Appendix~\ref{app:integrator-ablation} for values and protocol.}
\label{fig:integrator-ablation}
\end{figure}
\begin{table}[!htbp]
\centering\small
\caption{Solver RMSE against FP64 Newton on HamiBalls-1 (512 sources, two noises). Time: median of 11 warmed, synchronized B64$\times$48 runs with a shared comparison interface (Appendix~\ref{app:integrator-ablation}).}
\label{tab:integrators}
\begin{tabularx}{\textwidth}{l*{4}{>{\raggedleft\arraybackslash}X}}
\toprule
\multirow{2}{*}{Solver} & \multicolumn{3}{c}{RMSE ($\times10^{-6}$) $\downarrow$} & \multirow{2}{*}{Time (s) $\downarrow$}\\
\cmidrule(lr){2-4}
 & $q$ & $p$ & $z$ & \\
\midrule
Explicit Euler & 685.261 & 7.34700 & 484.580 & \resultthird{0.8875}\\
SymEuler2 & \resultfirst{0.04955} & \resultfirst{0.05661} & \resultfirst{0.05319} & 1.2752\\
PLAS & 10.6093 & 8.92038 & 9.80127 & \resultsecond{0.7940}\\
PLAS-S & 12.1360 & 9.74284 & 11.0047 & \resultfirst{0.7427}\\
PLAS-DC & \resultsecond{0.05031} & \resultsecond{0.05694} & \resultsecond{0.05373} & 1.0057\\
PLAS-DC-F & 0.07266 & 0.07510 & 0.07389 & 0.9117\\
PLAS-DC-S & \resultthird{0.05042} & \resultthird{0.05695} & \resultthird{0.05378} & 0.9490\\
\bottomrule
\end{tabularx}
\end{table}

PLAS's cumulative-mean errors decrease along RF (Figure~\ref{fig:integrator-ablation}). It achieves phase-space RMSE $9.80\times10^{-6}$ in 0.794 seconds, versus Euler's $4.85\times10^{-4}$ in 0.887 seconds. Thus PLAS attains lower solver error and shorter sampling time than Explicit Euler in this comparison. PLAS-S takes 0.743 seconds with modestly increased error; PLAS-DC-S approaches SymEuler2's accuracy in 0.949 versus 1.275 seconds (Table~\ref{tab:integrators}). The variants provide additional accuracy--runtime choices, while Ours uses full-Hessian PLAS.

PLAS achieves near-SymEuler2 prediction MSE at lower runtime than either sequential solver (Appendix~\ref{app:integrator-endpoints}). Its window-parallel derivative evaluation and lightweight affine propagation amortize refinement across RF fields while preserving the symplectic structure of its affine maps. For larger systems, PLAS-DLR reduces curvature and solve costs through diagonal--low-rank structure; Appendix~\ref{app:scaling} reports HamiBalls-1 accuracy and runtime/memory scaling with increasing object count.

\FloatBarrier

\begingroup
\setlength{\intextsep}{6pt plus 1pt minus 1pt}
\subsection{Expert feedback ablation}
\label{sec:expert-feedback}
We isolate the contribution of mixed-state feedback on 64 sources per dataset with two noises over the initial 48 edges. H+r candidate only outputs and propagates H+r; output-only mixing propagates H+r but outputs the blend; Ours propagates and outputs the blend.
\begin{table}[htbp]
\centering\small
\setlength{\tabcolsep}{4pt}
\renewcommand{\arraystretch}{1.15}
\setlength{\abovecaptionskip}{3pt}
\caption{Expert feedback ablation: normalized MSE ($\downarrow$), ranked by color within each column.}
\label{tab:expert-feedback}
\begin{tabularx}{\textwidth}{l*{6}{>{\centering\arraybackslash}X}}
\toprule
\multirow{2}{*}{Method} & \multicolumn{3}{c}{HamiBalls-1} & \multicolumn{3}{c}{HamiBalls-2} \\
\cmidrule(lr){2-4}\cmidrule(lr){5-7}
 & $q$ & $p$ & $z$ & $q$ & $p$ & $z$ \\
\midrule
H+r candidate only & \resultthird{0.022601} & \resultthird{0.098554} & \resultthird{0.060577} & \resultthird{0.391673} & \resultthird{1.069561} & \resultthird{0.730617} \\
Output-only mixing & \resultsecond{0.020802} & \resultsecond{0.084092} & \resultsecond{0.052447} & \resultsecond{0.079374} & \resultsecond{0.426752} & \resultsecond{0.253063} \\
Ours & \resultfirst{0.014472} & \resultfirst{0.077240} & \resultfirst{0.045856} & \resultfirst{0.043307} & \resultfirst{0.378203} & \resultfirst{0.210755} \\
\bottomrule
\end{tabularx}
\end{table}

Output mixing improves on H+r alone; mixed feedback further reduces phase-space MSE by 12.6\%/16.7\% on HamiBalls-1/2, improving both position and momentum (Table~\ref{tab:expert-feedback}).

Against internal D-only on 512 HamiBalls-1 sources with two sampling noises, Ours reduces phase-space MSE from 0.06629 to 0.04585 over 48 edges (30.8\%) and from 0.33863 to 0.25071 over 192 edges (26.0\%; Appendix~\ref{app:internal-d-control}). Both methods continue from their own predicted endpoints.

\titlespacing*{\subsection}{0pt}{1.2ex plus 0.5ex minus 0.2ex}{0.8ex plus 0.1ex}
\subsection{Model-tree and objective ablation}
\label{sec:tree-objective-ablation}
We cross shared/tree-conditioned residual-router structures with basic squared-error/full objectives on HamiBalls-1 (Table~\ref{tab:tree-objective}; Appendix~\ref{app:training}). Trees add regional readouts to shared $r_0$/router models; GL uses one global readout. All share frozen H/D models and 200 residual plus three 200-update router stages, evaluated on 512 sources with two noises over the initial 48 edges.
\begin{table}[htbp]
\centering\small
\setlength{\abovecaptionskip}{7pt}
\setlength{\belowcaptionskip}{9pt}
\setlength{\tabcolsep}{4pt}
\renewcommand{\arraystretch}{1.15}
\caption{HamiBalls-1, 48 edges (\%, $\uparrow$). Error reductions are relative to PhysiFormer. Cell wins compare noise-mean phase-space MSE per source--edge--object against the same baseline.}
\label{tab:tree-objective}
\begin{tabularx}{\textwidth}{clc*{4}{>{\raggedleft\arraybackslash}X}}
\toprule
\multirow{2}{*}{ID} & \multirow{2}{*}{Structure} & \multirow{2}{*}{Objective}
& \multicolumn{3}{c}{Error reduction (\%) $\uparrow$}
& \multirow{2}{*}{\shortstack{Cell win\\(\%) $\uparrow$}}\\
\cmidrule(lr){4-6}
 & & & \multicolumn{1}{c}{$z$} & \multicolumn{1}{c}{$q$} & \multicolumn{1}{c}{$p$} & \\
\midrule
A & Shared & Basic & $-55.96$ & $-61.89$ & $-54.82$ & $18.47$\\
B & Tree & Basic & \resultsecond{$11.69$} & \resultsecond{$12.02$} & \resultsecond{$11.63$} & \resultthird{$54.14$}\\
C & Shared & Full & \resultthird{$8.42$} & \resultthird{$8.83$} & \resultthird{$8.34$} & \resultsecond{$61.60$}\\
D & Tree & Full & \cellcolor{resultfirst}$\mathbf{29.27}$ & \cellcolor{resultfirst}$\mathbf{30.19}$ & \cellcolor{resultfirst}$\mathbf{29.09}$ & \cellcolor{resultfirst}$\mathbf{64.41}$\\
\midrule
GL & Global leaf & Full & $7.03$ & $8.65$ & $6.72$ & $46.63$\\
\bottomrule
\end{tabularx}
\end{table}

Tree conditioning and the full objective each improve phase-space error under either setting of the other factor. Tree+Full reduces MSE by 29.27\% relative to PhysiFormer, versus 11.69\% for Tree+Basic and 8.42\% for Shared+Full; its 64.41\% cell win rate indicates broadly distributed gains.

GL yields a 7.03\% reduction under the same update budget, supporting regional readouts; the structure contrasts also change sharing and capacity (Appendix~\ref{app:tree-objective}).
\par\endgroup

\begingroup
\titlespacing*{\section}{0pt}{1.8ex plus 0.6ex minus 0.2ex}{1.0ex plus 0.2ex}
\section{Conclusion}
HamiFormer couples whole-window diffusion and residual-corrected Hamiltonian propagation, improving full-rollout accuracy on both HamiBalls systems. Ablations support mixed feedback, regional specialization, training objectives, and PLAS's accuracy--runtime balance. The conditional analysis establishes the symplecticity of PLAS maps, local refinement and tracking results, and a finite-window error bound.

The evaluation uses simulated systems with observable physical attributes and state supervision. Future work includes broader interaction laws and observation modalities, inference of physical context from limited observations, and adaptive refinement budgets for larger systems.
\par\endgroup

\label{main:end}

\clearpage

\FloatBarrier
\addcontentsline{toc}{section}{References}
\begingroup
\small
\bibliographystyle{aaai2027}
\renewcommand{\bibfont}{\fontsize{10}{12.8}\selectfont}
\bibliography{ref}
\endgroup

\clearpage
\appendix
\numberwithin{equation}{section}
\section*{Supplementary Material}
\addcontentsline{toc}{section}{Supplementary Material}
\addtocontents{toc}{\protect\setcounter{tocdepth}{1}}
\section{Proofs and Scope of the Scientific Claims}
\label{app:proofs}
Observable conditioning and model parameters are fixed throughout each physical rollout. The analysis uses exact arithmetic, with canonical symplectic matrix $\mathbf J=\begin{bmatrix}\mathbf 0&\Id\\-\Id&\mathbf 0\end{bmatrix}$. Norms, derivative bounds, and implicit branches are fixed on neighborhoods containing all compared points and connecting segments. The local refinement analysis is complemented by finite-precision and boundary-tail measurements in Appendix~\ref{app:theory-evidence}.

\subsection{Symplecticity and local accuracy of PLAS maps}
\label{app:affine-proof}
\noindent\textbf{Full statement of Theorem~\ref{thm:affine}.} For $H\in C^2$, fixed anchors and attributes, and invertible $\mathbf B$ in Eq.~\eqref{eq:jet}, PLAS-SymEuler is symplectic: $\mathbf A^\top\mathbf J\mathbf A=\mathbf J$, with $\mathbf J$ the canonical symplectic matrix. If $H\in C^3$, the local symplectic-Euler map $F_h$ through $\mathbf z_a$ satisfies
\begin{equation}
 F_h(\mathbf z)=\mathbf A\mathbf z+\mathbf b+O(\norm{\mathbf z-\mathbf z_a}^2).
 \label{eq:local-remainder}
\end{equation}
For quadratic $H$, the affine representation is exact wherever the implicit step is defined.

\begin{proof}[Proof of Theorem~\ref{thm:affine}]
Schwarz symmetry gives $\mathbf U^\top=\mathbf U$ and $\mathbf V^\top=\mathbf V$. Factor the matrix as
\begin{equation}
 \mathbf A=
 \underbrace{\begin{bmatrix}\Id&\mathbf V\\\mathbf 0&\Id\end{bmatrix}}_{\mathbf S_{\mathbf V}}
 \underbrace{\begin{bmatrix}\mathbf B^\top&\mathbf 0\\\mathbf 0&\mathbf B^{-1}\end{bmatrix}}_{\mathbf D_{\mathbf B}}
 \underbrace{\begin{bmatrix}\Id&\mathbf 0\\-\mathbf U&\Id\end{bmatrix}}_{\mathbf S_{-\mathbf U}}.
 \label{eq:symplectic-factorization}
\end{equation}
Direct block multiplication gives $\mathbf S_{\mathbf V}^\top\mathbf J\mathbf S_{\mathbf V}=\mathbf J$ and $\mathbf S_{-\mathbf U}^\top\mathbf J\mathbf S_{-\mathbf U}=\mathbf J$ because $\mathbf U,\mathbf V$ are symmetric. It also gives $\mathbf D_{\mathbf B}^\top\mathbf J\mathbf D_{\mathbf B}=\mathbf J$ because its diagonal blocks are inverse transposes of each other. The product is therefore symplectic. A constant translation has identity derivative, so the affine map retains this property.

For the approximation claim, consider the equation
\begin{equation}
 \boldsymbol{\Phi}(\mathbf q,\mathbf p,\mathbf P)=\mathbf P+hH_{\mathbf q}(\mathbf q,\mathbf P)-\mathbf p=\mathbf 0.
\end{equation}
At $(\mathbf q,\mathbf p)=\mathbf z_a$, $\mathbf P=\bar{\mathbf P}$ is a solution and $\partial_{\mathbf P}\boldsymbol{\Phi}=\mathbf B$ is invertible. The implicit function theorem provides a local branch $\mathbf P=\mathbf P(\mathbf q,\mathbf p)$; with $H\in C^3$, this branch is $C^2$. Define $\mathbf Q=\mathbf q+hH_{\mathbf p}(\mathbf q,\mathbf P(\mathbf q,\mathbf p))$. At the anchor-compatible source,
\begin{equation}
 d\mathbf P=\mathbf B^{-1}(d\mathbf p-\mathbf U\,d\mathbf q),\qquad
 d\mathbf Q=\mathbf B^\top d\mathbf q+\mathbf V\,d\mathbf P.
\end{equation}
Thus $DF_h(\mathbf z_a)=\mathbf A$ and $F_h(\mathbf z_a)=\mathbf z_{a,+}$. Taylor's theorem on a sufficiently small convex neighborhood yields
\begin{equation}
 F_h(\mathbf z)-\mathbf A\mathbf z-\mathbf b
 =\int_0^1(1-t)D^2F_h(\mathbf z_a+t(\mathbf z-\mathbf z_a))[\mathbf z-\mathbf z_a,\mathbf z-\mathbf z_a]dt,
\end{equation}
and hence a remainder bounded by $\tfrac12\sup\norm{D^2F_h}\norm{\mathbf z-\mathbf z_a}^2$. If $H$ is quadratic, Eq.~\eqref{eq:se} is affine in $(\mathbf q,\mathbf p,\mathbf Q,\mathbf P)$, its Hessian blocks are constant, and the solved map equals $\mathbf A\mathbf z+\mathbf b$ exactly.

Finally, for $\mathbf z=\mathbf S\mathbf x+\vmu$ with constant nonsingular $\mathbf S$, the transformed linear part is $\mathbf A_{\mathbf x}=\mathbf S^{-1}\mathbf A\mathbf S$. Consequently,
\begin{equation}
 \mathbf A_{\mathbf x}^\top\mathbf J_{\mathbf x}\mathbf A_{\mathbf x}
 =\mathbf S^\top\mathbf A^\top\mathbf J\mathbf A\mathbf S=\mathbf S^\top\mathbf J\mathbf S=\mathbf J_{\mathbf x},\qquad \mathbf J_{\mathbf x}=\mathbf S^\top\mathbf J\mathbf S.
\end{equation}
The transformed translation again leaves the identity unchanged.
\end{proof}

\paragraph{Relation to physical-step accuracy.}
The theorem's remainder is second order in \emph{anchor mismatch}. More specifically, if second and third derivatives of $H$ and the implicit inverses are uniformly bounded for $0<h\leq h_0$ on a common neighborhood, differentiation of the implicit relation twice gives $\norm{D^2F_h}\leq C h$. The local affine error is then $O(h\norm{\mathbf z-\mathbf z_a}^2)$. For a smooth Hamiltonian vector field, symplectic Euler has one-step error $O(h^2)$ relative to the continuous flow \citep{hairer2002gni}; stability bounds propagate these local errors along the trajectory. For quadratic $H$, the affine representation exactly reproduces the discrete symplectic-Euler map.

\paragraph{Role of the frozen anchor.}
The symplectic identity applies to the derivative of each affine propagation map with its anchor held fixed. For a state-dependent anchor, differentiating $\mathbf A(\mathbf a(\mathbf z))\mathbf z+\mathbf b(\mathbf a(\mathbf z))$ includes the anchor's derivative terms. The complete predictor also includes residual and routing derivatives. Equation~\eqref{eq:symplectic-factorization} characterizes the fixed-anchor Hamiltonian propagation component within this predictor.

\paragraph{Solves and finite precision.}
Invertibility of $\mathbf B$ defines the affine solve, and $\norm{\mathbf B^{-1}}$ controls its sensitivity to derivative errors and the local approximation constants. The algebraic proof establishes the exact symplectic identity; the finite-precision residual $\norm{\mathbf A^\top\mathbf J\mathbf A-\mathbf J}$ measures its numerical accuracy.

\subsection{Mixed fixed points and local quadratic refinement}
\label{app:mixed-proof}
Fix RF conditions $\mathbf y=(\mathbf X_\tau,\tau)$ and diffusion outputs. Let $\mathcal R_{\mathbf y}(\mathbf a)$ and $\mathcal C_{\mathbf y}(\mathbf a)$ return the next anchor and clean trajectory, resetting the initial state and memory each scan. A rollout is \emph{regular} if its implicit H branches are unique and nonsingular, and its residual/router/memory maps are locally Lipschitz in candidates and causal history, depending on anchors only through H candidates.

\noindent\textbf{Full statement of Theorem~\ref{thm:mixed-refinement}.} For a regular finite-horizon rollout with $H\in C^3$, $\mathcal R_{\mathbf y}$ has a unique fixed point $\mathbf a^*$ on the prescribed branches. Its H candidates satisfy Eq.~\eqref{eq:se} at the mixed predecessors. Locally, for some $C_L<\infty$,
\begin{equation}
 \norm{\mathcal R_{\mathbf y}(\mathbf a)-\mathbf a^*}
 +\norm{\mathcal C_{\mathbf y}(\mathbf a)-\mathcal C_{\mathbf y}(\mathbf a^*)}
 \leq C_L\norm{\mathbf a-\mathbf a^*}^2,
 \label{eq:mixed-quadratic}
\end{equation}
so sufficiently close anchors converge quadratically under repeated refinement.

\paragraph{Anchor, readout, and regularity.}
Fix $\mathbf y=(\mathbf X_\tau,\tau)$, diffusion predictions and tokens, observable attributes, and model parameters. Write the uncorrected Hamiltonian candidate as $(\mathbf Q_k,\mathbf P_k)$ and the mixed output as $(\mathbf q_k,\mathbf p_k)$. The scan returns
\begin{equation}
 \mathcal R_{\mathbf y}(\mathbf a)=\bigl((\mathbf q_{k-1})_{k=2}^{L},(\mathbf P_k)_{k=1}^{L}\bigr),
 \qquad
 \mathcal C_{\mathbf y}(\mathbf a)=\bigl((\mathbf q_k,\mathbf p_k)\bigr)_{k=1}^{L}.
 \label{eq:refinement-state}
\end{equation}
The initial $\mathbf q_0$ is fixed. Each scan initializes physical-time memory identically and follows Eq.~\eqref{eq:mix}. An exact mixed rollout evaluates the prescribed implicit H branches sequentially, applying the same deterministic residual, router, and memory updates after each H candidate.

For $H\in C^3$ and nonsingular implicit blocks, the implicit function theorem supplies smooth H maps. Smaller closed neighborhoods and the finite horizon give common derivative and inverse bounds. Residual/router/memory maps, including features and selectors, must be locally Lipschitz: neighborhoods stay in one continuity region, or adjacent formulas share a Lipschitz bound.

In Algorithm~\ref{alg:clean}, anchors enter only the H construction; features and GRU updates read candidates, mixed predecessors, and causal history, with memory reset each scan. Recurrent dependence is therefore mediated by the H candidates and included in the analysis. With fixed attributes and masks, the Hamiltonian networks' SiLU, softmax, and positive-offset normalization are smooth, including at physical contacts.

\begin{proof}[Proof of Theorem~\ref{thm:mixed-refinement}]
At edge $k$, put $\bar{\mathbf v}_k=(\bar{\mathbf q}_{k-1},\bar{\mathbf P}_k)$ and
$\mathbf v_k=(\mathbf q_{k-1},\mathbf P_k)$. The affine construction solves
\begin{align}
 \mathbf p_{k-1}
 &=\mathbf P_k+h\bigl[H_{\mathbf q}(\bar{\mathbf v}_k)+H_{\mathbf q\mathbf q}(\bar{\mathbf v}_k)(\mathbf q_{k-1}-\bar{\mathbf q}_{k-1})
                         +H_{\mathbf q\mathbf p}(\bar{\mathbf v}_k)(\mathbf P_k-\bar{\mathbf P}_k)\bigr],\\
 \mathbf Q_k
 &=\mathbf q_{k-1}+h\bigl[H_{\mathbf p}(\bar{\mathbf v}_k)+H_{\mathbf p\mathbf q}(\bar{\mathbf v}_k)(\mathbf q_{k-1}-\bar{\mathbf q}_{k-1})
                         +H_{\mathbf p\mathbf p}(\bar{\mathbf v}_k)(\mathbf P_k-\bar{\mathbf P}_k)\bigr].
 \label{eq:mixed-linearized}
\end{align}
For $\mathbf f=(H_{\mathbf q},H_{\mathbf p})$, suppose $D\mathbf f$ is $M_H$-Lipschitz on the segment from $\bar{\mathbf v}_k$ to $\mathbf v_k$. Taylor's integral remainder bounds the implicit-equation defect
\begin{equation}
 \mathbf d_k=
 \begin{bmatrix}\mathbf P_k+hH_{\mathbf q}(\mathbf v_k)-\mathbf p_{k-1}\\\mathbf Q_k-\mathbf q_{k-1}-hH_{\mathbf p}(\mathbf v_k)\end{bmatrix},
 \qquad
 \norm{\mathbf d_k}\leq \frac{hM_H}{2}\norm{\mathbf v_k-\bar{\mathbf v}_k}^2.
 \label{eq:mixed-defect}
\end{equation}
At a fixed point of Eq.~\eqref{eq:refinement-state}, the anchor mismatch vanishes at every edge, including the fixed initial position. Thus each H candidate satisfies the exact learned-H implicit equations at the actual mixed predecessor.

Conversely, run the prescribed unique implicit branches sequentially with the same residual, router, memory reset, and initial state. Its H candidate and mixed predecessor are the target and source of the corresponding affine jet. Supplying the resulting anchor to the scan therefore reproduces the H candidate at the first edge, then the residual, gate, mixed state, and memory. Induction gives the entire rollout and a fixed point. The same induction shows uniqueness among fixed points in these branches.

For local convergence, let $F_k$ be the smooth exact implicit H map near the fixed rollout, and let $\mathbf s_k(\mathbf a)$ be the anchor-compatible source. Theorem~\ref{thm:affine} gives
\begin{equation}
 \mathbf A_k(\mathbf a)\mathbf z+\mathbf b_k(\mathbf a)
 =F_k(\mathbf s_k(\mathbf a))+DF_k(\mathbf s_k(\mathbf a))(\mathbf z-\mathbf s_k(\mathbf a)),
 \label{eq:jet-tangent}
\end{equation}
whose error relative to $F_k(\mathbf z)$ is bounded by a constant times $\norm{\mathbf z-\mathbf s_k(\mathbf a)}^2$. Set $e=\norm{\mathbf a-\mathbf a^*}$. Smoothness of the compatible source gives
$\norm{\mathbf s_k(\mathbf a)-\mathbf z_{k-1}^*}\leq c_ke$ locally.
The first predecessor and initialized memory are exact, so its H-candidate error is $O(e^2)$. Local Lipschitzness transfers this error to the residual, gate, mixed output, and updated memory.

\begingroup\emergencystretch=3em
Inductively suppose the predecessor and memory errors entering edge $k$ are $O(e^2)$. Then $\norm{\mathbf z_{k-1}-\mathbf s_k(\mathbf a)}=O(e)$, the affine remainder is $O(e^2)$, and the exact implicit map transfers the predecessor error by a bounded factor. The H-candidate, mixed-state, and memory errors at edge $k$ are consequently $O(e^2)$. Taking a maximum over the finite horizon, and using norm equivalence in its finite-dimensional state spaces, proves Eq.~\eqref{eq:mixed-quadratic}. Choosing a common neighborhood for all updates and shrinking its radius until $C_Lr<1$ makes it invariant and proves quadratic convergence.
\par\endgroup
\end{proof}

\paragraph{Why mixing preserves the local argument.}
Local Lipschitz downstream maps transfer quadratic H-candidate perturbations without a first-order anchor term. Where differentiable, Eq.~\eqref{eq:jet-tangent} gives
\begin{equation}
 D_{\mathbf a}(\mathbf A_k(\mathbf a)\mathbf z+\mathbf b_k(\mathbf a))[\mathbf u]
 =D_{\mathbf a}\mathbf A_k(\mathbf a)[\mathbf u]\,(\mathbf z-\mathbf s_k(\mathbf a)).
 \label{eq:anchor-cancellation}
\end{equation}
Indeed $F_k(\mathbf s_k(\mathbf a))$ differentiates to $DF_k(\mathbf s_k(\mathbf a))D_{\mathbf a}\mathbf s_k(\mathbf a)[\mathbf u]$, which cancels the derivative of the subtracted source term. The physical propagation matrix can have expanding directions while direct anchor forcing vanishes at consistency. All residual, router, and memory paths remain in the subsequent propagation.

\paragraph{Regularity and finite precision.}
Hard-split jumps are treated in Appendix~\ref{app:finite-rf}. A finite soft mixture of locally Lipschitz leaves, weights, and features satisfies selector regularity, but still requires regular implicit branches and local initialization. Deployment evaluation error $\epsilon_{\rm fp}$ changes the estimate to
$e^+\leq C_Le^2+\epsilon_{\rm fp}$; it permits an invariant error band when
$C_Lr^2+\epsilon_{\rm fp}\leq r$, with $\epsilon_{\rm fp}$ bounding evaluation error throughout that neighborhood. Appendix~\ref{app:theory-evidence} measures point derivatives and finite-scale perturbation responses.

\subsection{Physical consistency of the converged mixed trajectory}
\label{app:physical-consistency}
\begin{corollary}[Consistency with smooth physical dynamics]
\label{prop:physical-consistency}
Compare a consistent mixed rollout $\mathbf z_k^*$ with samples $\mathbf z_k^\dagger$ of a smooth Hamiltonian system $H^\dagger$, sharing the initial state and fixed duration $Lh=T$. On a common convex neighborhood of trajectories and anchors, assume $h$-uniform bounds on the $C^3$ norms of $H,H^\dagger$ and implicit inverses, and $\ell_\infty$ vector-field error at most $\epsilon_H$. Set $\mathbf G_k=\diag(\mathbf g_k)$ for coordinate-expanded gates $\mathbf 0\preceq\mathbf g_k\preceq\mathbf 1$. If the correction $\mathbf c_k=\mathbf G_k\mathbf r_k+(\Id-\mathbf G_k)(\widehat{\mathbf z}_k^D-\mathbf z_k^\dagger)$ satisfies $\norm{\mathbf c_k}_\infty\leq h\eta_h$ at every edge, then, with $C_T$ independent of $h$,
\begin{equation}
 \max_{k\leq L}\norm{\mathbf z_k^*-\mathbf z_k^\dagger}_\infty
 \leq C_T(h+\epsilon_H+\eta_h).
 \label{eq:physical-consistency-main}
\end{equation}
\end{corollary}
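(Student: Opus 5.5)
The plan is a standard consistency-plus-stability argument (Lady Windermere's fan / discrete Gronwall) applied to the mixed recurrence at the fixed point of Theorem~\ref{thm:mixed-refinement}.

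\textbf{Step 1: rewrite the mixed recurrence.} At the consistent rollout, each H candidate satisfies the exact implicit symplectic-Euler relation Eq.~\eqref{eq:se} at the mixed predecessor. So $\widehat{\mathbf z}^H_k=F_h(\mathbf z^*_{k-1})$, where $F_h$ is the learned-$H$ symplectic-Euler map. Substituting into Eq.~\eqref{eq:mix} gives
\[
\mathbf z^*_k=\mathbf G_kF_h(\mathbf z^*_{k-1})+\mathbf G_k\mathbf r_k+(\Id-\mathbf G_k)\widehat{\mathbf z}^D_k .
\]
Subtracting $\mathbf z^\dagger_k=\mathbf G_k\mathbf z^\dagger_k+(\Id-\mathbf G_k)\mathbf z^\dagger_k$ and using the definition of $\mathbf c_k$ yields the error recurrence
\[
\mathbf e_k=\mathbf G_k\bigl(F_h(\mathbf z^*_{k-1})-\mathbf z^\dagger_k\bigr)+\mathbf c_k,\qquad \mathbf e_k=\mathbf z^*_k-\mathbf z^\dagger_k .
\]
Split the bracket as $[F_h(\mathbf z^*_{k-1})-F_h(\mathbf z^\dagger_{k-1})]+[F_h(\mathbf z^\dagger_{k-1})-F^\dagger_h(\mathbf z^\dagger_{k-1})]+[F^\dagger_h(\mathbf z^\dagger_{k-1})-\mathbf z^\dagger_k]$. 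Here $F^\dagger_h$ is symplectic Euler for $H^\dagger$ and $\Phi^\dagger_h$ is its exact flow, so $\mathbf z^\dagger_k=\Phi^\dagger_h(\mathbf z^\dagger_{k-1})$.

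\textbf{Step 2: bound the three pieces.}
\begin{itemize}
\item \emph{Stability of $F_h$.} Differentiating the implicit relation with $h$-uniform bounds on $H_{\mathbf{qq}},H_{\mathbf{qp}},H_{\mathbf{pp}}$ and on $\mathbf B^{-1}$ gives $DF_h=\Id+O(h)$. This holds because $\mathbf B^{-1}=\Id+O(h)$ and $\mathbf U,\mathbf V=O(h)$, consistent with the factorization in Theorem~\ref{thm:affine}. Convexity of the neighborhood then gives the $\ell_\infty$ Lipschitz bound $1+\Lambda h$.
\item \emph{Vector-field error.} The two implicit equations differ by $h$ times the vector-field error at the same arguments. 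A fixed-point/implicit-function comparison therefore gives $\norm{F_h-F^\dagger_h}_\infty\le C h\epsilon_H$.
\item \emph{Local truncation.} $H^\dagger\in C^3$ gives the standard local error $\norm{F^\dagger_h-\Phi^\dagger_h}\le Ch^2$ \citep{hairer2002gni}.
\end{itemize}
Since $\norm{\mathbf G_k}_\infty\le1$ (gates lie in $[0,1]$), the result is
\[
\norm{\mathbf e_k}_\infty\le(1+\Lambda h)\norm{\mathbf e_{k-1}}_\infty+C(h^2+h\epsilon_H)+h\eta_h .
\]

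\textbf{Step 3: discrete Gronwall.} With $\mathbf e_0=\mathbf 0$ and $kh\le T$, summing the recursion gives
\[
\norm{\mathbf e_k}_\infty\le \frac{e^{\Lambda T}-1}{\Lambda}\,\bigl(Ch+C\epsilon_H+\eta_h\bigr),
\]
which is Eq.~\eqref{eq:physical-consistency-main} with $C_T$ independent of $h$.

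\textbf{Main obstacle.} The hard part is ensuring every compared point stays in the common neighborhood, so the uniform Lipschitz and implicit-branch bounds actually apply. This is where the hypothesis on trajectories and anchors is used. I would state it as an assumption, as the corollary does, or close the argument with a bootstrap: for $h+\epsilon_H+\eta_h$ small, the bound itself keeps $\mathbf z^*_k$ within a tube around $\mathbf z^\dagger$.

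A secondary subtlety is the $h$-uniform expansion $DF_h=\Id+O(h)$ in $\ell_\infty$. It requires that the implicit inverse bounds be stated as $\norm{\mathbf B^{-1}-\Id}=O(h)$, which follows from the Neumann series once $h\norm{H_{\mathbf{qp}}}<1$.
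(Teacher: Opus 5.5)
Your proposal is correct and follows essentially the same route as the paper's proof: the fixed-point identification $\widehat{\mathbf z}^H_k=F_h(\mathbf z^*_{k-1})$ from Theorem~\ref{thm:mixed-refinement}, the error identity $\mathbf e_k=\mathbf G_k(F_h(\mathbf z^*_{k-1})-\mathbf z_k^\dagger)+\mathbf c_k$ with $\norm{\mathbf G_k}_\infty\le 1$, the one-step recurrence $E_k\le(1+ch)E_{k-1}+c(h^2+h\epsilon_H)+h\eta_h$, and discrete Gronwall. The differences are cosmetic. You obtain the paper's combined estimate $\norm{F_h-\Phi_h^\dagger}_\infty\le c(h^2+h\epsilon_H)$ by passing through the symplectic-Euler map $F_h^\dagger$ of $H^\dagger$, which the paper only asserts. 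Your Gronwall constant $(e^{\Lambda T}-1)/\Lambda$ is slightly sharper than the paper's $Te^{cT}$.
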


\paragraph{Uniform physical-step assumptions.}
For $0<h\leq h_0$ and $Lh=T$, the common convex neighborhood contains the true and mixed trajectories and the mixed-coordinate anchors used by the H equations. Derivatives of $H,H^\dagger$ through order three and inverses of the implicit blocks are bounded independently of $h$. The vector-field error is
$\sup_{\mathbf z}\norm{\mathbf J\nabla H(\mathbf z)-\mathbf J\nabla H^\dagger(\mathbf z)}_\infty\leq\epsilon_H$ on this neighborhood. The realized residual/diffusion correction satisfies
\begin{equation}
 \norm{\mathbf G_k\mathbf r_k+(\Id-\mathbf G_k)(\widehat{\mathbf z}_k^D-\mathbf z_k^\dagger)}_\infty
 \leq h\eta_h\quad\text{for every }k.
 \label{eq:physical-correction-consistency}
\end{equation}
The physical-consistency bound vanishes as $h,\epsilon_H,\eta_h$ tend to zero. Collision impulses need not vanish with $h$: Appendix~\ref{app:collisions} instead bounds their approximation defects.

\begin{proof}[Proof of Corollary~\ref{prop:physical-consistency}]
Let $\Phi_h^\dagger$ denote the exact physical flow. Uniform smoothness and regular implicit branches give the familiar estimates
\begin{equation}
 \norm{F_h(\mathbf z)-F_h(\mathbf z')}_\infty\leq(1+c h)\norm{\mathbf z-\mathbf z'}_\infty,\qquad
 \norm{F_h(\mathbf z)-\Phi_h^\dagger(\mathbf z)}_\infty\leq c(h^2+h\epsilon_H)
 \label{eq:physical-local}
\end{equation}
on the common neighborhood, for sufficiently small $h$. The second estimate combines symplectic Euler's local truncation error with the vector-field approximation error; bounded derivatives and implicit inverses make $c$ uniform.

At the fixed point, Theorem~\ref{thm:mixed-refinement} identifies the H candidate with $F_h(\mathbf z_{k-1}^*)$. Let
$\mathbf c_k=\mathbf G_k\mathbf r_k+(\Id-\mathbf G_k)(\widehat{\mathbf z}_k^D-\mathbf z_k^\dagger)$ and
$E_k=\norm{\mathbf z_k^*-\mathbf z_k^\dagger}_\infty$.
Subtracting the true state from the actual mixture gives
\begin{equation}
 \mathbf z_k^*-\mathbf z_k^\dagger
 =\mathbf G_k\bigl(F_h(\mathbf z_{k-1}^*)-\mathbf z_k^\dagger\bigr)+\mathbf c_k,
 \qquad
 E_k\leq(1+c h)E_{k-1}+c(h^2+h\epsilon_H)+h\eta_h.
 \label{eq:physical-recurrence}
\end{equation}
Here $\norm{\mathbf G_k}_\infty\leq1$, regardless of how the gate depends on the candidates or history. With $E_0=0$ and $Lh=T$, discrete Gronwall yields, uniformly over all prefixes,
\begin{equation}
 \norm{\mathbf Z_h^*-\mathbf Z_h^\dagger}_\infty
 \leq T e^{cT}\bigl(c h+c\epsilon_H+\eta_h\bigr)
 \leq C_T(h+\epsilon_H+\eta_h).
 \label{eq:physical-fixedpoint}
\end{equation}
For a fixed affine normalization $\mathcal N$ with linear-part norm bounded by $C_{\mathcal N}$, set
$\mathbf v_h^*=(\mathcal N(\mathbf Z_h^*)-\mathbf X_\tau)/d_\tau$ and
$\mathbf v_h^\dagger=(\mathcal N(\mathbf Z_h^\dagger)-\mathbf X_\tau)/d_\tau$.
Since $d_\tau\geq\varepsilon_\tau>0$,
\begin{equation}
 \norm{\mathbf v_h^*-\mathbf v_h^\dagger}_\infty
 \leq \frac{C_{\mathcal N}C_T}{\varepsilon_\tau}
           (h+\epsilon_H+\eta_h).
 \label{eq:physical-field}
\end{equation}
This proves the claimed regularized-field consistency.
\end{proof}

\paragraph{Interpretation of the assumptions.}
The correction condition is imposed on the realized fixed-point rollout and accounts for both experts' contributions. Zero residual and exact D give $\eta_h=0$ for every gate; $\mathbf G_k=\Id$ and zero residual also give $\eta_h=0$ independently of D. Appendix~\ref{app:theory-health-control} evaluates zero and vanishing model/correction budgets in analytic controls, complemented by finite-step implicit-residual and prediction measurements on the learned model.

For a finite refinement with equation defects $\mathbf d_k$, stable implicit inversion adds $c\norm{\mathbf d_k}$ to Eq.~\eqref{eq:physical-recurrence}. Thus, for $Lh=T$,
\begin{equation}
 \max_{k\leq L}E_k
 \leq e^{cT}\left[E_0+cT(h+\epsilon_H)
             +\sum_{k=1}^{L}\bigl(c\norm{\mathbf d_k}+\norm{\mathbf c_k}_\infty\bigr)\right].
 \label{eq:finite-physical-defect}
\end{equation}
This separates solver error, learned-H error, and correction error. Appendix~\ref{app:collisions} incorporates collision-update defects and event amplification into an event-aligned extension.

\subsection{Single-refinement and warm-start tracking}
\label{app:tracking-proof}
\label{sec:theory-tracking}
\begin{proposition}[Warm-start tracking]
\label{thm:tracking}
Let $\mathbf a^*(\mathbf y)$ be a continuous fixed-point branch. On common-radius neighborhoods, assume $\mathcal R_{\mathbf y}$ contracts distance to $\mathbf a^*(\mathbf y)$ by a uniform factor $\rho\in(0,1)$. Along committed RF conditions $\mathbf y_s$, define $\mathbf a_s=\mathcal R_{\mathbf y_s}^{K_s}(\mathbf a_{s-1})$, $K_s\geq1$, $e_s=\norm{\mathbf a_s-\mathbf a^*(\mathbf y_s)}$, and $d_s=\norm{\mathbf a^*(\mathbf y_s)-\mathbf a^*(\mathbf y_{s-1})}$. For sufficiently small $e_0$ and $\sup_s d_s$, refinements stay in these neighborhoods and
\begin{equation}
 e_s\leq\rho^{K_s}(e_{s-1}+d_s).
 \label{eq:tracking}
\end{equation}
If $\mathbf y_s\to\mathbf y_*$ within the branch domain, then $\mathbf a_s\to\mathbf a^*(\mathbf y_*)$.
\end{proposition}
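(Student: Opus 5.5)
The proof is a one-step perturbation estimate followed by an induction that keeps every iterate inside the contraction neighborhoods. Write $\mathcal N_r(\mathbf y)=\{\mathbf a:\norm{\mathbf a-\mathbf a^*(\mathbf y)}\leq r\}$ for the common-radius neighborhoods on which the contraction hypothesis holds:
\begin{equation*}
 \norm{\mathcal R_{\mathbf y}(\mathbf a)-\mathbf a^*(\mathbf y)}\leq\rho\norm{\mathbf a-\mathbf a^*(\mathbf y)}\quad\text{for }\mathbf a\in\mathcal N_r(\mathbf y).
\end{equation*}
Locally, this hypothesis is supplied by Theorem~\ref{thm:mixed-refinement}: the bound $C_L\norm{\mathbf a-\mathbf a^*}^2$ gives factor $\rho=C_Lr<1$ once $r$ is small, uniformly along the branch if $C_L$ is uniform.

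\textbf{Step 1 (the inequality).} Fix $s$ and suppose $e_{s-1}+d_s\leq r$. The triangle inequality gives
\begin{equation*}
 \norm{\mathbf a_{s-1}-\mathbf a^*(\mathbf y_s)}\leq\norm{\mathbf a_{s-1}-\mathbf a^*(\mathbf y_{s-1})}+\norm{\mathbf a^*(\mathbf y_{s-1})-\mathbf a^*(\mathbf y_s)}=e_{s-1}+d_s\leq r,
\end{equation*}
so $\mathbf a_{s-1}\in\mathcal N_r(\mathbf y_s)$. One application of $\mathcal R_{\mathbf y_s}$ brings the distance to at most $\rho(e_{s-1}+d_s)\leq r$, so the iterate stays in $\mathcal N_r(\mathbf y_s)$. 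Repeating this $K_s$ times yields \eqref{eq:tracking}.

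\textbf{Step 2 (staying in the neighborhoods).} I expect this to be the main obstacle, because the contraction is only local and the branch moves with $\mathbf y_s$. Let $\delta=\sup_s d_s$ and require
\begin{equation*}
 e_0\leq r-\delta,\qquad \rho\, r+\delta\leq r,\quad\text{i.e. } \delta\leq(1-\rho)r.
\end{equation*}
The induction hypothesis is $e_{s-1}\leq r-\delta$. Then $e_{s-1}+d_s\leq r$, so Step 1 applies and gives
\begin{equation*}
 e_s\leq\rho(e_{s-1}+d_s)\leq\rho r\leq r-\delta,
\end{equation*}
where the last step uses $\delta\leq(1-\rho)r$. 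The induction therefore closes, which is exactly the meaning of ``sufficiently small $e_0$ and $\sup_s d_s$.'' The hypothesis that the radius and $\rho$ are uniform in $\mathbf y$ is what makes these thresholds independent of $s$.

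\textbf{Step 3 (convergence).} Since $K_s\geq1$, the inequality gives $e_s\leq\rho e_{s-1}+\rho d_s$. If $\mathbf y_s\to\mathbf y_*$, continuity of the branch gives $\mathbf a^*(\mathbf y_s)\to\mathbf a^*(\mathbf y_*)$, and hence $d_s\to0$. A linear recurrence with contraction factor $\rho<1$ and vanishing forcing has vanishing solution. Concretely, unroll to
\begin{equation*}
 e_s\leq\rho^s e_0+\sum_{j\leq s}\rho^{s-j+1}d_j,
\end{equation*}
and split the sum at an index beyond which $d_j<\varepsilon$. The tail then contributes at most $\rho\varepsilon/(1-\rho)$, while the finitely many early terms decay geometrically, so $\limsup_s e_s\leq\rho\varepsilon/(1-\rho)$ for every $\varepsilon>0$. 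Hence $e_s\to0$, and finally
\begin{equation*}
 \norm{\mathbf a_s-\mathbf a^*(\mathbf y_*)}\leq e_s+\norm{\mathbf a^*(\mathbf y_s)-\mathbf a^*(\mathbf y_*)}\to0.
\end{equation*}
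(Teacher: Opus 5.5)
Your proof is correct and follows essentially the same route as the paper. Both use the triangle inequality to place $\mathbf a_{s-1}$ in the $r$-ball around $\mathbf a^*(\mathbf y_s)$, close an induction on an invariant radius, and unroll the linear recurrence with vanishing forcing $d_s\to0$. The only difference is bookkeeping: the paper takes $\delta=r\min\{\rho,1-\rho\}$ and keeps the invariant $e_s\leq\rho r$, whereas your invariant $e_s\leq r-\delta$ with $\delta\leq(1-\rho)r$ is equally valid and tolerates a slightly larger $e_0$.
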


\begin{proof}[Proof of Proposition~\ref{thm:tracking}]
The local radial contraction hypothesis is
\begin{equation}
 \norm{\mathcal R_{\mathbf y}(\mathbf a)-\mathbf a^*(\mathbf y)}\leq\rho\norm{\mathbf a-\mathbf a^*(\mathbf y)},
 \qquad \norm{\mathbf a-\mathbf a^*(\mathbf y)}\leq r,
 \label{eq:local-radial-contraction}
\end{equation}
with common $r>0$ and $0<\rho<1$. Choose $\delta=r\min\{\rho,1-\rho\}$. If $e_0\leq\delta$ and $\sup_s d_s\leq\delta$, then $e_{s-1}\leq\rho r$ implies $e_{s-1}+d_s\leq r$. Each refinement remains in the ball and contracts distance by $\rho$. Applying it $K_s$ times proves Eq.~\eqref{eq:tracking} and $e_s\leq\rho r$, hence invariance by induction.

Since $K_s\geq1$, for $s>s_0$ the recurrence gives
\begin{equation}
 e_s\leq\rho^{s-s_0}e_{s_0}
       +\frac{\rho}{1-\rho}\sup_{j>s_0}d_j.
\end{equation}
If $\mathbf y_s\to\mathbf y_*$ within the branch domain, continuity gives $\mathbf a^*(\mathbf y_s)\to\mathbf a^*(\mathbf y_*)$ and hence $d_s\to0$. Letting $s\to\infty$ and then $s_0\to\infty$ proves $e_s\to0$, and the triangle inequality gives $\mathbf a_s\to\mathbf a^*(\mathbf y_*)$.
\end{proof}
\paragraph{Deployed single refinement.}
Let $\mathbf a_s^*=\mathbf a^*(\mathbf y_s)$ and $\widehat{\mathbf Z}_s=\mathcal C_{\mathbf y_s}(\mathbf a_{s-1})$. For $K_s=1$, Theorem~\ref{thm:mixed-refinement} and the triangle inequality give
\[
 e_s+\norm{\widehat{\mathbf Z}_s-\mathcal C_{\mathbf y_s}(\mathbf a_s^*)}
 \leq C_s(e_{s-1}+d_s)^2,
 \qquad e_{s-1}+d_s\leq r.
\]
Here $r$ lies within the regular neighborhoods and $C_s$ is their quadratic bound. If $\sup_s C_s r\leq\rho<1$, this implies Eq.~\eqref{eq:tracking} with $K_s=1$. A bound $\epsilon_s$ on the sum of anchor and readout evaluation errors adds $\epsilon_s$ to the right-hand side. If the branch is locally $L_a$-Lipschitz, $d_s\leq L_a\norm{\mathbf y_s-\mathbf y_{s-1}}$. Thus one update per RF condition tracks the moving target across successive conditions. In Heun sampling this recurrence follows committed left anchors; each right readout uses the same incoming anchor at its own RF condition (Appendix~\ref{app:sampling}). Appendix~\ref{app:finite-rf} propagates field errors through the finite schedule and accounts for discrete transitions.

\subsection{Componentwise diffusion resets and observable memory}
\label{app:reset-proof}
\noindent\textbf{Full statement of Proposition~\ref{prop:reset}.} Fix an $L$-edge window. Let $\mathbf E_k$ collect physical and memory error bounds, with physical part $\mathbf e_k$. Assume $\mathbf E_k\preceq\mathbf M_k\mathbf E_{k-1}+\mathbf B_k\vdelta_k+\mathbf b_k$, all terms nonnegative, where $\vdelta_k$ bounds diffusion error. In the equality recurrence, let $\mathbf K_{k,j}$ map $\vdelta_j$ to $\mathbf e_k$, and $\mathbf v_k$ be the physical response with all $\vdelta_j=\mathbf0$. Set $C_D(L)=\max_{k\leq L}\sum_{j=1}^k\norm{\mathbf K_{k,j}}_\infty$ and $V_L=\max_{k\leq L}\norm{\mathbf v_k}_\infty$. If $\max_{j\leq L}\norm{\vdelta_j}_\infty\leq U_D$, then
\begin{equation}
 \max_{k\leq L}\norm{\mathbf e_k}_\infty\leq V_L+C_D(L)U_D.
 \label{eq:reset-bound-main}
\end{equation}

\paragraph{Physical-output response matrices.}
Fix the prediction window $1\leq k\leq L$ and its conditioning variables. Let $\mathbf P=[\Id\;\mathbf 0]$ select the physical part of the augmented error vector. For the equality comparison system $\mathbf x_k=\mathbf M_k\mathbf x_{k-1}+\mathbf B_k\vdelta_k+\mathbf b_k$, with $\mathbf x_0=\mathbf E_0$, define $\boldsymbol{\Pi}_{b:a}=\mathbf M_b\cdots\mathbf M_a$, with empty product $\Id$. Its diffusion-response matrices and remaining physical response are
\begin{equation}
 \mathbf K_{k,j}=\mathbf P\boldsymbol{\Pi}_{k:j+1}\mathbf B_j,\qquad
 \mathbf v_k=\mathbf P\boldsymbol{\Pi}_{k:1}\mathbf E_0+
 \sum_{j=1}^k\mathbf P\boldsymbol{\Pi}_{k:j+1}\mathbf b_j.
 \label{eq:reset-response-matrices}
\end{equation}
Here $\mathbf v_k$ retains incoming-state errors and all non-diffusion forcing, including residual/model errors, memory errors, and boundary jumps. For finite comparison matrices and forcing, the window-specific quantities
\begin{equation}
 C_D(L)=\max_{1\leq k\leq L}\sum_{j=1}^k\norm{\mathbf K_{k,j}}_\infty,
 \qquad V_L=\max_{1\leq k\leq L}\norm{\mathbf v_k}_\infty
 \label{eq:projected-reset-condition}
\end{equation}
separate the propagated diffusion-error budget from these remaining contributions. Their dependence on $L$, realized gates, and the coupled state-memory dynamics is retained.

\begin{proof}[Proof of Proposition~\ref{prop:reset}]
Nonnegativity permits repeated substitution in the assumed error recurrence, giving
\begin{equation}
 \mathbf e_k=\mathbf P\mathbf E_k\preceq\mathbf v_k+
       \sum_{j=1}^{k}\mathbf K_{k,j}\vdelta_j,\qquad 1\leq k\leq L.
 \label{eq:reset}
\end{equation}
Apply the induced sup norm, use $\norm{\vdelta_j}_\infty\leq U_D$ over this window, and maximize over its prefixes. Equation~\eqref{eq:projected-reset-condition} gives Eq.~\eqref{eq:reset-bound-main}.
\end{proof}

\paragraph{Complete state and realized gates.}
Let $\mathbf e_k=|\widehat{\mathbf z}_k-\mathbf z_k^\dagger|$ and let $\mathbf m_k$ collect errors in all propagated physical-time memory, including accumulators. For nonnegative candidate-error bounds
\begin{align}
 |\widehat{\mathbf z}_k^{\HR}-\mathbf z_k^\dagger|
 &\preceq\mathbf F_k^{zz}\mathbf e_{k-1}+\mathbf F_k^{zm}\mathbf m_{k-1}+\mathbf a_k,\\
 |\widehat{\mathbf z}_k^D-\mathbf z_k^\dagger|&\preceq\vdelta_k,\qquad
 \mathbf m_k\preceq\mathbf F_k^{mz}\mathbf e_{k-1}+\mathbf F_k^{mm}\mathbf m_{k-1}+\mathbf F_k^{mD}\vdelta_k+\mathbf d_k,
\end{align}
the complete mixed-state inequality has
\begin{equation}
 \mathbf M_k=
 \begin{bmatrix}\mathbf G_k\mathbf F_k^{zz}&\mathbf G_k\mathbf F_k^{zm}\\\mathbf F_k^{mz}&\mathbf F_k^{mm}\end{bmatrix},
 \qquad
 \mathbf B_k=\begin{bmatrix}\Id-\mathbf G_k\\\mathbf F_k^{mD}\end{bmatrix},\qquad
 \mathbf b_k=\begin{bmatrix}\mathbf G_k\mathbf a_k\\\mathbf d_k\end{bmatrix}.
 \label{eq:reset-blocks}
\end{equation}
The memory bounds include dependence on the current mixed state and gate; $\mathbf F_k^{mD}$ retains diffusion-to-memory transfer after substituting the current-state bound. Candidate modeling errors, anchor errors, and boundary jumps enter the additive terms or their full region bounds. This coordinatewise error decomposition evaluates the gate at its realized value. The off-diagonal blocks preserve cross-object and memory-to-state error transfer.

\paragraph{Feedback and finite response budgets.}
The factors $\mathbf G_k$ attenuate inherited physical-candidate contributions, while $\Id-\mathbf G_k$ admits the diffusion candidate's error. Their subsequent effects are transported by the full products in Eq.~\eqref{eq:reset-response-matrices}. This bounds the finite-window prediction error by the diffusion expert's error budget scaled by the window response, together with the propagated initial, model, and memory errors. The projection $\mathbf P$ measures the resulting physical response, including memory-to-state transfer. Finite-scale response measurements and complete-sampling memory interventions are reported in Appendix~\ref{app:reset-evidence}.

\paragraph{Scope across prediction windows.}
Within a fixed-condition field, D jointly predicts the whole window. The budget $U_D$ is conditioned on that window's incoming state and RF conditions. Chained prediction applies the same finite-window statement with the corresponding incoming-state, diffusion, and memory budgets at each window.

\section{Consistency Across Regular Collisions}
\label{app:collisions}
The smooth estimate in Corollary~\ref{prop:physical-consistency} extends by composing smooth propagation with collision updates. Errors are measured after aligning corresponding event times and matching pre- and post-event states. The comparison state includes recurrent memory and conditioning variables whenever they affect subsequent updates.

\begin{proposition}[Finite-event consistency]
\label{prop:collision-consistency}
Fix a duration $T$ and trajectories with the same ordered sequence of $M$ isolated events. On common neighborhoods, assume each smooth arc of duration $t$ transfers incoming error $E$ to at most $e^{ct}(E+Ct\xi_h)$, where $c,C\geq0$ and $\xi_h=h+\epsilon_H+\eta_h$. At event $j$, assume the exact aligned update $\Psi_j$ is $\Lambda_j$-Lipschitz and its approximation satisfies $\sup_{\mathbf x}\norm{\widetilde\Psi_j(\mathbf x)-\Psi_j(\mathbf x)}\leq\delta_j$. All compared states remain in these neighborhoods. Writing $A_j=\max\{1,\Lambda_j\}$, the aligned terminal error obeys
\begin{equation}
 E_T\leq e^{cT}\left[
 \left(\prod_{j=1}^{M}A_j\right)(E_0+CT\xi_h)
 +\sum_{j=1}^{M}\left(\prod_{i=j+1}^{M}A_i\right)\delta_j\right].
 \label{eq:collision-bound}
\end{equation}
The corresponding prefix bound holds at every aligned time. If the event count and constants are uniform, the initial error, $\xi_h$, and all event defects tending to zero imply event-aligned convergence.
\end{proposition}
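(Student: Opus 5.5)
The plan is an induction over the $M$ events, alternating smooth-arc transfer with event updates. Let $0=s_0<s_1<\dots<s_M\le s_{M+1}=T$ be the aligned event times and $t_j=s_{j}-s_{j-1}$ the arc durations, so that $\sum_{j=1}^{M+1}t_j=T$. Write $E^-_j$ and $E^+_j$ for the aligned errors just before and just after event $j$. The goal is the cumulative claim
\begin{equation}
 E^+_j\le e^{c s_j}\Bigl[\Bigl(\prod_{i=1}^{j}A_i\Bigr)(E_0+Cs_j\xi_h)+\sum_{i=1}^{j}\Bigl(\prod_{l=i+1}^{j}A_l\Bigr)\delta_i\Bigr].
\end{equation}
Applying the arc hypothesis once more on the final arc $[s_M,T]$ then gives Eq.~\eqref{eq:collision-bound}. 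The same argument stopped at any intermediate aligned time gives the prefix bound.

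\emph{Event step.} I split
\[
\widetilde\Psi_j(\widetilde{\mathbf x})-\Psi_j(\mathbf x)=\bigl[\widetilde\Psi_j(\widetilde{\mathbf x})-\Psi_j(\widetilde{\mathbf x})\bigr]+\bigl[\Psi_j(\widetilde{\mathbf x})-\Psi_j(\mathbf x)\bigr].
\]
The uniform defect bound controls the first bracket by $\delta_j$, and the Lipschitz bound controls the second by $\Lambda_j E^-_j$. Hence $E^+_j\le \Lambda_jE^-_j+\delta_j\le A_jE^-_j+\delta_j$. Using $A_j=\max\{1,\Lambda_j\}\ge1$ lets all products be monotone, so I never need to track whether an event contracts.

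\emph{Arc step.} The hypothesis gives
\[
E^-_{j+1}\le e^{ct_{j+1}}\bigl(E^+_j+Ct_{j+1}\xi_h\bigr).
\]
Substituting the induction hypothesis produces three kinds of terms.
\begin{itemize}
\item The exponentials multiply to $e^{cs_{j+1}}$.
\item The new forcing $Ct_{j+1}\xi_h$ appears without any event factors in front of it. Since $\prod A_i\ge1$ and $e^{ct}\ge1$, I can bound it above by $\bigl(\prod_{i\le j+1}A_i\bigr)Ct_{j+1}\xi_h$.
\item Earlier forcing carries the product of the later $A_l$.
\end{itemize}
Merging the forcing terms gives $\bigl(\prod A_i\bigr)C(s_j+t_{j+1})\xi_h$, using that $\sum t$ telescopes. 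Each $\delta_i$ picks up exactly the factors $A_l$ of the events after it, which is the stated sum.

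The main obstacle is this bookkeeping. Forcing generated on late arcs is multiplied by fewer $A_l$ than the bound shows, and $E_0$ and $CT\xi_h$ share one prefactor only after this deliberate overestimate, so I must check that it stays consistent.

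The neighborhood assumptions guarantee that every compared state lies where the arc and Lipschitz bounds apply, and "aligned" means errors are measured between matched pre- and post-event states. For the convergence claim, take uniform $M$, $c$, $C$, $\Lambda_j$ and $T$. The right-hand side is then a fixed linear combination of $E_0$, $\xi_h$ and the $\delta_j$, so it tends to zero when they do. The arc hypothesis itself can be justified from Corollary~\ref{prop:physical-consistency}'s Gronwall recurrence, but here it is simply assumed.
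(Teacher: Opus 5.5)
Your proposal is correct and follows the paper's own proof. You use the same add-and-subtract step at each event to get $E_j^+\le\Lambda_jE_j^-+\delta_j$, compose it with the arc bounds, and use $A_j\ge1$ and $c\ge0$ to absorb later forcing and exponentials into the full products; your explicit induction hypothesis with the factor $e^{cs_j}$ just spells out the bookkeeping that the paper compresses into bounding every smooth amplification by $e^{cT}$ and summing arc durations to $T$.
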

\begin{proof}
At event $j$, add and subtract $\Psi_j$ evaluated at the approximate incoming state to obtain $E_j^+\leq\Lambda_j E_j^-+\delta_j$. Compose this recurrence with the assumed smooth-arc bounds. Every smooth amplification is bounded by $e^{cT}$; replacing each event factor by $A_j\geq1$ bounds all earlier error contributions by the full product. Smooth arc durations sum to at most $T$, giving Eq.~\eqref{eq:collision-bound}. Truncation gives the prefix statement, and uniform bounds give convergence.
\end{proof}
This formulation allows inelastic event updates. Regularity means isolated, uniquely resolved events with a fixed local ordering and bounded aligned sensitivity. Event alignment compares both sides of each momentum jump; on smooth portions, a timing discrepancy $\theta$ adds at most $V\theta$ to the physical-state error when the flow speed is bounded by $V$. The event defects measure accuracy of the effective mixed update relative to the physical collision law.

\subsection{Collision-update and perturbation evidence}
We evaluate the first contact onset within the first 145 edges of each eligible HamiBalls-1 source, giving 510 predecessor states and two sampling noises. Physical one-edge perturbations use three directions per state and normalized-state RMS radii $10^{-6},10^{-5},10^{-4},10^{-3}$. Event ordering changes in 0, 1, 11, and 64 of the 1,530 source--direction cases at these radii. Finite perturbations can therefore cross event boundaries; we assess their prediction cost against separately simulated perturbed targets.

For all 510 sources, one paired direction is evaluated over complete 48-edge windows at the three larger radii, retaining both signs and both noises. The unperturbed simulator replay agrees with the stored 48-edge truth to MSE $1.30\times10^{-14}$. Each perturbed prediction is evaluated against its own simulated trajectory. Table~\ref{tab:collision-tails} takes the positive part of each paired MSE increase before averaging, so improvements cannot cancel adverse changes. The unperturbed window MSE is 0.080099.
\begin{table}[htbp]
\centering
\caption{Matched-target 48-edge collision perturbations on 510 sources. Positive cost is the mean casewise positive MSE increase divided by baseline MSE; intervals resample sources with paired signs and noises retained.}
\label{tab:collision-tails}
\begin{tabular}{lrrr}
\toprule
Radius & Positive cost (\%) & 95\% interval (\%) & Maximum MSE increase \\
\midrule
$10^{-5}$ & 0.0416 & [0.0221, 0.0684] & 0.00770 \\
$10^{-4}$ & 0.2098 & [0.1528, 0.2786] & 0.02501 \\
$10^{-3}$ & 0.9998 & [0.8310, 1.1957] & 0.04176 \\
\bottomrule
\end{tabular}
\end{table}

We additionally evaluate every condition across all three directions and four radii with changed event ordering or physical paired-response gain at least 10: 139 conditions, yielding 556 signed, two-noise predictions. Matched 48-edge targets give a positive cost of 0.9113\% of this subset's baseline MSE (95\% interval [0.5001\%, 1.7037\%]); the maximum individual MSE increase is 0.04542. Within the event-order-changing subset alone, positive cost is 1.2491\% [0.7046\%, 2.4421\%]. These conditional tail measurements quantify effects outside the unchanged-order comparisons without extending Proposition~\ref{prop:collision-consistency} to arbitrary event sequences.

\section{Empirical Checks of the Theoretical Mechanisms}
\label{app:theory-evidence}
These HamiBalls-1 checks connect local solver properties to deployed $K=1$ sampling: implicit regularity, refinement from deployed anchors, and complete predictions after boundary interventions. They assess Theorem~\ref{thm:mixed-refinement} and Appendix~\ref{app:finite-rf}'s finite-schedule effects.

\begin{table}[!htb]
\centering\small
\caption{Correspondence between theoretical ingredients and supporting evidence. Analytic controls test smooth-system hypotheses; deployed-model diagnostics assess finite-step behavior and complete-sampling effects.}
\label{tab:theory-evidence-map}
\begin{tabularx}{\textwidth}{@{}>{\raggedright\arraybackslash}p{0.25\textwidth}>{\raggedright\arraybackslash}X>{\raggedright\arraybackslash}X@{}}
\toprule
Theoretical ingredient & Proof or direct check & Complementary empirical evidence\\
\midrule
PLAS-map symplecticity and implicit regularity & Symplecticity proof; measured $\mathbf B$ spectra and affine defects & 6,881,280 deployed edge maps; learned-H local approximation order\\
Regular local refinement & Fixed-point and quadratic-remainder proof; nonlinear mixed control & Multi-start refinement and independently checked point Jacobians\\
Smooth-model and correction budgets & Conditional consistency proof; physical-step refinement control & First-order convergence with exact H, vanishing model error, and quadratic corrections\\
Single-refinement deployment & $K=1$ tracking bound; finite-schedule output decomposition & $N=8,12,20$ checks; 512-source, 192-edge refinement comparisons\\
Finite-window feedback and memory & Nonnegative response expansion with a conditional D budget & Physical-coordinate and internal-memory responses; complete RF interventions\\
Regular collision sequences & Finite-event composition bound & 510 contact onsets; matched perturbed targets and event-boundary tails\\
\bottomrule
\end{tabularx}
\end{table}

\subsection{Implicit-map health and smooth consistency controls}
\label{app:theory-health-control}
We measure the implicit block $\mathbf B$ along deployed predictions for 512 sources and two noise realizations over 192 edges. Four successive windows continue from their own predicted endpoints, covering 6,881,280 edge-map evaluations. The minimum singular value is 0.995916 and the maximum condition number is 1.005239. FP64 evaluation of the FP32-generated matrices gives a maximum entrywise symplectic defect of $2.80\times10^{-7}$. Instrumentation leaves the predictions unchanged. These measurements support implicit regularity at the evaluated anchors and the symplectic construction of PLAS maps.

We also test the local affine remainder using the frozen learned H. From 64 sources, two noises, and four positions per predicted window, we obtain 512 states. In FP64, we compare the affine tangent at the anchor-compatible source with an independently iterated implicit step at normalized-state RMS radii $10^{-1},3\times10^{-2},10^{-2},3\times10^{-3},10^{-3}$. The maximum implicit-equation residual is $3.14\times10^{-16}$; fitted error orders range from 1.9492 to 2.0545, with median 2.0020, consistent with Theorem~\ref{thm:affine}'s quadratic remainder.

An independent physical-step control uses the nonlinear nonseparable Hamiltonian
\begin{equation}
 H^\dagger(q,p)=\tfrac12(q^2+p^2)+0.1\sin(q)\sin(p).
\end{equation}
Over duration $T=1.6$, we use 24, 48, 96, 192, and 384 physical steps from three initial states, $(0.7,-0.4)$, $(-1,0.8)$, and $(1.4,1)$. Each implicit step is solved by affine refinement to an anchor-increment tolerance of $10^{-13}$, with a high-accuracy DOP853 solution as the physical reference. Besides exact H, we test gradient error $h(0.1,-0.07)$ and an additive per-step correction $h^2(0.03,-0.02)$. These instantiate the model-error and correction budgets in Corollary~\ref{prop:physical-consistency}.

\begin{table}[htbp]
\centering\small
\caption{Physical-step convergence in the analytic Hamiltonian control. Slopes fit the maximum trajectory error against $h$ over five resolutions; ranges span all three initial states.}
\label{tab:physical-step-control}
\begin{tabular}{@{}lcc@{}}
\toprule
Setting & Budget scaling & Observed order\\
\midrule
Exact Hamiltonian & $\epsilon_H=\eta_h=0$ & 0.9956--1.0060\\
Vanishing gradient error & $\epsilon_H=O(h)$ & 0.9973--1.0065\\
Quadratic step correction & $\eta_h=O(h)$ & 0.9953--1.0066\\
\bottomrule
\end{tabular}
\end{table}
These controls instantiate Corollary~\ref{prop:physical-consistency}'s small-step hypotheses and exhibit first-order consistency. The fixed-step learned-model checks assess implicit residuals, refinement, and complete predictions.

\subsection{Protocol and physical-equation consistency}
All model weights, tree structure, and calibrated coefficients are unchanged. We use Ours with current gradients and full Hessians at every anchor. Fixed-condition refinement freezes the RF state and time, diffusion predictions and tokens, observable attributes, and model parameters, and resets physical-time memory for every complete scan. It repeatedly updates the actual anchor in Eq.~\eqref{eq:refinement-state}. Physical anchors are passed directly between iterations.

The $N=20$ checks cover 512 sources, two noises, and seven first-window RF fields: left/right evaluations at intervals 2, 10, and 18 (zero-based), and the final field. Interval 2 is the first H-bearing interval after diffusion-only initialization. Each uses its deployed anchor and two opposite perturbations of normalized-anchor L2 radius $10^{-4}$. The $N=8,12$ checks use eight sources. Convergence requires anchor and clean sup increments at most $10^{-5}$ for 13 consecutive iterations and clean drift at most $5\times10^{-5}$ over nine iterations, within 64 refinements.

\begin{table}[htbp]
\centering\small
\caption{Fixed-condition checks within 64 refinements. Counts include both noises. \emph{All starts} requires all three initializations to meet the numerical criterion.}
\label{tab:refinement-checks}
\begin{tabular}{@{}rrrrr@{}}
\toprule
$N$ & Sources & Conditions & Warm start & All starts\\
\midrule
8  & 8  & 112 & 108 & 107\\
12 & 8  & 112 & 111 & 111\\
20 & 512 & 7168 & 7030 & 7015\\
\bottomrule
\end{tabular}
\end{table}

\begin{table}[htbp]
\centering\small
\caption{Normalized RMS implicit-equation residuals after the second refinement. This separate panel uses eight sources for $N=8,12$ and 32 for $N=20$, with two noises and seven fields.}
\label{tab:implicit-defects}
\begin{tabular}{@{}rrrrr@{}}
\toprule
$N$ & Median $q$ & Maximum $q$ & Median $p$ & Maximum $p$\\
\midrule
8  & $8.55\times10^{-8}$ & $1.10\times10^{-7}$ & $8.51\times10^{-8}$ & $1.23\times10^{-7}$\\
12 & $8.45\times10^{-8}$ & $1.21\times10^{-5}$ & $8.54\times10^{-8}$ & $4.56\times10^{-6}$\\
20 & $8.41\times10^{-8}$ & $3.06\times10^{-7}$ & $8.02\times10^{-8}$ & $2.09\times10^{-7}$\\
\bottomrule
\end{tabular}
\end{table}

A nonlinear Hamiltonian control includes nonzero residuals, dynamic gates, and recurrent memory. It gives measured local refinement order 1.9799 with fresh Hessians and 0.9933 with a fixed lagged Hessian. The fixed-point discrepancy from an independent implicit mixed rollout is at most $1.11\times10^{-16}$ in both cases. This control tests the refinement mechanism in Theorem~\ref{thm:mixed-refinement}.

Using the norm of the signed gated correction in Eq.~\eqref{eq:physical-correction-consistency} reduces its measured budget by about 4--10\% relative to the sum of component norms. This quantity enters Corollary~\ref{prop:physical-consistency}'s physical-consistency bound under its smoothness and component-consistency assumptions.

\subsection{Actual perturbation stability and numerical boundaries}
A targeted study uses four sources, seven $N=20$ fields, warm and terminal centers, three directions, and normalized-anchor L2 radii $10^{-2},10^{-3},10^{-4}$. Paired gain divides the distance between opposite perturbation outputs by their initial distance in the same 950-dimensional coordinates. Table~\ref{tab:anchor-gains} reports the responses after eight refinements.

\begin{table}[htbp]
\centering\small
\caption{Paired anchor-distance gain after eight refinements, with 84 source--field--direction cases per row.}
\label{tab:anchor-gains}
\begin{tabular}{@{}lrrrr@{}}
\toprule
Center & Radius & Median gain & Maximum gain & Gain $\geq1$\\
\midrule
Warm & $10^{-2}$ & 0.000221 & 0.2204 & 0/84\\
Warm & $10^{-3}$ & 0.002228 & 0.00518 & 0/84\\
Warm & $10^{-4}$ & 0.02429 & 22.0401 & 2/84\\
Terminal & $10^{-2}$ & 0.000225 & 0.2204 & 0/84\\
Terminal & $10^{-3}$ & 0.002186 & 0.00446 & 0/84\\
Terminal & $10^{-4}$ & 0.02293 & 22.0489 & 2/84\\
\bottomrule
\end{tabular}
\end{table}

The expanding cases occur in the final field with different leaves. Replaying all expanding directions gives maximum full-trajectory RMS response $1.42\times10^{-4}$ and positive MSE increase $2.24\times10^{-7}$, quantifying the prediction costs associated with these gain ratios.

In the 512-source panel, batches with an unmet 64-refinement criterion are extended to 256. There remain 137 warm-start and 150 any-start failures among 7,168 conditions; 12 conditions have cross-start anchor or clean sup distances above $10^{-4}$. The largest last-nine clean span is 0.1512. Three sources with larger spans or endpoint discrepancies are additionally checked to 1,024 refinements and retain finite oscillations (last-16 spans $0.00103$--$0.00819$). These cases fall short of the specified refinement criterion; their complete predictions are included in the flagged-source analysis below.

Actual-path tree instrumentation reproduces the deployed leaf selections without changing predictions. Two exact-threshold cells occur in one source, which is also retained in the endpoint analysis. We therefore measure boundary effects through complete sampling, including subsequent leaf selections and state updates.

\paragraph{Point-derivative verification.}
Differentiating the anchor-dependent Hessian requires third derivatives of H. The calculation expresses LayerNorm through explicit centering, variance, and reciprocal square root, and uses FP64 arithmetic. Independent finite differences check these point derivatives.

All 42 directions (14 centers, three directions each) have a resolved scale with unchanged leaves. Across five scales, the best resolved absolute discrepancy per direction has median $2.42\times10^{-11}$ and maximum $1.91\times10^{-9}$; maximum relative discrepancy is $6.61\times10^{-5}$. Four full $950\times950$ Jacobians for one source give:
\begin{center}
\small
\begin{tabular}{@{}lr@{}}
\toprule
Point ($N=20$) & Maximum singular value\\
\midrule
Interval 2, left, warm & 0.0155268\\
Interval 2, left, terminal & 0.0000306129\\
Interval 10, left, warm & 0.000535277\\
Interval 18, left, warm & 0.000759899\\
\bottomrule
\end{tabular}
\end{center}
Independent matrix/JVP checks agree to relative discrepancy $3.25\times10^{-10}$. These are FP64 point derivatives; FP64--FP32 anchor sup distances across 14 batch centers are 0.00315--0.01247, with some leaf changes. The deployed FP32 path is assessed by the complete-sampling tests below.

\subsection{Single-refinement deployment and tail effects}
\label{app:theory-tails}
We test whether extra within-field refinement changes complete predictions beyond the updates already carried across RF intervals. On 512 sources with $N=20$ and two paired noises, all settings use the same accepted-left anchor schedule. Deployment uses $K=1$; numerical references are computed with 8 and 16 within-field refinement updates. First-window MSEs for $K=1,8,16$ are 0.0458448, 0.0458454, and 0.0458455. Table~\ref{tab:theory-endpoints} compares $K=1,16$ over 192 edges, continuing each window from its own predicted endpoint without truth resets.

\begin{table}[htbp]
\centering\small
\caption{192-edge complete predictions with two paired noises. Flagged sources have at least one unmet local criterion at 64 or extended refinement, cross-start disagreement, or exact tree threshold. Positive cost is $100\,\mathbb E[(\ell_1-\ell_{16})_+]/\mathbb E[\ell_{16}]$, taking the positive part before averaging source--noise cases.}
\label{tab:theory-endpoints}
\begin{tabular}{@{}lrrrrr@{}}
\toprule
Sources & Count & $K=1$ MSE & $K=16$ MSE & PhysiFormer & Positive cost (\%)\\
\midrule
All & 512 & 0.250732 & 0.250635 & 0.340289 & 0.1903\\
Flagged & 97 & 0.322510 & 0.321508 & 0.412714 & 0.4744\\
Other & 415 & 0.233955 & 0.234069 & 0.323361 & 0.0990\\
\bottomrule
\end{tabular}
\end{table}

The 192-edge mean $K=1$ minus $K=16$ MSE is $9.76\times10^{-5}$ (source-cluster bootstrap 95\% interval $[-1.55,3.63]\times10^{-4}$). Positive cost is 0.1903\% [0.1120\%, 0.2845\%] overall and 0.4744\% [0.2030\%, 0.8123\%] on flagged sources. The latter group's mean difference is 0.001002 [0.000035, 0.002167]: extra refinement has a small measurable benefit there, while its aggregate prediction advantage over PhysiFormer remains. Bootstrap resampling retains paired noises within sources; grouping by any flagged condition retains both noises.

The largest casewise $K=1$ excess MSE is 0.07138, and some flagged cases have higher error than PhysiFormer. The source with the largest fixed-condition span has deployed MSEs 0.284685 and 0.135125 across the two noises, versus 0.285690 and 0.135129 for $K=16$. For this source, the deployed and additional-refinement losses are closely matched. Overall, warm-started $K=1$ approaches additional-refinement prediction accuracy beyond the updates already carried across RF intervals.

\paragraph{Endpoint sensitivity across RF intervals.}
In a separate 32-source, two-noise panel, we evaluate the 19 complete-interval hybrids in Eq.~\eqref{eq:hybrid-decomposition} for three batches covering typical behavior, warm-start failures, and the largest endpoint discrepancy. In the last case, the $K=1$ versus $K=8$ endpoint RMS distance is 0.039452, bounded by the propagated-increment sum 0.039702; MSE decreases from 0.025498 to 0.023068 (PhysiFormer: 0.014589). Replacing interval 5's refinement initially changes the clean output by only $5.15\times10^{-6}$ RMS without a leaf change, but interval 17's right call changes four leaves and increases the clean difference to 0.006497. Thus, delayed amplification can arise through subsequent discrete transitions.

\paragraph{Anchor interventions through complete sampling.}
On all 512 sources and both noises, we perturb the incoming anchor at one of seven fields by both signs at normalized-anchor L2 radii $10^{-4},10^{-3},10^{-2}$, then recompute all subsequent RF fields. Across 43,008 predictions, positive MSE increases average 0.001674\%, 0.001830\%, and 0.002354\% of the unperturbed 48-edge MSE at the three radii; 38, 42, and 42 cases change leaves. The maximum casewise increase is 0.003636.

All 152 interventions with a leaf change or 48-edge response RMS at least $10^{-4}$ are continued to 192 edges using paired future noise. Unperturbed controls reproduce the original predictions exactly. Positive cost is 0.3245\% of this subset's baseline MSE (source-cluster 95\% interval [0.0380\%, 1.0226\%]); the maximum individual MSE increase is 0.02411. These complete-prediction costs quantify observed boundary tails without assuming contraction across every discrete transition.

\subsection{Diffusion-reset responses with full memory}
\label{app:reset-evidence}
We perturb all 20 physical coordinates individually by $\pm10^{-4}$ and $\pm10^{-3}$ at zero-based edges 0, 16, and 32 in fixed 48-edge fields. The panel contains eight sources across 12 batch fields. D predictions and affine jets are frozen, while residual and memory updates remain active. Each policy is compared to its own unperturbed rollout. Three policies are evaluated: live routing; replay of the unperturbed live gates; and $g=1$ throughout. The 34,560 records comprise 11,520 matched perturbations per policy.

\begin{table}[htbp]
\centering\small
\caption{Finite-field reset-response tails. Sup gain divides maximum coordinate response by the injected coordinate amplitude. Response energy is the mean squared subsequent normalized-state response.}
\label{tab:reset-tails}
\begin{tabular}{@{}lrrr@{}}
\toprule
Routing policy & Sup gain $>10$ (\%) & Maximum sup gain & Response energy\\
\midrule
Live & 0.590 & 1460 & $1.55\times10^{-6}$\\
Fixed live gates & 0.286 & 2604 & $2.61\times10^{-7}$\\
All $H+r$ & 2.352 & 8448 & $3.26\times10^{-5}$\\
\bottomrule
\end{tabular}
\end{table}

Live routing gives a smaller subsequent RMS response than all-$H+r$ in 95.23\% of the matched perturbations (source-cluster 95\% interval [92.11\%, 98.00\%]); the terminal fraction is 94.07\%. The ratio of total response energies is 0.04751, with source-cluster interval [0.00444, 0.11168]. Live cases with sup gain above 10 account for 99.29\% of its perturbation-response energy, whose absolute mean remains $1.55\times10^{-6}$. The response is therefore concentrated in a small tail while aggregate energy is substantially attenuated relative to all-$H+r$.

These fixed-field perturbations preserve the complete memory path and normalize RMS gain by the injected coordinate amplitude. They quantify the finite-window response mechanism in Proposition~\ref{prop:reset}. Complete-sampling comparisons in Appendix~\ref{app:theory-tails} and the internal-memory interventions below measure aggregate end-to-end effects.

\subsection{Internal-memory responses through complete RF sampling}
\label{app:memory-rf-evidence}
We additionally intervene on eight internal memory blocks: the recurrent hidden state, previous gate, previous H candidate, and five accumulated context blocks. The same eight-source panel covers RF intervals 2, 10, and 18, with both left and right calls. At physical edge 16, we perturb one block and recompute the remaining RF process, including subsequent D predictions. Each block uses both signs at amplitude $10^{-3}$, giving 768 source--field--block--sign records with one sampling-noise realization. Hidden-state perturbations use alternating coordinate signs; previous-gate perturbations are clipped to $[0,1]$. Context perturbations scale each entry by $1+|\mathrm{entry}|$, with energy and mass entries kept nonnegative. Zero-intervention replays agree with the ordinary complete prediction within $10^{-5}$ maximum coordinate error.

\begin{table}[htbp]
\centering\small
\caption{Internal-memory perturbations followed by complete RF sampling. Each block has 96 paired cases. Response RMS compares the resulting 48-edge predictions with the unperturbed predictions. MSE changes use a common unperturbed aggregate MSE of 0.0371873; maxima retain individual tails.}
\label{tab:memory-rf}
\begin{tabular}{@{}lrrr@{}}
\toprule
Perturbed block & Maximum RMS & Aggregate MSE change (\%) & Maximum MSE increase\\
\midrule
Recurrent hidden state & 0.000142 & +0.000012 & $1.64\times10^{-7}$\\
Previous gate & 0.000143 & +0.000017 & $3.30\times10^{-7}$\\
Previous H candidate & 0.000143 & +0.000082 & $2.67\times10^{-6}$\\
Contact-provenance context & 0.000142 & +0.000024 & $1.72\times10^{-7}$\\
Cumulative reset & 0.023837 & +0.003641 & $1.26\times10^{-4}$\\
Momentum reset energy & 0.023837 & +0.003445 & $1.26\times10^{-4}$\\
Position reset energy & 0.023906 & +0.059833 & $9.23\times10^{-4}$\\
Reset mass & 0.000142 & +0.000011 & $4.15\times10^{-7}$\\
\bottomrule
\end{tabular}
\end{table}
Across paired memory interventions on eight sources, the largest aggregate MSE increase is approximately 0.060\%, and the maximum individual response distance is 0.02391. These measurements quantify the aggregate prediction cost and response tails, providing empirical support for the finite-window memory-response mechanism described in Proposition~\ref{prop:reset}.

\paragraph{Reproducibility.}
Aggregate prediction losses use FP64 subtraction and accumulation. Model weights remain fixed throughout these experiments. The full-refresh/sparse-refresh comparison is reported in Appendix~\ref{app:lagged-hessian}.

\section{Architecture and Training Details}
\subsection{Architecture and Forward Computation}
\label{app:implementation}
\subsubsection{State conventions and architecture}
For spatial dimension $d$, each active object has $2d$ canonical state coordinates and each position or momentum readout has $d$ outputs; HamiBalls-1 uses $d=2$ and HamiBalls-2 uses $d=3$.

With $n$ active objects, $\mathbf q,\mathbf p\in\R^{nd}$ and the full-system canonical state has dimension $2nd$. All Hamiltonian derivatives in Eq.~\eqref{eq:jet} are taken in physical canonical coordinates; fixed scaling converts candidates and corrections to and from the normalized trajectory representation used by the diffusion model and losses. Under a general normalization, the symplectic form becomes the pulled-back matrix $\mathbf S^\top\mathbf J\mathbf S$, as derived in Appendix~\ref{app:affine-proof}. The fixed normalization statistics define the conversion between physical and normalized coordinates.

\begin{table}[htbp]
\centering\small
\caption{HamiBalls-1 architecture and learned-size accounting. Counts for the router include its leaf-affine terms. Linear coefficients and other learned state are reported separately from neural parameters.}
\label{tab:architecture}
\begin{tabularx}{\linewidth}{@{}lXr@{}}
\toprule
Component & Architecture or state & Count\\
\midrule
Diffusion expert & Width 128; 4 blocks; 4 heads; 2 registers; feed-forward width 256 & 1,159,172\\
Hamiltonian model & Width 16; 2 object-interaction blocks; 4 heads; scalar readout & 4,464\\
Shared residual & SiLU MLP, $154\to32\to32\to4$ & 6,148\\
Router & 12-dimensional projection and physical-time GRU; width-16 $q/p$ offset heads; leaf-affine corrections & 7,588\\
\midrule
Neural subtotal & All four components & 1,177,372\\
Ridge readouts & Fitted linear readouts & 1,056\\
Combined scalars & Neural parameters plus fitted linear coefficients & 1,178,428\\
Other learned state & CART tree (maximum depth four; capacity eight leaves); thresholds; feature statistics & Separate\\
\midrule
PhysiFormer & Diffusion-only backbone; feed-forward width 265 & 1,173,068\\
\bottomrule
\end{tabularx}
\end{table}

The diffusion backbone alternates spatial, temporal, object, and temporal attention blocks. It uses RMS normalization, query/key normalization, SwiGLU feed-forward layers, RF-time modulation, and rotary positional encoding of physical time. There is one state token per object per frame; within-object vertex attention is therefore a singleton operation, whereas across-object attention is nontrivial. The PhysiFormer baseline is trained separately in the same model family with a larger feed-forward width. Our phase-space implementation adapts the trajectory-diffusion architecture of the original PhysiFormer study~\citep{chen2026physiformer}.

\begin{table}[htbp]
\centering\small
\caption{HamiBalls-2 architecture and learned-size accounting. Totals include neural parameters and fitted ridge coefficients; tree structure, thresholds, and normalization statistics are additional state.}
\label{tab:hami2-parameters}
\begin{tabularx}{\linewidth}{@{}lXr@{}}
\toprule
Component & Architecture or state & Count\\
\midrule
Diffusion expert & Width 168; 12 blocks; 8 heads; 2 registers; feed-forward width 336 & 5,615,598\\
Hamiltonian model & Width 32; 2 object-interaction blocks; 4 heads; pair-energy readout & 22,252\\
Shared residual and router & Token-conditioned SiLU residual; physical-time GRU with $q/p$ offset heads and leaf-affine corrections & 32,729\\
\midrule
Neural subtotal & All HamiFormer neural components & 5,670,579\\
Ridge readouts & Fitted linear readouts & 6,960\\
Combined scalars & Neural parameters plus fitted linear coefficients & 5,677,539\\
Other learned state & CART tree (maximum depth four; capacity eight leaves); thresholds; feature statistics & Separate\\
\midrule
Transformer-AR, context 1 & Comparison model & 5,574,318\\
Transformer-AR, context 4 & Comparison model & 5,581,798\\
DHN & Two width-336, two-layer, four-head Transformers & 5,607,170\\
DiT & Comparison model & 5,654,014\\
PhysiFormer & Comparison model & 5,615,598\\
\bottomrule
\end{tabularx}
\end{table}

On HamiBalls-2, DiT uses full spatiotemporal attention and adaLN-Zero under the matched RF protocol. Table~\ref{tab:hami2-parameters} reports the component architecture and exact model sizes. HamiFormer's count includes its Hamiltonian model, shared residual, and router.

HG-DPF on HamiBalls-1 uses a parameter-matched DPF/HNN (785,300/386,386 parameters), trained for 50,000/40,000 AdamW updates with batch size 64. The Perceiver has 64 width-64 latents, two heads, and eight encoder/four decoder self-attention layers; the HNN has four width-248 CELU layers. DPF training predicts noise at 1,000 cosine levels, sampling query length $N_q\in\{5,10,14,19,24,29,34,38,43,48\}$ and context length uniformly in $[1,N_q-1]$. Context comprises independently noised query-prefix copies and the clean initial state. Inference uses 20 DDIM steps, random half-query context, and four-candidate SNIS ($\lambda=1$, stochasticity 1) with standard stochastic-DDIM variance allocation and categorical selection using recomputed clean-estimate energies. Under the 512-source, two-noise, 192-edge protocol, deterministic unguided DPF/SNIS MSEs are 1.307886/1.179656: the larger error is already present without guidance.

DHN retains the reference Hamiltonian update and denoising routines~\citep{deng2025dhn}, with block size and stride one. Observable scene attributes replace the codebook conditioning; padded coordinates are masked, and position/momentum scales are fitted on training data. Training uses the equations-of-motion loss plus $0.1$ times the denoising loss, with the reference coordinatewise corruption. Adam uses learning rate $10^{-4}$ and weight decay $10^{-4}$ for 50,000 updates, each averaging valid-object losses over all 48 transitions in 64 training windows. At inference, each physical edge uses 11 alternating right/left Hamiltonian denoising rounds at levels $0,0.1,\ldots,1$, conditioned on the preceding predicted state. Evaluation uses the same 512 sources, two noise seeds, and 192-edge horizon.

\subsubsection{Observable features and model-tree semantics}
The shared residual reads a diffusion token together with the current noisy state, initial state, mixed predecessor, candidates, and object attributes. A separate feature construction summarizes candidate increments and disagreements, geometry and recovery descriptors, RF level, attributes, and available history. Its inference-time routing history ends at the previous physical edge, and all remaining inputs are current causal observables.

The tree is learned once after the second scalar-router stage, with maximum depth four and capacity for eight leaves. CART splits on observable features to reduce normalized squared-error objectives for candidate mixing and Hamiltonian correction in both position and momentum. For feature vector $\vphi\in\R^m$, the design vector is
\begin{equation}
 \vpsi(\vphi)=\left[\clip\left((\vphi-\vmu_{\vphi})\oslash\mathbf s_{\vphi},-8,8\right);1\right]\in\R^{m+1},
\end{equation}
where division is componentwise and the positive normalization scales are fixed during deployment. Each leaf has separate position and momentum readouts with output dimension $d$. After fitting the leaf readouts, $\alpha_q,\alpha_p\in[0,1]$ are fitted separately by closed-form constrained least squares against the required Hamiltonian corrections, using held-out sources within the training split, and fixed during inference.
On the first physical edge, setting $\Delta\mathbf r_\ell=\mathbf 0$ yields the shared contribution $(\mathbf 1-\valpha)\odot\mathbf r_0$. The leaf correction to the router is affine in standardized current observables and an intercept. For numerical evaluation of a base probability's logit, probabilities are clamped to $[10^{-6},1-10^{-6}]$; this clamp serves numerical stability.

Tree structure, feature standardization, and component coefficients are fitted from the training split and fixed during inference.

\subsubsection{One complete mixed clean estimate}
Algorithm~\ref{alg:clean} specifies the causal ordering within a field evaluation. Anchors are mixed-coordinate pairs $(\bar{\mathbf q}_{k-1},\bar{\mathbf P}_k)$, with $\bar{\mathbf q}$ from the committed mixed predecessor and $\bar{\mathbf P}$ from its uncorrected H candidate. The first position is the observed $\mathbf q_0$. The derivative construction is batched over physical edges. The subsequent scan is sequential because both the residual/router and the next Hamiltonian candidate depend on the newly mixed state.

\begin{algorithm}[tbp]
\caption{Mixed clean estimate with one PLAS refinement ($K=1$)}
\label{alg:clean}
\begin{algorithmic}[1]
\Require Noisy future block $\mathbf X_\tau$, RF time $\tau$, initial state $\mathbf z_0$, attributes $\mathbf c$, frozen mixed-coordinate anchor $\mathbf a$, diffusion-only indicator $b_D$
\State Evaluate $D(\mathbf X_\tau,\tau;\mathbf z_0,\mathbf c)$; retain its clean trajectory and tokens.
\If{$b_D$}
 \State Form $\mathbf a^D$ from D predecessor positions and D target momenta, keeping $\mathbf q_0$ fixed.
 \State \Return the normalized diffusion clean trajectory and $\mathbf a^D$.
\EndIf
\State Construct all $(\mathbf A_k,\mathbf b_k)_{k=1}^L$ from the fixed anchors, using Eqs.~\eqref{eq:jet} and~\eqref{eq:affine}.
\State Set $\widehat{\mathbf z}_0=\mathbf z_0$ and reset physical-time router history.
\For{$k=1,\ldots,L$}
 \State Form $\widehat{\mathbf z}^H_k=\mathbf A_k\widehat{\mathbf z}_{k-1}+\mathbf b_k$ and obtain $\widehat{\mathbf z}^D_k$.
 \State Evaluate the shared residual and observable feature construction.
 \State Route to $\ell=\mathcal T(\vphi)$ and evaluate $\Delta\mathbf r_\ell=\mathbf W_\ell\vpsi$.
 \If{$k=1$} \State Set $\Delta\mathbf r_\ell=\mathbf 0$. \EndIf
 \State Form $\mathbf r_k$ using Eq.~\eqref{eq:residual}, then form corrected $\widehat{\mathbf z}^{\HR}_k$ as in Eq.~\eqref{eq:mix}.
 \State Evaluate the recurrent router on the corrected candidate and compute $\mathbf g_k$.
 \State Mix with Eq.~\eqref{eq:mix}; update physical-time history using this edge.
\EndFor
\State Form $\mathbf a^+\gets((\widehat{\mathbf q}_{k-1})_{k=2}^L,(\widehat{\mathbf p}^H_k)_{k=1}^L)$ in physical coordinates.
\State \Return the full normalized mixed trajectory and $\mathbf a^+$.
\end{algorithmic}
\end{algorithm}

\subsubsection{Stateful RF sampler}
\label{app:sampling}
The sampler carries the noisy RF trajectory and a separate mixed-coordinate anchor. Both evaluations of a Heun interval receive the same incoming anchor, which remains frozen throughout the interval. After numerical acceptance, the anchor returned by the left field evaluation is committed. Algorithm~\ref{alg:sampler} specifies this ordering.

\begin{algorithm}[tbp]
\caption{RF sampling with accepted-left mixed-coordinate anchor commits}
\label{alg:sampler}
\begin{algorithmic}[1]
\Require Initial state $\mathbf z_0$, attributes $\mathbf c$, intervals $N$, denominator floor $\varepsilon_\tau=0.05$
\State Draw $\mathbf X_0\sim\mathcal N(\mathbf 0,\Id)$; set $\Delta=1/N$ and initialize the anchor from repeated $\mathbf z_0$.
\For{$s=0,\ldots,N-1$}
 \State $\tau_s=s/N$; $b_D\gets(s<2)$.
 \State $(\mathbf Y^L,\mathbf a^L)\gets\operatorname{Clean}(\mathbf X_s,\tau_s,\mathbf z_0,\mathbf c,\mathbf a,b_D)$ using Algorithm~\ref{alg:clean}.
 \State $\mathbf v^L\gets(\mathbf Y^L-\mathbf X_s)/\max(1-\tau_s,\varepsilon_\tau)$.
 \If{$s=N-1$}
  \State $\mathbf X_{s+1}\gets\mathbf X_s+\Delta\mathbf v^L$.
 \Else
  \State $\mathbf P_s\gets\mathbf X_s+\Delta\mathbf v^L$.
  \State $(\mathbf Y^R,\mathbf a^R)\gets\operatorname{Clean}(\mathbf P_s,\tau_{s+1},\mathbf z_0,\mathbf c,\mathbf a,b_D)$.
  \State $\mathbf v^R\gets(\mathbf Y^R-\mathbf P_s)/\max(1-\tau_{s+1},\varepsilon_\tau)$.
  \State $\mathbf X_{s+1}\gets\mathbf X_s+\tfrac\Delta2(\mathbf v^L+\mathbf v^R)$.
 \EndIf
 \State $\mathbf a\gets\mathbf a^L$ after accepting the interval; keep $\mathbf q_0$ fixed.
\EndFor
\State \Return $\mathbf X_N$.
\end{algorithmic}
\end{algorithm}

The first two \emph{complete intervals} use the diffusion expert for both Heun evaluations. The hybrid part consequently begins at $\tau=2/N$, namely $0.25$, $1/6$, or $0.10$ for $N=8,12,20$. The corresponding numbers of field evaluations are $15,23,39$. Each hybrid field evaluation produces an entire $L$-edge physical rollout; $N$ counts RF sampling intervals.

The deployed sampler checks $1/N\geq0.05$. Therefore the last Euler update has $d_{\tau_{N-1}}=\Delta$, and $\mathbf X_N=\mathbf Y^L$ on that interval. Appendix~\ref{app:finite-rf} gives finite-schedule perturbation accounting with this denominator floor.

\subsection{Training Objectives and Schedule}
\label{app:training}
The staged objectives and carrier schedule below apply to both HamiBalls datasets.
\subsubsection{Joint base-model training}
The joint base stage minimizes $\mathcal L_{\rm base}=\mathcal L_D+\mathcal L_H$ with the settings in Table~\ref{tab:training-config}. RF time follows a logit-normal distribution with $\mu_\tau=-0.8$, $\sigma_\tau=0.8$. Both base networks remain frozen throughout downstream training.

\begin{table}[htbp]
\centering\scriptsize
\caption{Training settings for HamiBalls-1 (H1) and HamiBalls-2 (H2).}
\label{tab:training-config}
\begin{tabularx}{\linewidth}{@{}l>{\centering\arraybackslash}X>{\centering\arraybackslash}X@{}}
\toprule
Setting & H1 & H2\\
\midrule
Base updates; batch size & 50,000; 64 & 50,000; 64\\
Optimizer; weight decay; grad clip & AdamW; $10^{-4}$; 1.0 & AdamW; $10^{-4}$; 1.0\\
Peak learning rate: D / H & $3\times10^{-4}$ / $10^{-4}$ & $3\times10^{-4}$ / $10^{-4}$\\
Peak LR: residual / scalar / final q/p & $3\times10^{-4}$ / $3\times10^{-4}$ / $3\times10^{-3}$ & $10^{-3}$ / $3\times10^{-3}$ / $3\times10^{-3}$\\
RF denominator floor & 0.05 & 0.05\\
Downstream stages & $4\times200$ updates & $4\times200$ updates\\
Local replay length; cache refresh & 12 edges; 8 updates & 12 edges; 8 updates\\
Tree capacity & depth 4; 8 leaves & depth 4; 8 leaves\\
Residual weights (local; recovery/hull) & $1/4;1/8$ & $1/4;1/8$\\
No-harm weight; relative ridge & $0.25;0.01$ & $0.25;0.01$\\
\bottomrule
\end{tabularx}
\par\smallskip
\begin{minipage}{\linewidth}
Learning-rate schedules use linear warmup and cosine decay where applicable.
\end{minipage}
\end{table}

The Hamiltonian contribution $\mathcal L_H$ uses local adjacent-state relations. For an observed adjacent pair $(\mathbf z_k,\mathbf z_{k+1})$, let $\mathbf m_k=(\mathbf z_k+\mathbf z_{k+1})/2$, let $\mathbf s_z$ denote the fixed state scale, and define the normalized relation residual
\begin{equation}
 \mathbf e_k=\left(h\mathbf J\nabla H_\theta(\mathbf m_k)-(\mathbf z_{k+1}-\mathbf z_k)\right)\oslash\mathbf s_z.
\end{equation}
For each canonical coordinate, a fixed anisotropic scale $a_j$ is the median absolute normalized adjacent-state increment over the training sources, edges, and objects. Degenerate scales are rejected. With $\langle\cdot\rangle$ denoting an average over coordinates, the robust weight for object $i$ at edge $k$ and loss are
\begin{equation}
 u_{k,i}=\left\langle(\mathbf e_{k,i}\oslash\mathbf a)^2\right\rangle,\quad
 w_{k,i}=\frac{\nu}{\nu+u_{k,i}},\quad \nu=4,\quad
 \mathcal L_H=\frac{\nu+1}{\nu}\E_{k,i}\left[
 \operatorname{sg}(w_{k,i})\frac12\langle\mathbf e_{k,i}^2\rangle\right].
 \label{eq:h-robust}
\end{equation}
Here $\operatorname{sg}$ denotes stop-gradient. The anisotropic scale determines retention weights, while the quadratic term uses the original normalized residual. This iteratively reweighted objective is evaluated on sampled observable adjacent-state pairs.

\subsubsection{Shared-residual learning}
The residual is freshly initialized and trained for 200 updates with fixed $H/D$, using 12-edge local replay within cached 48-edge random-gate trajectories refreshed every eight updates. For $c\in\{q,p\}$, let $E_c,I_c$ be the coordinate-mean normalized squared correction error and ideal-correction energy at each source--edge--object entry on the local teacher-forced branch. Its local term is
\begin{equation}
 \mathcal L_{{\rm local},c}
 =\E_{b,k,i}\!\left[\frac{\log(1+E_c)}{1+I_c}+\bigl[\sqrt{E_c}-\sqrt{I_c}\bigr]_+\right],
 \label{eq:local-r}
\end{equation}
where $[a]_+=\max(a,0)$ and $\E_{b,k,i}$ averages valid source--edge--object entries. For closed-loop terms, use corrected candidate $\mathbf C_c$, diffusion candidate $\mathbf D_c$, target $\mathbf z_c^*$, and fixed scales $\mathbf s_c>\mathbf0$:
\begin{equation}
 \mathbf v_c=(\mathbf C_c-\mathbf D_c)\oslash\mathbf s_c,\quad
 \mathbf u_c=(\mathbf z_c^*-\mathbf D_c)\oslash\mathbf s_c,\quad
 \ell_{\rm PH}(\mathbf u)=\sqrt{1+\langle\mathbf u^2\rangle}-1.
 \label{eq:residual-scaled-errors}
\end{equation}
\begin{equation}
 \theta_c=\operatorname{sg}\!\left[\clip\!\left(
 \frac{\mathbf u_c^\top\mathbf v_c}{\max(\|\mathbf v_c\|_2^2,\epsilon_{\rm hull})},0,1\right)\right].
 \label{eq:hull-projection}
\end{equation}
\begin{equation}
 \mathcal L_{{\rm recovery},c}=\E_{b,k,i}[\ell_{\rm PH}(\mathbf v_c-\mathbf u_c)],\qquad
 \mathcal L_{{\rm hull},c}=\E_{b,k,i}[\ell_{\rm PH}(\theta_c\mathbf v_c-\mathbf u_c)].
 \label{eq:recovery-hull}
\end{equation}
Here $\epsilon_{\rm hull}>0$ is a numerical floor. The target, diffusion candidate, and projection coefficient are detached; residual learning differentiates through $\mathbf C_c$. Recovery penalizes corrected-candidate error, while hull penalizes the target's distance to the candidate segment. The total objective is
\begin{equation}
 \mathcal L_r=\frac14\left(\mathcal L_{{\rm local},q}+\mathcal L_{{\rm local},p}\right)
 +\frac18\sum_{c\in\{q,p\}}\left(\mathcal L_{{\rm recovery},c}+\mathcal L_{{\rm hull},c}\right).
 \label{eq:r-full}
\end{equation}

\subsubsection{Router losses: endpoint preference, recurrent regret, and risk}
\label{app:gate-loss}
A training cell is indexed by source, physical edge, and object; $c\in\{q,p\}$ denotes its $d$-coordinate component. Let $\mathbf D_c,\mathbf C_c$ denote its diffusion and residual-corrected Hamiltonian candidates, and let $\mathbf z_c^*$ be the target. For a gate $g$, write $\mathbf M_c(g)=\mathbf D_c+g(\mathbf C_c-\mathbf D_c)$. All targets, labels, normalizing headroom, and oracle coefficients described below are detached when used as supervision.

\paragraph{Endpoint preference.}
Let $e_D=\MSE(\mathbf D_c,\mathbf z_c^*)$, $e_C=\MSE(\mathbf C_c,\mathbf z_c^*)$ in normalized state coordinates. Define the binary preferred endpoint $y=\mathbf1[e_C<e_D]$ and the endpoint regret $a=|e_D-e_C|$. Exact ties receive zero weight. When both endpoint classes have positive regret mass, each class receives total weight one half, distributed proportionally to $a$ within that class. A single active class receives total weight one. For normalized weights $\omega_j$ and router logits $\lambda_j$, the loss is
\begin{equation}
 \mathcal L_{{\rm end},c}=\sum_j\omega_j\left[\log(1+\exp(\lambda_j))-y_j\lambda_j\right].
 \label{eq:gate-end}
\end{equation}
An all-tie batch contributes zero. The label encodes which candidate endpoint has lower MSE.

\paragraph{Global recurrent projection regret.}
For the final dual-channel recurrent term, position error is evaluated in physical position units and momentum error in one-step position-equivalent units, $h\,\Delta\mathbf p/m_i$. In either component, let $\mathbf a_j$ be the diffusion error and $\mathbf b_j$ the candidate disagreement in these units. Define
\begin{equation}
 E_j(g)=\langle(\mathbf a_j+g\mathbf b_j)^2\rangle,\qquad
 \theta_j=\clip\left(-\frac{\langle\mathbf a_j\mathbf b_j\rangle}{\langle\mathbf b_j^2\rangle},0,1\right),
\end{equation}
with any minimizing value, chosen as zero, when the disagreement vanishes. The pooled recurrent regret is
\begin{equation}
 \mathcal L_{{\rm rec},c}
 =\frac{\sum_j[E_j(g_j)-E_j(\theta_j)]_+}
 {\sum_j[E_j(1/2)-E_j(\theta_j)]_++\epsilon_{\rm rec}}.
 \label{eq:gate-rec}
\end{equation}
Normalization uses pooled midpoint headroom with a stabilizer $\epsilon_{\rm rec}>0$. Physical-time router recurrence is retained when computing $g_j$. The first scalar-router stage uses the analogous global sequence projection regret for its single shared gate; the second adds the componentwise penalty below. Cached carrier trajectories are treated as fixed data within each optimization block.

\paragraph{Componentwise non-degradation penalty.}
For the componentwise penalty, errors $e_{bki}(g)$ are MSEs in normalized state coordinates, for source $b$, edge $k$, and object $i$. With a bar denoting the mean over edges,
\begin{equation}
 \mathcal L_{{\rm nh},c}=\operatorname{mean}_{b,i}
 \frac{\overline{[e_{bki}(g)-e_{bki}(0)]_+}}
 {\operatorname{sg}\!\left(\overline{[e_{bki}(1)-e_{bki}(0)]_+}\right)}.
 \label{eq:gate-nh}
\end{equation}
A ratio is defined to be zero when its denominator is at most machine epsilon; active denominators are numerically clamped. This empirical penalty averages positive degradation over physical edges. The second scalar-router stage and the final stage use weight $0.25$ for the corresponding component-averaged penalty.

\paragraph{Momentum tail term.}
Using the position-equivalent errors $E$ from the recurrent term, first average the error \emph{difference} over physical edges and then take its positive part:
\begin{equation}
 \begin{aligned}
 h_{bi}&=\left[\overline{E_{bki}(g)-E_{bki}(0)}\right]_+,\qquad
 h^H_{bi}=\left[\overline{E_{bki}(1)-E_{bki}(0)}\right]_+,\\
 \mathcal L_{{\rm tail},p}&=\frac{\operatorname{RMS}_{b,i}(h)}
 {\operatorname{sg}(\operatorname{RMS}_{b,i}(h^H))}.
 \end{aligned}
 \label{eq:gate-tail}
\end{equation}
The same zero-denominator convention applies, and the zero vector uses the canonical zero subgradient. Equation~\eqref{eq:gate-tail} averages over time before taking the positive part, while Eq.~\eqref{eq:gate-nh} takes the positive part first. The tail term preserves the absolute severity of source--object harm relative to a batch-level scale.

\paragraph{Final objective.}
The two component objectives and their joint loss are
\begin{align}
 \mathcal L_q&=\tfrac12(\mathcal L_{{\rm end},q}+\mathcal L_{{\rm rec},q})
                  +\tfrac14\mathcal L_{{\rm nh},q},\\
 \mathcal L_p&=\tfrac13(\mathcal L_{{\rm end},p}+\mathcal L_{{\rm rec},p}+\mathcal L_{{\rm tail},p})
                  +\tfrac14\mathcal L_{{\rm nh},p},\\
 \mathcal L_g&=\tfrac12(\mathcal L_q+\mathcal L_p).
 \label{eq:gate-final}
\end{align}
The tail term is applied to momentum. These soft penalties target candidate preference and non-degradation on the training carriers. Proposition~\ref{prop:reset} describes the finite-window diffusion and propagated error budgets.

\subsubsection{Conditional ridge fitting and training partitions}
For a fixed carrier and tree, let $\mathbf y_j$ be the normalized target correction from the pure Hamiltonian candidate and $\vpsi_j$ the standardized feature vector with intercept. Write the hierarchical fit as a common matrix $\mathbf W_0$ plus leaf deviation $\mathbf V_\ell$:
\begin{equation}
 \min_{\mathbf W_0,\{\mathbf V_\ell\}}
 \sum_j\omega_j\norm{\mathbf y_j-(\mathbf W_0+\mathbf V_{\ell(j)})\vpsi_j}^2
 +\lambda\left(\norm{\mathbf W_0\mathbf P^{1/2}}_F^2+
                    \sum_\ell\norm{\mathbf V_\ell\mathbf P^{1/2}}_F^2\right),
 \label{eq:ridge}
\end{equation}
Here $\omega_j$ equalizes total weight across physical sources and is constant for equal row counts; $\mathbf P$ is diagonal with unit feature penalties and intercept penalty $0.01$. The relative ridge strength is $0.01$ times the mean diagonal of the common-block design Gram matrix. Fitting is componentwise on collected carriers; deployment uses the merged leaf matrices $\mathbf W_\ell=\mathbf W_0+\mathbf V_\ell$ and the calibrated blend in Eq.~\eqref{eq:residual}. The final stage accumulates design sufficient statistics across refreshes.

\begin{table}[htbp]
\centering\small
\caption{Training sequence. The diffusion and Hamiltonian losses form one 50,000-update base stage. A refresh regenerates rollout carriers. Fixed previous-stage policies generate the scalar-stage pools.}
\label{tab:training}
\begin{tabularx}{\linewidth}{@{}l r X@{}}
\toprule
Trainable component & Updates & Objective and carrier policy\\
\midrule
Diffusion $+$ Hamiltonian base & 50,000 & $\mathcal L_D+\mathcal L_H$; whole-window RF (batch 64) and local relation minibatches\\
Shared residual & 200 & Eq.~\eqref{eq:r-full}; random-gate carriers; refresh every 8 updates\\
First scalar router & 200 & Global sequence projection regret; fixed mixed-$N$ pool\\
Second scalar router & 200 & Projection regret plus non-degradation penalty; fixed pool from the preceding scalar router\\
Model tree & --- & Fit once after scalar stages outside gradient-based optimization\\
Dual-channel/leaf router & 200 & Eq.~\eqref{eq:gate-final}; 25 refreshes of 8 updates; cumulative ridge fitting\\
\bottomrule
\end{tabularx}
\end{table}

The final router stage trains the added $q/p$ heads and leaf-affine routing terms, with the scalar trunk frozen. Each final refresh contains 64 physical sources, split into two \emph{disjoint} 32-source groups with independent noise streams. Each group contains 16 sources sampled with $N=8$, 10 with $N=12$, and 6 with $N=20$. The two scalar stages train on fixed pools generated by their preceding policies, with source, noise, and random-routing assignments fixed within each pool.

Tree fitting, calibration, and ridge solves add computation outside the optimizer-update count in Table~\ref{tab:training}.

\section{Numerical Methods and Computational Efficiency}
\label{app:numerical-details}
This appendix records numerical consequences for the implemented sampler and the optional low-frequency Hessian refresh. These analyses use the same distinction between anchor transition $\mathcal R_{\mathbf y}$ and clean readout $\mathcal C_{\mathbf y}$ as Eq.~\eqref{eq:refinement-state}.

\subsection{Integrator Variants and Evaluation Protocol}
\label{app:integrator-ablation}
The eight sampled zero-based physical edges are $\{0,7,13,20,27,34,40,47\}$. Each solver is evaluated at its own realized mixed predecessors on the same 512 sources and two noise realizations. The reference is an independently converged FP64 Newton solve, requiring normalized residual and increment below $10^{-12}$ and a change below $10^{-12}$ after two additional iterations. All reference checks pass. SymEuler2 performs two momentum fixed-point updates followed by a final position-gradient evaluation. For each component, Table~\ref{tab:integrators} reports the square root of the mean squared error over the sampled entries and accepted RF fields. The left and center panels of Figure~\ref{fig:integrator-ablation} report the arithmetic cumulative mean of the separately aggregated per-field RMSEs. Their final ordinates therefore have a different aggregation from the table.

Timings are medians of 11 warmed, synchronized B64$\times$48 sampling calls on an A800 80GB PCIe GPU with $N=20$. Model loading, data I/O, compilation, and graph capture are excluded. Each solver computes its candidate before the same compiled residual/router stage; applicable derivative compilation, kernel autotuning, and attention contractions are shared across methods. The comparison uses FP32 model evaluation. PLAS-DC and PLAS-DC-S accumulate the defect correction in FP64; PLAS-DC-F uses FP32 correction arithmetic. End-to-end Ours timings using the deployed scan are reported separately in Appendix~\ref{app:inference-wallclock}.

\paragraph{Unified defect correction.}
Work in physical coordinates and suppress the physical-edge index. Let
$\widetilde{\mathbf A}\mathbf z+\widetilde{\mathbf b}$ be the selected affine jet from Eq.~\eqref{eq:affine}, with blocks $\widetilde{\mathbf B},\widetilde{\mathbf V}$. These blocks may be current or cached as specified below; assume $\widetilde{\mathbf B}$ is nonsingular. For the actual mixed predecessor $\mathbf z=[\mathbf q^\top,\mathbf p^\top]^\top$, first obtain the affine momentum seed and its implicit-equation defect:
\begin{equation}
 \widehat{\mathbf P}=
 \begin{bmatrix}\mathbf 0&\Id\end{bmatrix}
 (\widetilde{\mathbf A}\mathbf z+\widetilde{\mathbf b}),\qquad
 \mathbf d=\mathbf p-\widehat{\mathbf P}-hH_{\mathbf q}(\mathbf q,\widehat{\mathbf P}).
 \label{eq:dc-seed}
\end{equation}
The full tangent correction is
\begin{equation}
 \widetilde{\mathbf B}\,\boldsymbol\delta=\mathbf d,\qquad
 \begin{bmatrix}\mathbf Q_{\rm DC}\\\mathbf P_{\rm DC}\end{bmatrix}
 =\begin{bmatrix}\mathbf q+hH_{\mathbf p}(\mathbf q,\widehat{\mathbf P})\\\widehat{\mathbf P}\end{bmatrix}
 +\begin{bmatrix}\widetilde{\mathbf V}\\\Id\end{bmatrix}\boldsymbol\delta.
 \label{eq:dc-update}
\end{equation}
Both Hamiltonian gradients are freshly evaluated at $(\mathbf q,\widehat{\mathbf P})$. The momentum update is a Newton-type correction using the selected approximation to $\Id+hH_{\mathbf q\mathbf p}(\mathbf q,\widehat{\mathbf P})$; the position update transports the same increment through the selected approximation to $hH_{\mathbf p\mathbf p}$. Thus both components are corrected by one coupled update. In the implementation, the momentum block column of $\widetilde{\mathbf A}$ has upper and lower blocks $\widetilde{\mathbf V}\widetilde{\mathbf B}^{-1}$ and $\widetilde{\mathbf B}^{-1}$, respectively, so its multiplication by $\mathbf d$ supplies both increments without another Hessian evaluation. The resulting candidate is normalized and passed once to the residual/router stage before advancing the mixed scan.

\paragraph{Refresh policies.}
In the method names, DC denotes \emph{defect correction}, F denotes a \emph{frozen affine jet}, and S denotes \emph{sparse Hessian refresh}. The frozen-jet variant holds both the affine matrix and offset fixed within a sampling chunk; the sparse-Hessian variants recompute anchor gradients and offsets while refreshing the Hessian intermittently.

Each physical edge has its own affine jet. PLAS constructs these jets using current anchor gradients and complete Hessians and applies the affine map directly. PLAS-DC uses the same current jets and applies Eqs.~\eqref{eq:dc-seed}--\eqref{eq:dc-update}. PLAS-DC-F constructs the window's jets at the first Hamiltonian-bearing anchor and holds their matrices and offsets fixed throughout that sampling chunk; the gradients in Eqs.~\eqref{eq:dc-seed}--\eqref{eq:dc-update} remain fresh at every physical step.

PLAS-S and PLAS-DC-S use the lagged-Hessian construction in Appendix~\ref{app:lagged-hessian}: they recompute anchor gradients and affine offsets at every distinct anchor, while refreshing the complete Hessian at anchor ordinals $0,16,32,\ldots$. Exact reuse of an identical anchor does not advance this counter. The Hessian cache and counter reset at the start of each sampling chunk. PLAS-S applies the resulting affine map directly; PLAS-DC-S additionally applies Eqs.~\eqref{eq:dc-seed}--\eqref{eq:dc-update}, including fresh physical-step gradients. All variants retain the same RF schedule, weights, residuals, and routing rules.

The underlying affine maps retain the PLAS symplectic identity. Defect correction augments these maps through Eqs.~\eqref{eq:dc-seed}--\eqref{eq:dc-update}. The RMSE values and Pareto boundary report measured point estimates under the arithmetic settings above.

\paragraph{End-to-end prediction under integrator substitutions.}
\label{app:integrator-endpoints}
Table~\ref{tab:integrator-endpoints} evaluates 512 HamiBalls-1 sources with two paired noises over four 48-edge windows, initialized at episode start and continued from predicted endpoints. PLAS achieves prediction MSE close to SymEuler2 at lower runtime than either sequential solver. Against Explicit Euler, its 192-edge MSE is 1.29\% lower, with 10.5\% lower sampling time; its sampling time is 37.7\% lower than SymEuler2's.
\begingroup
\setlength{\intextsep}{6pt plus 1pt minus 1pt}
\begin{table}[!htbp]
\centering\small
\caption{HamiBalls-1 prediction accuracy and sampling time ($\downarrow$). MSE uses 512 sources and two noises; wall-clock is per B64$\times$48 sampling call under the shared timing protocol of Table~\ref{tab:integrators}.}
\label{tab:integrator-endpoints}
\begin{tabularx}{\textwidth}{l*{3}{>{\centering\arraybackslash}X}}
\toprule
Integrator & 48-edge $z$-MSE & 192-edge $z$-MSE & Wall-clock (s)\\
\midrule
Explicit Euler & 0.045910 & 0.253974 & 0.8875\\
SymEuler2 & 0.045846 & 0.250647 & 1.2752\\
PLAS & 0.045850 & 0.250707 & \textbf{0.7940}\\
\bottomrule
\end{tabularx}
\end{table}
\endgroup

\subsection{Finite-refinement deployment error}
\label{app:finite-rf}
\paragraph{Anchor and field errors.}
For a field call that performs $K$ refinements, its output is
$\mathcal C_{\mathbf y}^{[K]}(\mathbf a)=\mathcal C_{\mathbf y}(\mathcal R_{\mathbf y}^{K-1}(\mathbf a))$.
A converged reference uses $\mathcal C_{\mathbf y}(\mathbf a^*(\mathbf y))$.
Let $e_0=\norm{\mathbf a-\mathbf a^*}\leq r$ in an invariant regular neighborhood with quadratic constant $C>0$ and $Cr<1$. Induction on Theorem~\ref{thm:mixed-refinement} gives the finite-$K$ bound
\[
 \norm{\mathcal R_{\mathbf y}^{K}(\mathbf a)-\mathbf a^*}
 +\norm{\mathcal C_{\mathbf y}^{[K]}(\mathbf a)-\mathcal C_{\mathbf y}(\mathbf a^*)}
 \leq C^{-1}(Ce_0)^{2^K}.
\]
The induction uses $e_{j+1}\leq Ce_j^2$ and the last-update readout bound. For $K=1$, $e_0\leq e_{s-1}+d_s$ (Appendix~\ref{app:tracking-proof}). Apply the fixed normalization and divide by $d_\tau\geq\varepsilon_\tau$ to bound field error, comparing each call at its own RF state.

For a more general local tracking estimate
$e_s\leq\gamma_s e_{s-1}+\gamma_s d_s+b_s$,
where $d_s$ bounds target motion and $b_s$ includes implementation error or a bounded branch jump, repeated substitution gives
\begin{equation}
 e_s\leq
 \left(\prod_{i=1}^{s}\gamma_i\right)e_0+
 \sum_{j=1}^{s}\left(\prod_{i=j+1}^{s}\gamma_i\right)
                       (\gamma_jd_j+b_j).
\end{equation}
The gain and jump bounds apply along the compared states; suffix products quantify downstream amplification. Within regular regions, $b_s$ can account for finite precision; at hard-tree crossings, $b_s$ additionally accounts for the bounded discrepancy between branches.

\paragraph{Conditional Heun perturbation bound.}
Let $\mathbf v^*$ be the converged-refinement field, $L_v$-Lipschitz in state on a region containing both sampling trajectories and their predictors. Let $\xi_s^L,\xi_s^R$ bound approximate-field discrepancies from $\mathbf v^*$ at the actual left state and actual right predictor. Comparing the actual predictor to the exact-field predictor from the same left state gives a distance at most $\Delta_s\xi_s^L$. Therefore the one-interval update discrepancy is at most
\begin{equation}
 \Delta_s w_s,\qquad
 w_s=\tfrac12\bigl[(1+L_v\Delta_s)\xi_s^L+\xi_s^R\bigr]
 \label{eq:heun-local-perturbation}
\end{equation}
for Heun; on the final Euler interval $w_s=\xi_s^L$. The exact-field Heun update has state Lipschitz bound
$1+L_v\Delta_s+\tfrac12L_v^2\Delta_s^2$. With equal initial states and
$\sum_s\Delta_s=1$, unrolling the interval recurrence yields
\begin{equation}
 \norm{\mathbf X_N-\widetilde{\mathbf X}_N}
 \leq \exp\!\left(L_v+\tfrac12L_v^2\Delta_{\max}\right)
                       \sum_s\Delta_s w_s.
 \label{eq:rf-perturbation}
\end{equation}
This finite-schedule estimate applies on regions with the stated field regularity. Discontinuities at hard-tree crossings enter through explicit jump terms; the full-sampling comparisons measure their downstream effects.

\paragraph{Exact paired-sampling decomposition.}
Let $F_j,G_j$ denote the $S=N-2$ hybrid RF-interval transitions for $K=1$ and a finite-$K$ comparator on extended state (RF trajectory, committed anchor, and schedule metadata), including both Heun calls and the final readout. From the common state $\mathbf x_0$ after two diffusion-only intervals, let $\mathbf P$ extract the final trajectory and define
\begin{equation}
 \mathbf h_j=\mathbf P\,G_S\cdots G_{j+1}F_j\cdots F_1(\mathbf x_0),\qquad
 \Delta\mathbf h_j=\mathbf h_j-\mathbf h_{j-1},\qquad
 B_{\rm end}=\sum_{j=1}^{S}\norm{\Delta\mathbf h_j}_{\rm RMS}.
 \label{eq:hybrid-decomposition}
\end{equation}
Cancellation gives $\mathbf h_S-\mathbf h_0=\sum_j\Delta\mathbf h_j$ and hence
$\norm{\mathbf h_S-\mathbf h_0}_{\rm RMS}\leq B_{\rm end}$. With
$\mathcal L(\mathbf h)=\norm{\mathbf h-\mathbf Z^\dagger}_{\rm RMS}^2$,
\begin{equation}
 \mathcal L(\mathbf h_S)-\mathcal L(\mathbf h_0)
 =\sum_j\bigl[\mathcal L(\mathbf h_j)-\mathcal L(\mathbf h_{j-1})\bigr],\qquad
 |\mathcal L(\mathbf h_S)-\mathcal L(\mathbf h_0)|
 \leq 2\sqrt{\mathcal L(\mathbf h_0)}B_{\rm end}+B_{\rm end}^2.
 \label{eq:hybrid-loss-budget}
\end{equation}
These identities apply to finite-refinement sampling paths, including discrete tree transitions: every downstream gate, leaf, memory update, and diffusion evaluation is recomputed. They measure endpoint discrepancies, separately from the local convergence guarantee. The comparator uses $K=8$; the 57 complete hybrid evaluations in Appendix~\ref{app:theory-tails} use this procedure.

\subsection{Low-frequency Hessian refresh}
\label{app:lagged-hessian}
This optional implementation retains current Hamiltonian gradients while reusing a symmetric Hessian from an earlier RF field. The approximation is temporal: all Hessian blocks are retained at a refresh.

\begin{proposition}[Structure and defect with a lagged Hessian]
\label{prop:lagged-hessian}
At the current anchor $\bar{\mathbf v}=(\bar{\mathbf q},\bar{\mathbf P})$, replace $\nabla^2H(\bar{\mathbf v})$ by a symmetric matrix $\widetilde{\mathbf H}$ while retaining $\nabla H(\bar{\mathbf v})$. Form the affine map from these blocks and assume $\widetilde{\mathbf B}=\Id+h\widetilde{\mathbf H}_{\mathbf q\mathbf P}$ is invertible. The resulting fixed-anchor affine map is symplectic. If $\nabla^2H$ is $M_H$-Lipschitz on the segment between $\bar{\mathbf v}$ and the realized $\mathbf v=(\mathbf q,\mathbf P)$, its implicit-equation defect obeys
\begin{equation}
 \norm{\mathbf d}
 \leq h\norm{\nabla^2H(\bar{\mathbf v})-\widetilde{\mathbf H}}\,\norm{\mathbf v-\bar{\mathbf v}}
       +\tfrac12 hM_H\norm{\mathbf v-\bar{\mathbf v}}^2+\norm{\mathbf e_{\rm lin}},
 \label{eq:stale-defect}
\end{equation}
where $\mathbf e_{\rm lin}$ is the equation residual of the affine linear solve. In exact arithmetic, a consistent anchor has zero defect.
\end{proposition}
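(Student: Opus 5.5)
The proof has two parts: a structural (symplecticity) part and a defect bound. For symplecticity I will reuse the factorization in Eq.~\eqref{eq:symplectic-factorization} from the proof of Theorem~\ref{thm:affine}. Replacing the exact Hessian blocks by the corresponding blocks of $\widetilde{\mathbf H}$ gives $\widetilde{\mathbf U}=h\widetilde{\mathbf H}_{\mathbf q\mathbf q}$, $\widetilde{\mathbf V}=h\widetilde{\mathbf H}_{\mathbf P\mathbf P}$ and $\widetilde{\mathbf B}=\Id+h\widetilde{\mathbf H}_{\mathbf q\mathbf P}$, and the linear part keeps the same block form as Eq.~\eqref{eq:affine}. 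Symmetry of $\widetilde{\mathbf H}$ makes $\widetilde{\mathbf U}$ and $\widetilde{\mathbf V}$ symmetric. It also ensures that the off-diagonal block entering the position equation is $\widetilde{\mathbf H}_{\mathbf P\mathbf q}=\widetilde{\mathbf H}_{\mathbf q\mathbf P}^\top$, which is what makes the $(1,1)$ block equal to $\widetilde{\mathbf B}^\top-\widetilde{\mathbf V}\widetilde{\mathbf B}^{-1}\widetilde{\mathbf U}$. The product $\mathbf S_{\widetilde{\mathbf V}}\mathbf D_{\widetilde{\mathbf B}}\mathbf S_{-\widetilde{\mathbf U}}$ is then symplectic by the same block identities as before. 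The retained gradient enters only the offset, so it does not affect the derivative.

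For the defect bound I will follow Eq.~\eqref{eq:mixed-defect}. Write $\mathbf f=(H_{\mathbf q},H_{\mathbf P})$. Let $\boldsymbol\ell(\mathbf v)=\mathbf f(\bar{\mathbf v})+\widetilde{\mathbf H}(\mathbf v-\bar{\mathbf v})$ be the lagged linearization. Let $(\mathbf Q,\mathbf P)$ be the computed output, which satisfies the linearized relations up to the solve residual $\mathbf e_{\rm lin}$. Substituting into the exact implicit equations \eqref{eq:se} gives
\[
 \mathbf d = h\bigl(\mathbf f(\mathbf v)-\boldsymbol\ell(\mathbf v)\bigr)\ \text{(with signs per component)} + \mathbf e_{\rm lin}.
\]
I then split the difference as
\[
 \mathbf f(\mathbf v)-\boldsymbol\ell(\mathbf v)
 =\bigl[\mathbf f(\mathbf v)-\mathbf f(\bar{\mathbf v})-\nabla^2H(\bar{\mathbf v})(\mathbf v-\bar{\mathbf v})\bigr]
 +\bigl(\nabla^2H(\bar{\mathbf v})-\widetilde{\mathbf H}\bigr)(\mathbf v-\bar{\mathbf v}).
\]
The first bracket is bounded by $\tfrac12M_H\norm{\mathbf v-\bar{\mathbf v}}^2$ using Taylor's integral remainder along the segment, where $D\mathbf f=\nabla^2H$ is $M_H$-Lipschitz. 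The second term is bounded by the operator norm. The triangle inequality then yields Eq.~\eqref{eq:stale-defect}.

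The main obstacle is bookkeeping. I must make sure the stacked residual norm of the two components is controlled by the full Hessian mismatch and the full $\mathbf f$ remainder, not by separate block norms. I also need to state clearly that $\mathbf e_{\rm lin}$ denotes the residual of the affine equations themselves, so that it enters additively with unit coefficient. If the paper's norm convention has $\mathbf e_{\rm lin}$ measured before multiplication by $h$, I will absorb that into its definition.

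For the final claim, a consistent anchor means $\mathbf v=\bar{\mathbf v}$. Both mismatch terms then vanish, and in exact arithmetic $\mathbf e_{\rm lin}=\mathbf 0$. The linear relations reduce to $\mathbf P=\bar{\mathbf P}$ with the current gradients, so the defect is zero regardless of $\widetilde{\mathbf H}$. This explains why retaining current gradients preserves fixed points, while the lag affects only the convergence rate, consistent with the order-one refinement observed in the lagged-Hessian control.
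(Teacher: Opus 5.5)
Your proposal is correct and follows the paper's proof essentially step for step: symplecticity comes from the factorization \eqref{eq:symplectic-factorization} applied to the lagged blocks $\widetilde{\mathbf U},\widetilde{\mathbf B},\widetilde{\mathbf V}$, which are admissible because $\widetilde{\mathbf H}$ is symmetric (symmetric diagonal blocks, transposed cross blocks), and the retained gradient affects only the offset. The defect bound is obtained as in the paper: subtract the lagged linearized equations from the exact ones, add and subtract $\nabla^2H(\bar{\mathbf v})(\mathbf v-\bar{\mathbf v})$, and bound the Taylor remainder as in Eq.~\eqref{eq:mixed-defect}; the per-block sign flip you note does not change sign-invariant norms such as the $\ell_2$ and $\ell_\infty$ norms used in the paper.
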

\begin{proof}
Symmetry of the diagonal Hessian blocks and transposed cross blocks gives the factorization in Eq.~\eqref{eq:symplectic-factorization}, with the replaced blocks. Translation does not affect that identity. For the defect, subtract the lagged linearized equations from the exact implicit equations, add and subtract
$\nabla^2H(\bar{\mathbf v})(\mathbf v-\bar{\mathbf v})$, and bound the Taylor remainder as in Eq.~\eqref{eq:mixed-defect}.
\end{proof}
If $\widetilde{\mathbf H}=\nabla^2H(\mathbf v_{\rm old})$, the first matrix norm is at most $M_H\norm{\bar{\mathbf v}-\mathbf v_{\rm old}}$. This first-order term quantifies the effect of lagged curvature on the local refinement budget.

\paragraph{Paired HamiBalls-1 validation.}
We compare full refresh with the period-16 PLAS-S policy in Appendix~\ref{app:integrator-ablation} on matched 48-edge inputs: 512 sources, two noise realizations, $N=20$, and unchanged weights and other settings. Table~\ref{tab:hessian-refresh} uses FP64 subtraction and accumulation to compute prediction errors.

\begin{table}[htbp]
\centering\small
\caption{Prediction accuracy with full and low-frequency Hessian refresh on HamiBalls-1 over 48 edges; paired timing is described in Appendix~\ref{app:inference-wallclock}. Win rates use noise-mean pointwise errors against PhysiFormer.}
\label{tab:hessian-refresh}
\begin{tabular}{@{}lrrrr@{}}
\toprule
Implementation & $z$ MSE & $q$ MSE & $p$ MSE & Win (\%)\\
\midrule
Ours & 0.04584945 & 0.01450559 & 0.07719332 & 64.4263\\
Ours (PLAS-S) & 0.04584186 & 0.01450082 & 0.07718289 & 64.4173\\
\bottomrule
\end{tabular}
\end{table}

PLAS-S reduces median sampling latency by about 6\% in paired timing (Appendix~\ref{app:inference-wallclock}), with essentially unchanged aggregate prediction accuracy in this 48-edge HamiBalls-1 comparison. Relative to full refresh, total MSE changes by $-0.01657\%$ (source-cluster bootstrap 95\% interval $[-0.04862\%,+0.01055\%]$) and win rate by $-0.00895$ percentage points. One of 512 sources exceeds a 1\% MSE increase, with a worst increase of 2.55\%.

\subsection{Inference wall-clock}
\label{app:inference-wallclock}
We compare prediction accuracy at similar measured inference cost on HamiBalls-1 using PhysiFormer with $N=20$ and $N=40$ RF steps and Ours with $N=20$. PhysiFormer at $N=40$ retains the original model weights and denominator floor of $0.05$. Prediction MSE is evaluated on 512 sources and two noise realizations over 192 physical edges without resets.

We time complete 48-edge sampling calls on one NVIDIA A800 80GB PCIe GPU with PyTorch 2.5.1, CUDA 12.4, FP32, and batch size 64. CUDA is synchronized before and after each call. Measurements use warmed execution, excluding model loading, data I/O, compilation, and graph capture.

\begin{table}[htbp]
\centering\small
\caption{Inference cost and prediction accuracy on HamiBalls-1, evaluated on 512 sources and two noise realizations. Latency is the median seconds per batch of 64 48-edge windows; normalized phase-space MSE covers all 192 predicted edges.}
\label{tab:inference-wallclock}
\begin{tabular}{@{}lrrr@{}}
\toprule
Method & RF steps $N$ & Seconds / batch & 192-edge MSE\\
\midrule
PhysiFormer & 20 & 0.402 & 0.34029\\
PhysiFormer & 40 & 0.809 & 0.32767\\
Ours (PLAS) & 20 & 0.786 & 0.25071\\
\bottomrule
\end{tabular}
\end{table}

At comparable measured latency, Ours achieves 23.5\% lower MSE than PhysiFormer with $N=40$ (Table~\ref{tab:inference-wallclock}). Reducing PhysiFormer's denominator floor to $0.025=1/N$, so the final Euler step returns the clean prediction, gives a 192-edge MSE of $0.343529$ under the same evaluation protocol. The accuracy advantage therefore persists when PhysiFormer receives a slightly larger wall-clock budget.

The sampler employs compiled derivative and mixed-scan kernels, batched Hamiltonian derivatives, and cached invariant computations. Ours in the main text uses full Hessian refresh; PLAS-S is an optional variant (Appendix~\ref{app:lagged-hessian}).

\section{Towards Scaling Up}
\label{app:scaling}
\subsection{Diagonal--low-rank affine propagation}
PLAS-DLR uses diagonal--low-rank curvature to construct affine symplectic maps. Let $m=dn$ be the number of position coordinates and $2m$ the phase-space dimension. At anchor $\mathbf a$, it retains the exact gradient and approximates the Hessian by
\begin{equation}
 \widetilde{\mathbf S}=\diag(\mathbf d)+\mathbf R\mathbf C\mathbf R^\top,
 \qquad \mathbf R\in\R^{2m\times r},\quad \mathbf C=\mathbf C^\top\in\R^{r\times r}.
\end{equation}
The diagonal is estimated with $s$ Rademacher Hessian--vector products (HVPs), averaging $\mathbf v\odot(\nabla^2H(\mathbf a)\mathbf v)$. A randomized range approximation of the remaining curvature supplies $\mathbf R$; projection and symmetrization give $\mathbf C$. Probe directions are held fixed during inference. We use $r=2$ and $s=4$.

Write $\mathbf R=[\mathbf R_q^\top,\mathbf R_p^\top]^\top$. The implicit block becomes $\widetilde{\mathbf B}=\Id_m+h\mathbf R_q\mathbf C\mathbf R_p^\top$. The Woodbury identity gives
\begin{equation}
 \widetilde{\mathbf B}^{-1}\mathbf v=\mathbf v-h\mathbf R_q\mathbf C
 (\Id_r+h\mathbf R_p^\top\mathbf R_q\mathbf C)^{-1}\mathbf R_p^\top\mathbf v.
\end{equation}
Thus propagation uses an $r\times r$ solve and tall-factor products. The symmetric quadratic Hamiltonian defined by $\widetilde{\mathbf S}$ has symmetric pure Hessian blocks, so Theorem~\ref{thm:affine}'s factorization preserves the symplecticity of the resulting affine maps whenever $\widetilde{\mathbf B}$ is invertible.

For a Hessian with Lipschitz constant $L_H$ and curvature error $\norm{\widetilde{\mathbf S}-\nabla^2H(\mathbf a)}\leq\epsilon$, Taylor's theorem yields
\begin{equation}
 \norm{\nabla H(\mathbf z)-\nabla H(\mathbf a)-\widetilde{\mathbf S}(\mathbf z-\mathbf a)}
 \leq\epsilon\norm{\mathbf z-\mathbf a}+\tfrac12 L_H\norm{\mathbf z-\mathbf a}^2.
\end{equation}
The generating-relation defect is multiplied by $h$; a uniformly stable implicit solve transfers this estimate to state error. Bounded curvature error and $O(h)$ anchor mismatch therefore give an $O(h^2)$ local contribution.

\subsection{Computational and memory complexity}
Let $C_{\mathrm{HVP}}(m)$ denote the cost of one HVP. Dense Hessian construction requires $O(mC_{\mathrm{HVP}}(m))$ work, followed by $O(m^3)$ dense factorization and $O(m^2)$ affine application. PLAS-DLR construction costs
\begin{equation}
 O\big((s+2r)C_{\mathrm{HVP}}(m)+mr^2+r^3\big),
 \qquad\text{with application cost }O(mr+r^2).
\end{equation}
For batch size $b$ and window length $L$, stored jets require $O(bLm^2)$ space for PLAS and $O(bL(mr+r^2))$ for PLAS-DLR. The packed DLR representation stores $3mr+6m+r^2$ scalars per anchor, compared with $4m^2+2m$ for a dense affine map. At $d=2,n=100,r=2$, these are 2,404 and 160,400 scalars. With fixed width, rank, and probe count, full-attention HVPs have quadratic object-count work, giving a subcubic construction route. Total inference memory also includes attention, automatic-differentiation intermediates, and model buffers; its measured scaling is shown below.

\begin{figure}[htbp]
\centering
\includegraphics[width=\textwidth]{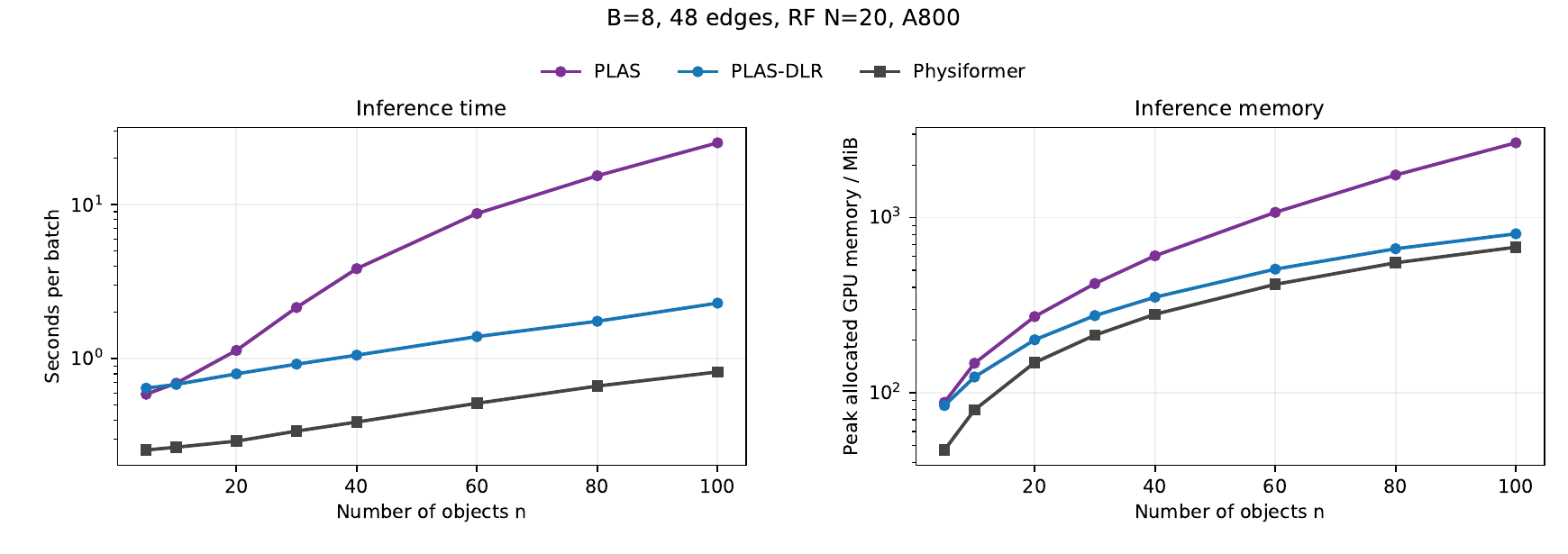}
\caption{Object-count scaling on an A800. Left: median inference time after three warm-up calls, using five timed calls. Right: peak allocated GPU memory after warm-up, measured in separate processes for each method and size. Both vertical axes are logarithmic. Batch size is 8, window length is 48, and RF uses 20 steps.}
\label{fig:scaling}
\end{figure}
We extend the two-dimensional token interface to $n\in\{5,10,20,30,40,60,80,100\}$ using replicated physical attributes and Gaussian noisy trajectories. Figure~\ref{fig:scaling} measures computational scaling with the same tensor modules. At $n=100$, PLAS-DLR takes 2.297 seconds per batch, compared with 0.821 seconds for PhysiFormer. Their inference allocation peaks are 808.73 and 678.39 MiB, respectively, compared with 2,678.30 MiB for PLAS. Allocated memory includes model tensors, inputs, outputs, and live computation-graph buffers; allocator reservations and CUDA context are separate. Hessian construction is batched to fit the working-memory budget.

\subsection{HamiBalls-1 accuracy}
We evaluate 64 physical sources with two noise realizations over 48 edges. PLAS-DLR obtains normalized $q$-MSE 0.0144733, $p$-MSE 0.0772124, and $z$-MSE 0.0458429. The corresponding PLAS values are 0.0144720, 0.0772396, and 0.0458558. Together with the runtime and memory curves, these results support diagonal--low-rank curvature as a practical route to larger systems while retaining trajectory accuracy on HamiBalls-1.

\section{Experimental Setup and Additional Results}
\label{app:results}
\subsection{Dataset generation}
\label{app:dataset-generation}
Both datasets contain independently simulated mechanical episodes with fixed object identities and observable physical attributes. Each episode spans 192 edges (193 saved states), with a saved-frame interval of $h=1/30$ and total duration 6.4. Table~\ref{tab:dataset-generation} summarizes the physical settings and the train/validation portions used in this work. HamiBalls-2 balances object counts across $n=5,\ldots,10$, with equal training counts at each $n$.

\begin{table}[htbp]
\centering\small
\caption{Dataset generation settings. $\mathcal U[a,b]$ denotes uniform sampling; speed ranges apply before the HamiBalls-1 center-of-mass adjustment. Restitution entries are per-body material coefficients.}
\label{tab:dataset-generation}
\setlength{\tabcolsep}{5pt}
\renewcommand{\arraystretch}{1.1}
\begin{tabularx}{\linewidth}{@{}lXX@{}}
\toprule
Setting & HamiBalls-1 & HamiBalls-2\\
\midrule
Domain & $[-1,1]^2$ & $[-1,1]^2\times[0,2]$\\
Objects & 5 disks & 5--10 spheres\\
Mass; radius & $\mathcal U[0.5,1.5]$; $\mathcal U[0.06,0.10]$ & Same ranges\\
Initial speed & $\mathcal U[0.35,0.90]$ & $\mathcal U[0.35,0.90]$\\
Body restitution & $\mathcal U[0.4,0.9]$ & $\mathcal U[0.90,0.98]$\\
Boundary restitution & $0.8$ & $1.0$\\
Gravity magnitude & $0$ & $1$ (downward)\\
Springs & Central harmonic, $k=0.5$ & Sparse pairwise Hookean\\
Simulator & Pymunk 7.3.0 / Chipmunk2D & PyBullet 3.2.7\\
Substeps per saved edge & 8 & 32\\
Contact solver iterations & 20 & 50\\
Train / validation episodes & 14,336 / 512 & 30,720 / 512\\
\bottomrule
\end{tabularx}
\end{table}

\paragraph{Initial states and spring connections.}
Mass, radius, and restitution are sampled independently. Centers are sampled uniformly within the admissible box, rejecting initial overlaps with a clearance of 0.015. Initial velocity directions are isotropic; HamiBalls-1 additionally subtracts the mass-weighted center-of-mass velocity. Its central force is $\mathbf F_i=-0.5\mathbf q_i$. In HamiBalls-2, each unordered pair is connected with probability $2/(n-1)$, conditioned on at least one spring; connectivity is otherwise unrestricted. For each spring, sample a period $T_{ij}$ log-uniformly from $[\sqrt{0.5},\sqrt{8}]$ and set $k_{ij}=\mu_{ij}(2\pi/T_{ij})^2$, where $\mu_{ij}=m_im_j/(m_i+m_j)$. Its rest length is the initial separation multiplied by $\exp(u)$, with $u\sim\mathcal U[-0.15,0.15]$.

\paragraph{Simulation and state representation.}
HamiBalls-1 applies a symplectic-Euler force kick before each Pymunk drift/contact step. HamiBalls-2 recomputes spring forces at every PyBullet substep and advances under gravity and contact impulses. Both use frictionless contacts and no velocity damping; the engines combine colliding bodies' restitution coefficients multiplicatively. PyBullet uses a restitution velocity threshold of 0.2. Episodes are simulated continuously across the full horizon. Saved states contain position and canonical momentum $\mathbf p_i=m_i\mathbf v_i$, together with mass, radius, restitution, and the HamiBalls-2 spring graph, stiffnesses, and rest lengths. Variable-size scenes use masked padding. Fixed normalization scales are computed from the training split. Training uses 48-edge windows, while the reported 192-edge evaluation begins at episode initialization, as detailed below.

\subsection{Aggregation and evidence scope}
Under the protocol in Section~\ref{sec:experiments}, all component MSEs use fixed normalized phase-space coordinates. For an object--edge stratum $\mathcal S$, component dimension $d_c$ ($d_z=4$, $d_p=d_q=2$ for HamiBalls-1; $d_z=6$, $d_p=d_q=3$ for HamiBalls-2), and two noise realizations, the absolute metric is
\begin{equation}
 \MSE_{c,\mathcal S}=\frac{1}{2|\mathcal S|d_c}
 \sum_{(b,k,i)\in\mathcal S}\sum_{\omega=1}^{2}
 \norm{\widehat{\mathbf x}^{(\omega)}_{bki,c}-\mathbf x^*_{bki,c}}_2^2.
 \label{eq:metric}
\end{equation}
Tables~\ref{tab:main} and~\ref{tab:long} use this reduction over their respective intervals. HG-DPF uses matched evaluation with SNIS guidance; deterministic Transformer-AR repeats each source's prediction across both noise labels for aggregation.

For fixed trained models, 95\% percentile CIs for the 192-edge phase-space MSE reductions are $[23.8\%,28.8\%]$ over PhysiFormer on HamiBalls-1 and $[20.7\%,22.1\%]$ over DiT on HamiBalls-2. We use 10,000 paired bootstrap resamples of 512 sources, retaining both noises and all objects and edges per source and recomputing pooled MSE from summed squared errors and valid coordinate counts.

Contact marks object--edge cells with positive normal contact impulse in any substep, including sustained contact; all others are continuous, including objects away from simultaneous contacts. Both datasets use force threshold $10^{-12}$, dividing impulse by substep duration on HamiBalls-1. Its $512\cdot192\cdot5=491{,}520$ cells comprise 7,829 contact and 483,691 continuous cells. Contact-mask reconstruction from original scene seeds and accepted sampling attempts reproduced all stored ground-truth trajectories bitwise. Both noises are aggregated before cell/source/track win statistics.

\subsection{Autoregressive baselines on HamiBalls-2}
Deterministic Transformer-AR inputs are normalized positions/momenta, observable attributes, spring graph/parameters, and physical time increment. Object tokens are linearly embedded into four pre-layer-normalized Transformer blocks \citep{vaswani2017attention}: width 340, four heads, width-1360 GELU feed-forward layers, zero dropout. Graph summaries augment object features; spring features provide learned pairwise attention bias. Final layer normalization and a shared linear head predict six-component state increments; padding masks exclude absent objects.

Context-one uses the current state; context-four concatenates the latest four chronologically, zero-padding missing states with history-validity indicators (5,574,318/5,581,798 parameters). Training uses all 48 teacher-forced transitions per window, including shorter initial histories, averaging normalized squared errors over valid objects, edges, and coordinates.

AdamW \citep{loshchilov2019adamw} runs for 50,000 updates with 64-window batches, 1,000-update warmup to $3\times10^{-4}$, cosine decay to $3\times10^{-7}$, weight decay $10^{-4}$, and norm-1 gradient clipping. BF16 autocast retains float32 parameters/loss; evaluation uses final weights.

Inference rolls out 192 edges from the observed initial frame, retaining up to four states for context-four.

Using available ground-truth predecessors at every edge over all 512 sources and 192 edges gives single-step MSEs of 0.004242/0.003419 for context-four/one. Supplying three ground-truth transitions before free rollout leaves a 192-edge MSE gap (1.478274/0.883231), so cold start alone does not explain the difference.

\subsection{Per-seed errors and continuation chunks}
\begin{table}[htbp]
\centering\small
\caption{HamiBalls-1 absolute 48-edge normalized MSE by physical data-generation seed.}
\label{tab:per-seed}
\begin{tabularx}{\textwidth}{l*{6}{>{\centering\arraybackslash}X}}
\toprule
 & \multicolumn{3}{c}{PhysiFormer} & \multicolumn{3}{c}{Ours}\\
\cmidrule(lr){2-4}\cmidrule(l){5-7}
Physical seed & $z$ & $q$ & $p$ & $z$ & $q$ & $p$\\
\midrule
40 & 0.06718 & 0.02263 & 0.11173 & 0.04557 & 0.01520 & 0.07594\\
41 & 0.06582 & 0.02046 & 0.11118 & 0.05146 & 0.01686 & 0.08606\\
42 & 0.06718 & 0.02420 & 0.11016 & 0.04431 & 0.01550 & 0.07311\\
43 & 0.05907 & 0.01574 & 0.10240 & 0.04206 & 0.01045 & 0.07366\\
\bottomrule\end{tabularx}\end{table}

\begin{table}[htbp]
\centering\small
\caption{HamiBalls-1 absolute normalized MSE for disjoint 48-edge continuation chunks.}
\label{tab:chunks}
\begin{tabularx}{\textwidth}{l*{6}{>{\centering\arraybackslash}X}}
\toprule
 & \multicolumn{3}{c}{PhysiFormer} & \multicolumn{3}{c}{Ours}\\
\cmidrule(lr){2-4}\cmidrule(l){5-7}
Physical edges & $z$ & $q$ & $p$ & $z$ & $q$ & $p$\\
\midrule
1--48 & 0.06481 & 0.02076 & 0.10887 & 0.04585 & 0.01451 & 0.07719\\
49--96 & 0.26704 & 0.23468 & 0.29940 & 0.18787 & 0.16078 & 0.21496\\
97--144 & 0.46841 & 0.49981 & 0.43702 & 0.33972 & 0.36346 & 0.31598\\
145--192 & 0.56089 & 0.63230 & 0.48947 & 0.42938 & 0.48507 & 0.37370\\
\bottomrule\end{tabularx}\end{table}

Table~\ref{tab:chunks} separates the four windows; Table~\ref{tab:long} pools the last two. Equation~\eqref{eq:metric} uses each interval's valid coordinate and contact/continuous cell counts.

\subsection{Comparison with the internal diffusion expert}
\label{app:internal-d-control}
\begin{sloppypar}
In Section~\ref{sec:expert-feedback}'s 512-source, paired-noise comparison, internal D-only/Ours MSEs are 0.066289/0.045850 (48 edges) and 0.338632/0.250707 (192 edges): reductions of 30.83\%/25.96\%. With noises averaged per source, Ours improves 82.62\%/78.91\% of trajectories, respectively.
\end{sloppypar}

\subsection{Model-tree and objective ablation: controls and diagnostics}
\label{app:tree-objective}
Table~\ref{tab:tree-objective}'s five variants (D: Ours) share frozen base models, sources, normalization, RF evaluation ($N=20$), batch size 64, and 800 downstream neural updates. Each uses its own training carriers; tree construction, ridge fitting, and calibration add computation.

A/B supervise residuals with mean normalized $q/p$ squared errors on teacher-forced local and mixed autoregressive states, and routers with mixed-prediction squared errors on unrolled histories. The full objective adds robust/recovery and hull residual terms and preference, regret-reweighting, and no-harm router terms. Ridge regularization, features, cumulative fitting, and training-side calibration follow the full configuration. B/D share tree-construction rules, with five/seven reachable leaves fixed during final training; A/C retain shared final $q/p$ heads. GL uses global residual/router readouts with the same features and calibration. Combined neural-parameter/ridge-coefficient counts are 1,176,220 (A/C), 1,178,428 (B/D), and 1,176,496 (GL), with 1,056 (B/D) and 132 (GL) ridge coefficients.

\paragraph{Difficulty stratification.}
Quartiles rank source--edge--object cells by two-noise-mean phase-space error of Ours' uncorrected H candidate on its mixed history. From easiest to hardest, Ours reduces total prediction error relative to PhysiFormer by 93.13\%, 96.21\%, 87.16\%, and 20.66\%. Momentum correction reduces direct candidate SSE by 11.61\% in the hardest quartile and increases it in easier quartiles with small uncorrected errors.

\paragraph{Training-side regional readouts.}
Ours' seven reachable leaves occupy 3.87\%, 1.86\%, 2.20\%, 2.42\%, 8.75\%, 8.57\%, and 72.34\% of readout-fitting rows. Regional momentum readouts reduce direct candidate SSE by 12.95\%, versus GL's 6.10\%, complementing the evaluation in Table~\ref{tab:tree-objective}. Tree inputs exclude event labels and source identifiers.

\addtocontents{toc}{\protect\setcounter{tocdepth}{3}}

\end{document}